\def\R{\mathbb{R}}
\def\M{\mathcal{M}}
\def\N{\mathbb{N}}
\def\F{\mathcal{F}}
\def\Ci{\mathcal{C}}
\def\Int{\text{Int}}
\def\wto{\rightharpoonup}
\def\<{\langle}
\def\>{\rangle}
\def\|{\vert}
\def\!{\Vert}
\def\d{\, \mathrm{d}}
\def\X{\mathcal{X}}
\def\Y{\mathcal{Y}}
\def\Ri{\mathcal{R}}
\def\Ci{\mathcal{C}}
\def\bphi{\boldsymbol\varphi}
\DeclareMathOperator*{\argmin}{arg\,min}
\DeclareMathOperator{\cosre}{\mathsf{C}\Ri\textsf{UOT}(\alpha,\beta)}
\DeclareMathOperator{\cosree}{\mathsf{C}\Ri\textsf{UOT}_{\varepsilon_{n}}(\alpha,\beta)}
\newcommand{\D}[3]{\ensuremath{\mathrm{D}_{#1}(#2 \, \vert \, #3)}}
\newcommand{\norm}[1]{\left\lVert#1\right\rVert}
\let\expandafter\oldproof\csname\string\proof\endcsname
\let\oldendproof\endproof
\renewenvironment{proof}[1][\proofname]{%
  \oldproof[\normalfont\sffamily #1]%
}{\oldendproof}
\newtheoremstyle{mystyle}
  {0.5 cm}
  {\topsep}
  {\itshape}
  {}
  {\bfseries}
  {:}
  {3mm}
   {\thmname{#1}\thmnumber{ #2}\thmnote{ \normalfont(#3)}}
\newtheoremstyle{mystyleBox}
  {0.5 cm}
  {0.5 cm}
  {\normalfont}
  {}
  {\bfseries}
  {.}
  {3mm}
  {\thmnote{\bfseries#3}}
  \newtheoremstyle{mystyleNormalFontB}
  {0.5 cm}
  {0.5 cm}
  {\normalfont}
  {}
  {\bfseries}
  {:}
  {3mm}
  {\thmname{#1}\thmnumber{ #2}\thmnote{ (#3)}}
\theoremstyle{mystyleNormalFontB}
\newtheorem{theorem}{Theorem}[section]
\theoremstyle{mystyleNormalFontB}
\newtheorem{coroll}[theorem]{Corollary}
\theoremstyle{mystyleNormalFontB}
\newtheorem{lemma}[theorem]{Lemma}
\theoremstyle{mystyleNormalFontB}
\newtheorem{prop}[theorem]{Proposition}
\theoremstyle{mystyleNormalFontB}
\newtheorem{defn}[theorem]{Definition}
\theoremstyle{mystyleNormalFontB}
\newtheorem{oss}[theorem]{Remark}
\theoremstyle{mystyleNormalFontB}
\theoremstyle{mystyleNormalFontB}
\newtheorem{problem}[theorem]{Problem}
\theoremstyle{mystyleBox}
\theoremstyle{mystyleNormalFontB}
\title{Structured Matching via Cost-Regularized Unbalanced Optimal Transport}
\author{
  Emanuele Pardini\\
  University of Pisa \\
  Pisa\\
  \texttt{e.pardini21@studenti.unipi.it} 
    \And
  Katerina Papagiannouli\\
  University of Pisa \\
  Pisa\\
  \texttt{aikaterini.papagiannouli@unipi.it} \\
}
\begin{document}

\maketitle

\begin{abstract}
    \sloppy
Unbalanced optimal transport (UOT) provides a flexible way to match or compare nonnegative finite Radon measures. However, UOT requires a predefined ground transport cost, which may misrepresent the data’s underlying geometry. Choosing such a cost is particularly challenging when datasets live in heterogeneous spaces, often motivating practitioners to adopt Gromov–Wasserstein formulations. To address this challenge, we introduce cost-regularized unbalanced optimal transport (CR-UOT), a framework that allows the ground cost to vary while allowing mass creation and removal. We show that CR-UOT incorporates unbalanced Gromov–Wasserstein–type problems through families of inner-product costs parameterized by linear transformations, enabling the matching of measures (or point clouds) across Euclidean spaces. We develop algorithms for such CR-UOT problems using entropic regularization and demonstrate that this approach improves the alignment of heterogeneous single-cell omics profiles, especially when many cells lack direct matches.\end{abstract}

\section{Introduction}

Optimal Transport (OT) has become a central tool in machine learning and related fields, providing a principled way to compare probability measures while respecting geometric structure. OT has been successfully applied to generative modeling \citep{montavon2016wasserstein, genevay2018learning, tong2023improving, brechet2023critical}, adversarial training \citep{sinha2018certifying, wong2019wasserstein}, domain adaptation \citep{courty2017joint, fatras2021unbalanced}, neuroscience \citep{janati2020multi}, and single-cell biology \citep{schiebinger2019optimal, bunne2023learning, demetci2022scot, demetci2022scotv2}. In recent years, OT has become a powerful framework for addressing the graph matching problem. It is often viewed as a continuous relaxation of the Quadratic Assignment Problem, formulated through the Gromov-Wasserstein (GW) distance \citep{montavon2016wasserstein, xu2019gromov}, which extends the classical Wasserstein distance to compare distributions defined on different metric spaces. Several variants have been developed to handle labeled graphs, such as the Fused Gromov-Wasserstein (FGW) distance \citep{titouan2019optimal}. 

Despite its successes, classical OT rests on two restrictive assumptions: (i) perfect mass preservation, and (ii) a fixed ground cost function. These assumptions often break down in applications. First, the \emph{mass preservation} assumption requires that the marginals of the transport plan exactly match the input measures. This is unrealistic when data are noisy, incomplete, or inherently heterogeneous. To address this limitation, the framework of unbalanced optimal transport (UOT) has been developed, which relaxes the strict mass conservation constraint and allows for comparisons between measures of different total mass \citep{Kondratyev2016, Liero2018, Chizat2018, demetci2022scotv2}. A notable benefit of UOT is its robustness to outliers, since unmatched mass can be discarded rather than transported. This property has made UOT valuable in diverse applications, including deep learning theory \citep{Chizat2018, Rotskoff2019,mazelet2025unsupervisedlearningoptimaltransport}, single-cell biology \citep{Schiebinger2019, demetci2022scot, demetci2022scotv2}, and domain adaptation \citep{Fatras2021}. Unbalanced GW \citep{sejourne2019sinkhorn} and Fused Unbalanced GW (FUGW) \citep{thual2022aligning} were proposed to generalize the GW and FGW distances to unbalanced settings with application in positive unlabeled learning and brain alignment. 

Second, OT requires specifying a \emph{ground cost} that quantifies discrepancies between source and target points. In many applications---especially when measures lie in different ambient spaces or dimensions---this cost is unknown or may misrepresent the true geometry of the problem. Gromov--Wasserstein (GW) distances \citep{memoli2011gromov} circumvent this issue by aligning distributions through the comparison of relational structures within each space. While elegant, GW poses two difficulties: it results in a nonconvex quadratic optimization problem with high computational cost, and it forfeits some of the interpretability and guarantees available in linear OT settings \citep{vayer2020contribution, dumont2022existence}.  

Despite its flexibility, computing UOT remains computationally demanding: it requires solving a linear program whose complexity scales cubically with the number of samples \citep{Pele2009, peyre2019computational}. Moreover, empirical estimation of UOT distances is challenging due to the curse of dimensionality \citep{Dudley1969}. To mitigate these issues, several tractable variants have been proposed with reduced complexity and improved statistical properties, such as entropic OT \citep{Cuturi2013, Pham2020}, minibatch OT \citep{Fatras2020, Fatras2021}, sliced UOT \citep{bonet2024slicing}, and cost regularized OT \citep{sebbouh2024structured}.

In this work, we focus on extending the ideas of  cost regularized OT to develop UOT methods.\emph{This work addresses both challenges--unbalancedness and unknown ground cost--simultaneously and at the same time time computational efficiency.} We introduce \emph{cost-regularized unbalanced OT (CR-UOT)} inspired by \citep{sebbouh2024structured}, a framework that allows the ground cost to vary while relaxing marginal constraints. Our approach unifies and extends UOT and certain GW problems by introducing convex regularizers over costs, yielding families of parameterized linear inner-product costs across spaces. This formulation admits efficient algorithms via entropic regularization, while retaining connections to Monge maps through theoretical grounded approximation and convergent results. The proofs of the results are given in the Appendix.

\paragraph{Contributions.} Our main contributions are:  
\begin{itemize}[leftmargin=*]
    \item \textbf{Formulation:} We introduce CR-UOT, a framework combining convex cost regularization with unbalanced OT, unifying and extending existing UOT and GW formulations.
    \item \textbf{Theory:} We prove existence of minimizers and establish convergence results of values and minimizers of the entropic regularized problem. Focusing on inner-product costs parameterized by linear transformations across spaces, we introduce a simple block coordinate descent algorithm to solve the associated CR-UOT problem. We show that, under mild conditions, optimal couplings are induced by deterministic Monge maps. We introduce entropic unbalnced Monge maps across spaces and show that they converge to the ground truth Monge maps under suitable assumptions.
    \item \textbf{Applications:} We demonstrate that the use of such entropic maps improves alignment of heterogeneous single-cell multiomics datasets, particularly when modalities lack direct correspondence or differ in proportions across cell types similar to \citep{demetci2022scot, demetci2022scotv2}.
\end{itemize}

\section{Background}\label{sec:intro}
\paragraph{Notations.}
In what follows, we consider $\mathcal{X}$ and $\mathcal{Y}$ to be compact metric spaces, and $\alpha \in \mathcal{M}^{+}(\mathcal{X})$  $\beta\in \M(\Y)^{+}$ to be finite positive Radon measures satisfying $m(\alpha)m(\beta) \neq 0$, where \(m(\mu)\) is the total mass of the measure \(\mu\). \(\Ci(\X), \, \Ci_b(\X)\) are continuous functions and bounded continuous functions on \(\X\) respectively. Given $\pi \in \M^{+}(\X \times \Y)$, we define its marginals $\pi_{i} = p_{i_{\#}}\pi$ for $i=1,2$.

\paragraph{Optimal Transport Problem ($\mathsf{OT}$).}
Given a family of all possible \emph{couplings} between $\alpha$ and $\beta$ 
\[
\Pi(\alpha,\beta)
= \left\{ \pi \in \mathcal{M}^{+}(\mathcal{X}\times\mathcal{Y})
:\; \pi_1= \alpha,\; \pi_2 = \beta \right\},
\]
we denote the linear OT cost between $\alpha$ and $\beta$ with cost $c\in\mathcal{C}(\mathcal{X}\times\mathcal{Y})$ as
\[
\mathsf{OT}(\alpha, \beta)
\triangleq
\min_{\pi\in\Pi(\alpha,\beta)}
\int_{\mathcal{X}\times\mathcal{Y}} c(x,y)\,\mathrm{d}\pi(x,y),
\]
which is a \emph{linear} problem in $\pi$, see \citep{santambrogio2015optimal}. When $c=d_{\mathcal{X}}^{\,p}$ and
$\mathcal{X}=\mathcal{Y}$, the $\mathsf{OT}$ defines a distance between
probability measures for all $p\ge 1$, see \citep{villani2008optimal}.

\paragraph{Unbalanced Optimal Transport (\textsf{UOT)}}
We recall the static formulation of UOT proposed by \citep{liero2018optimal}, which uses $\varphi$-divergence as penalty terms.
\begin{defn}[$\varphi$-divergences]
Let $\alpha, \beta\in \M^{+}(\X)$. Let $\varphi:[0,+\infty) \to [0,\infty]$ be an entropy function, i.e. $\varphi$ is convex and lower semicontinuous (lsc)
and \(\varphi(1)=0\). Denote \(\mathrm{dom}(\varphi) \triangleq \{ x \in [0,+\infty) \, \vert \, \varphi(x) < +\infty \} \subset [0,+\infty)\) 
\[
\varphi'_{\infty} \triangleq \lim_{x \to +\infty} \frac{\varphi(x)}{x}.
\]
The \(\varphi\)-divergence between \(\alpha\) and \(\beta\) is
\[
\mathrm{D}_{\varphi}(\alpha \| \beta) \triangleq 
\int_{\mathbb{R}^d} \varphi\left(\frac{d\alpha}{d\beta}(x)\right) d\beta(x) 
+ \varphi'_{\infty} \int_{\mathbb{R}^d} d\alpha^{\perp}(x),
\]
where \(\alpha^{\perp}\) is defined as 
\(\alpha = \tfrac{d\alpha}{d\beta}\beta + \alpha^{\perp}\) in Lebesgue decomposition form. 
We call $\varphi$ superlinear when $\varphi_{\infty}' = +\infty$.
\end{defn}

A special case of $\varphi$-divergence, which we use later in the experiments and the formulation of entropic regularizers, is the Kullback-Leibler (KL) divergence:
$$\mathrm{D}_{\mathrm{KL}}(\alpha \| \beta) = \begin{cases}\int_{\X} \varphi_{\mathrm{KL}}\left(\frac{d\alpha}{d\beta}\right) d \beta & \text{if $\alpha \ll \beta$} \\ +\infty & \text{otherwise,}\end{cases}$$
where $\varphi_{\mathrm{KL}}(x) = x \log(x) - x + 1$.
\begin{problem}[\textsf{UOT}]
For a lsc cost $c:\X\times \Y\to\mathbb{R}$ and entropy functions $\varphi_1,\varphi_2$, we denote
the unbalanced OT problem between $\alpha$ and $\beta$ as
\begin{equation*}
    \mathsf{UOT}(\alpha, \beta) \triangleq \inf_{\pi \in \M^{+}(\X\times \Y)}\int c(x,y)\, d\pi(x,y)
 \mathrm{D}_{\varphi_1}(\pi_1 \mid \alpha)
 + \mathrm{D}_{\varphi_2}(\pi_2 \mid \beta).
\end{equation*}
\end{problem}

Observe that when $\varphi_1=\varphi_2=\iota_{\{1\}}$, with $\iota_{\{1\}}(1)=0$ and $\iota_{\{1\}}(s)=+\infty$ for every $s\neq 1$, we find again the balanced $\mathsf{OT}(\alpha, \beta)$ problem.

In the results that follow we will usually assume one of the following compatibility conditions:
\begin{equation}\label{comp_cond}
    \left( m(\alpha) \mathrm{dom}(\varphi_1) \right) \cap \left( m(\beta) \mathrm{dom}(\varphi_2) \right)\neq \emptyset
\end{equation}
and the stronger one 
\begin{equation}\label{comp_cond_strong}
\left[ \mathrm{Int} (m(\alpha) \mathrm{dom}(\varphi_1)) \cap \left( m(\beta) \mathrm{dom}(\varphi_2) \right) \right]\cup \left[ \left( m(\alpha) \mathrm{dom}(\varphi_1) \right) \cap \Int(m(\beta) \mathrm{dom}(\varphi_2)) \right] \neq \emptyset.
    \end{equation}

\section{Cost-Regularized UOT}\label{sec:cost-reg-uot}

We now allow the cost itself to vary under a convex regularizer $\Ri$.

\begin{problem}[\textbf{\(\mathsf{C}\Ri\mathsf{UOT}\)}]\label{ROT}
Suppose we are given two entropy functions \(\varphi_1, \varphi_2 : [0,+\infty) \to [0,+\infty]\) and a convex function \(\Ri : \Ci(\X\times \Y) \to \R\cup\{+\infty\}\). We define the \emph{\(\Ri\)-regularized \(\bphi\)-unbalanced optimal transport problem} as
\begin{equation*}
    \mathsf{C}\Ri\textsf{UOT}(\alpha,\beta) \triangleq \inf_{\pi, c} \int_{\X\times \Y} c(x,y)\d \pi(x,y )+ \D{\varphi_1}{\pi_1}{\alpha} 
        + \D{\varphi_2}{\pi_2}{\beta} + \Ri(c).
\end{equation*}
\end{problem}

Remarkably, we can make connections between \(\mathsf{C}\Ri\mathsf{UOT}\) and another important family of problems involving concave functions of measures.

\begin{problem}[\textbf{\(\mathsf{U}\mathsf{O}\mathcal{Q}\mathsf{T}\)}]
        Suppose we are given two entropy functions \(\varphi_1, \varphi_2 : [0,+\infty) \to [0,+\infty]\) and a concave function \(\mathcal{Q} : \M^+(\X\times \Y)\to \R\). We define the \emph{\(\bphi\)-unbalanced optimal \(\mathcal{Q}\)-transport problem} as
    \begin{equation*}
     \mathsf{U}\mathsf{O}\mathcal{Q}\mathsf{T}(\alpha,\beta) \triangleq \inf_{\pi\in \M^{+}(\X\times \Y)} \mathcal{Q}(\pi)+ \D{\varphi_1}{\pi_1}{\alpha} 
     +\D{\varphi_2}{\pi_2}{\beta}.
    \end{equation*}
        \end{problem}
        
In the appendix we show how solving a $\mathsf{C}\Ri\mathsf{UOT}$ problem is the same as solving an $\mathsf{U}\mathsf{O}\mathcal{Q}\mathsf{T}$ problem from a certain concave functional $\mathcal{Q}$ built from $\Ri$. We say that two minimization problems are equivalent, or that one is an instance of the other, when they have the same minimizers in $\M^{+}(\X\times \Y)$.

\textbf{Entropic regularization.} In practice, a preferred way to solve UOT problems is using entropic regularization \citep{Chizat2018}. 
We consider adding such regularization also to $\mathsf{C}\Ri\textsf{UOT}$ problems.

\begin{problem}
Suppose we are given \(\varepsilon>0\), two entropy functions \(\varphi_1, \varphi_2 : [0,+\infty) \to [0,+\infty]\) and a convex function \(\Ri : \Ci(\X\times \Y) \to \R\cup\{+\infty\}\). We define the \emph{\(\varepsilon\)-entropic \(\Ri\)-regularized \(\bphi\)-unbalanced optimal transport problem} as
\begin{equation*}
    \begin{aligned}
         \mathsf{C}\Ri\textsf{UOT}_{ \varepsilon}(\alpha,\beta) \triangleq & \inf_{\pi, c} \int_{\X\times \Y} c(x,y)\d \pi(x,y)
         + \D{\varphi_1}{\pi_1}{\alpha} + \D{\varphi_2}{\pi_2}{\beta}
         + \Ri(c) + \varepsilon \D{\mathrm{KL}}{\pi}{\alpha \otimes \beta}.
\end{aligned}
\end{equation*}
\end{problem}

\subsection{Existence of Minimizers}
Our first result consists in establishing the existence of optimal solutions for the CRUOT problems. We will need the following definition.

\begin{defn}[Cost-Parametrized Regularizers]\label{cost-param-reg}
A convex function $\Ri : \Ci(\X\times \Y)\to [0,+\infty)$ is called \emph{cost-parametrized regularizer} if there exist $\F$ a compact subset of $\R^{d}$ and a family of costs $\{c_\theta\}_{\theta \in \F} \subset \mathcal{C}(\X\times \Y)$ s.t.
 \[\Ri(c) = \begin{cases}
        \tilde{\Ri}(\theta) & \text{if \(c=c_{\theta}\) for some \(\theta\in \F\)}\\
        +\infty & \text{otherwise,}
        \end{cases}\]
with $\tilde{\Ri} : \F \to [0,+\infty]$ a lower semicontinuous, coercive, convex function.
\end{defn}
For this family of cost-parametrized regularizers we show that we can find an optimal solution for $\cosre$ problems.
\begin{theorem}[Existence]\label{thm:exist}
Let $(\varphi_1, \varphi_2)$ be a pair of superlinear entropy functions satisfying \eqref{comp_cond} and $\varepsilon \geq 0$. Assume a cost-parametrized regularizer $\Ri$ as defined in Definition \ref{cost-param-reg} with $\{c_{\theta}\}_{\theta\in \F}$ a uniformly bounded from below family of continuous costs s.t. $c_{\theta_k}\to c_{\theta}$ uniformly whenever $\theta_k\to \theta$. Then the problem $\mathsf{C\Ri UOT}_{\varepsilon}(\alpha,\beta)$ admit at least one minimizer in $\F \times \M^{+}(\X\times \Y)$.
\end{theorem}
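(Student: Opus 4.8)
The plan is to run the direct method of the calculus of variations on the parametrized reformulation of the problem. Since $\Ri(c)=+\infty$ unless $c=c_\theta$ for some $\theta\in\F$, minimizing $\mathsf{C}\Ri\mathsf{UOT}_{\varepsilon}(\alpha,\beta)$ amounts to minimizing over $(\theta,\pi)\in\F\times\M^{+}(\X\times\Y)$ the functional
\[
J(\theta,\pi)\;=\;\int_{\X\times\Y}c_\theta\,\d\pi\;+\;\D{\varphi_1}{\pi_1}{\alpha}\;+\;\D{\varphi_2}{\pi_2}{\beta}\;+\;\tilde{\Ri}(\theta)\;+\;\varepsilon\,\D{\mathrm{KL}}{\pi}{\alpha\otimes\beta}.
\]
First I would check $\inf J<+\infty$: using \eqref{comp_cond}, pick $r\in\big(m(\alpha)\,\mathrm{dom}(\varphi_1)\big)\cap\big(m(\beta)\,\mathrm{dom}(\varphi_2)\big)$ and set $\pi_0=\tfrac{r}{m(\alpha)m(\beta)}\,\alpha\otimes\beta$; its marginals have densities $r/m(\alpha)$ and $r/m(\beta)$ with respect to $\alpha$ and $\beta$, so $\D{\varphi_1}{(\pi_0)_1}{\alpha}$, $\D{\varphi_2}{(\pi_0)_2}{\beta}$ and $\D{\mathrm{KL}}{\pi_0}{\alpha\otimes\beta}$ are finite, $\int c_{\theta_0}\d\pi_0<+\infty$ since $c_{\theta_0}\in\Ci(\X\times\Y)$ is bounded, and $\tilde{\Ri}(\theta_0)<+\infty$ for any $\theta_0\in\mathrm{dom}(\tilde{\Ri})$ (if $\mathrm{dom}(\tilde{\Ri})=\emptyset$ the statement is vacuous).

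The second and central step is coercivity. Let $M$ be a uniform lower bound for $\{c_\theta\}_{\theta\in\F}$. Since $m(\pi)=m(\pi_1)$, since any singular part of $\pi_1$ relative to $\alpha$ forces $\D{\varphi_1}{\pi_1}{\alpha}=+\infty$ (superlinearity), and since Jensen's inequality applied to the density gives $\D{\varphi_1}{\pi_1}{\alpha}\ge m(\alpha)\,\varphi_1\!\big(m(\pi)/m(\alpha)\big)$, while the remaining three summands are nonnegative, we obtain
\[
J(\theta,\pi)\;\ge\; g\big(m(\pi)\big),\qquad g(t):=-M\,t+m(\alpha)\,\varphi_1\!\big(t/m(\alpha)\big),
\]
and $g$ is convex, lower semicontinuous and superlinear on $[0,+\infty)$, hence bounded below, giving $\inf J>-\infty$. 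Moreover, for any minimizing sequence $(\theta_k,\pi_k)_k$ we must have $\sup_k m(\pi_k)<+\infty$, for otherwise $J(\theta_k,\pi_k)\ge g(m(\pi_k))\to+\infty$. As $\F\subset\R^d$ is compact and $\X\times\Y$ is a compact metric space, after extracting a subsequence we get $\theta_k\to\theta_\star\in\F$ and, by weak-$*$ sequential compactness of mass-bounded subsets of $\M^{+}(\X\times\Y)$, $\pi_k\longswto\pi_\star$ for some $\pi_\star\in\M^{+}(\X\times\Y)$.

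It remains to pass to the limit term by term. The marginal maps $\pi\mapsto\pi_i$ are weak-$*$ continuous (test against $g\circ p_i$ with $g$ continuous), so $(\pi_k)_i\longswto(\pi_\star)_i$; the functionals $\mu\mapsto\D{\varphi_1}{\mu}{\alpha}$, $\mu\mapsto\D{\varphi_2}{\mu}{\beta}$ and $\mu\mapsto\varepsilon\,\D{\mathrm{KL}}{\mu}{\alpha\otimes\beta}$ are weak-$*$ lower semicontinuous (each a supremum of weak-$*$ continuous affine functionals, via Legendre duality of the underlying entropy), and $\tilde{\Ri}$ is lsc by hypothesis, so all four of these terms are lsc along the sequence. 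For the bilinear cost term the hypothesis $c_{\theta_k}\to c_{\theta_\star}$ uniformly, together with the mass bound, gives, as $k\to\infty$,
\[
\Big|\int c_{\theta_k}\,\d\pi_k-\int c_{\theta_\star}\,\d\pi_\star\Big|\;\le\;\norm{c_{\theta_k}-c_{\theta_\star}}_{\infty}\,m(\pi_k)\;+\;\Big|\int c_{\theta_\star}\,\d\pi_k-\int c_{\theta_\star}\,\d\pi_\star\Big|\;\longrightarrow\;0,
\]
the last summand vanishing since $c_{\theta_\star}\in\Ci(\X\times\Y)$ and $\pi_k\longswto\pi_\star$. Hence $\int c_{\theta_k}\d\pi_k\to\int c_{\theta_\star}\d\pi_\star$, and adding this limit to the four $\liminf$ inequalities yields $\inf J=\lim_k J(\theta_k,\pi_k)\ge J(\theta_\star,\pi_\star)\ge\inf J$, so $(\theta_\star,\pi_\star)$ is a minimizer.

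I expect the coercivity/tightness step to be the main obstacle: one has to genuinely exploit superlinearity of the $\varphi_i$ to dominate the cost term, which is only bounded below and possibly negative, so that the transported mass along a minimizing sequence stays bounded — for non-superlinear entropies $J$ can be unbounded below as $m(\pi)\to+\infty$. A secondary subtlety is that $\int c_\theta\,\d\pi$ is bilinear in $(\theta,\pi)$, so its joint weak-$*$ lower semicontinuity is not automatic; it is precisely the assumed uniform convergence $c_{\theta_k}\to c_{\theta_\star}$ on the compact product space, combined with the uniform mass bound, that promotes this term to a continuous one.
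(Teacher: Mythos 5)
Your proof is correct and follows essentially the same route as the paper's: the direct method, with mass-coercivity coming from superlinearity of the $\varphi_i$, weak-$*$ compactness of mass-bounded subsets of $\M^{+}(\X\times\Y)$, lower semicontinuity of the divergences and of $\tilde{\Ri}$, and full continuity of the cost term obtained from the uniform convergence $c_{\theta_k}\to c_{\theta_\star}$ combined with the uniform mass bound. (Only a cosmetic slip: with $M$ a uniform \emph{lower} bound for the costs, the minorant should read $g(t)=M\,t+m(\alpha)\,\varphi_1(t/m(\alpha))$ rather than $-M\,t+\cdots$; superlinearity dominates the linear term either way, so nothing breaks.)
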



\subsection{Convergence of entropic minimizers}
 
 Remarkably, the next result guarantees that, when the cost-regularization involves a sufficiently regular cost-parametrized regularizer (Definition \ref{cost-param-reg}), the entropy-regularized $\mathsf{C}\Ri\textsf{UOT}_{ \varepsilon}$ problem actually converges to the original $\mathsf{C}\Ri\textsf{UOT}$ when $\varepsilon\to 0$. 

\begin{theorem}\label{thm:conv-entr-minimizers}
Let $\varepsilon_n\to 0$ and suppose that the assumptions of Theorem \ref{thm:exist} to hold with $\varphi_1,\varphi_2$ superlinear strictly convex satisfying \eqref{comp_cond_strong} or $\varphi_1 = \varphi_2 = \iota_{\{1\}}$ satisfying \eqref{comp_cond}. 
for every $\pi\in \M^+(\X\times \Y)$. Then the following hold.
\begin{enumerate}
    \item \(\mathsf{C\Ri UOT}_{\varepsilon_n}(\alpha,\beta) \stackrel{n\to +\infty}{\longrightarrow}\mathsf{C\Ri UOT}(\alpha,\beta)\).
   
   \item Consider a sequence \((\theta_*^{\varepsilon_n},\pi_*^{\varepsilon_n})_{n\in\N}\subset \F\times \M^+(\X\times \Y)\) s.t. \((\theta_*^{\varepsilon_n},\pi_*^{\varepsilon_n})\) minimizes \(\mathsf{C\Ri UOT}_{\varepsilon_n}(\alpha,\beta)\) for every \(n\in\N\). There exists a subsequence \((\theta_*^{\varepsilon_{n_k}},\pi_*^{\varepsilon_{n_k}})_{k\in\N}\) s.t.
    \[\theta_*^{\varepsilon_{n_k}}\to \theta_*, \qquad \pi_*^{\varepsilon_{n_k}} \wto \pi_*,\] where \((\theta_*,\pi_*)\) is optimal for \(\mathsf{C\Ri UOT}(\alpha,\beta)\).

\end{enumerate}
\end{theorem}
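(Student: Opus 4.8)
The plan is to recognize the statement as an instance of $\Gamma$-convergence together with equicoercivity, for the functionals
\begin{equation*}
\J_{\varepsilon}(\theta,\pi)\;\triangleq\;\int_{\X\times\Y}c_{\theta}\,\d\pi+\D{\varphi_1}{\pi_1}{\alpha}+\D{\varphi_2}{\pi_2}{\beta}+\tilde{\Ri}(\theta)+\varepsilon\,\D{\mathrm{KL}}{\pi}{\alpha\otimes\beta},
\end{equation*}
viewed on $\F\times\M^{+}(\X\times\Y)$ with the product of the Euclidean topology on $\F$ and the weak-$*$ topology on measures, and $\J_{0}$ the $\varepsilon=0$ functional (whose minimizers exist by Theorem~\ref{thm:exist}). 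If one shows that (a) the $\J_{\varepsilon_n}$ are equicoercive along this topology, (b) every convergent sequence satisfies a $\Gamma$-$\liminf$ inequality, and (c) recovery sequences exist, then the fundamental theorem of $\Gamma$-convergence yields simultaneously the value convergence $\cosree\to\cosre$ of item~1 and the precompactness of minimizers with $\J_{0}$-optimal limit points of item~2.

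\emph{Equicoercivity.} Fix $C$ and consider $(\theta,\pi)$ with $\J_{\varepsilon_n}(\theta,\pi)\le C$. Uniform boundedness from below of $\{c_{\theta}\}$ gives $\int c_{\theta}\,\d\pi\ge -M\,m(\pi)$, whereas Jensen's inequality gives $\D{\varphi_1}{\pi_1}{\alpha}\ge m(\alpha)\,\varphi_1\big(m(\pi)/m(\alpha)\big)$, which grows superlinearly in $m(\pi)=m(\pi_1)$; hence $m(\pi)$ is bounded by a constant depending only on $C$ (in the case $\varphi_1=\varphi_2=\iota_{\{1\}}$ the mass is instead pinned to $m(\alpha)=m(\beta)$ via \eqref{comp_cond}). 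Since $\F$ and $\X\times\Y$ are compact, Banach--Alaoglu then confines all such $(\theta,\pi)$ to a fixed weak-$*$ compact set; this is the same mass-control argument that underlies Theorem~\ref{thm:exist}.

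\emph{$\Gamma$-$\liminf$.} Let $\theta_n\to\theta_*$ and $\pi_n\wto\pi_*$ with uniformly bounded masses (automatic by the previous paragraph for near-minimizers). The cost term in fact converges, not merely is lsc: $\big|\int c_{\theta_n}\,\d\pi_n-\int c_{\theta_*}\,\d\pi_*\big|\le \norm{c_{\theta_n}-c_{\theta_*}}_{\infty}\,m(\pi_n)+\big|\int c_{\theta_*}\,\d(\pi_n-\pi_*)\big|\to 0$, using the assumed uniform convergence $c_{\theta_n}\to c_{\theta_*}$ and weak-$*$ convergence against the fixed continuous function $c_{\theta_*}$. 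The marginal maps $\pi\mapsto\pi_i$ are weak-$*$ continuous and each $\D{\varphi_i}{\cdot}{\cdot}$ is jointly weak-$*$ lsc, $\tilde{\Ri}$ is lsc by Definition~\ref{cost-param-reg}, and $\varepsilon_n\,\D{\mathrm{KL}}{\pi_n}{\alpha\otimes\beta}\ge 0$; summing yields $\liminf_n\J_{\varepsilon_n}(\theta_n,\pi_n)\ge\J_{0}(\theta_*,\pi_*)$. Combined with equicoercivity this already delivers $\liminf_n\cosree\ge\cosre$ and the limit-point half of item~2.

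\emph{Recovery sequences --- the main obstacle.} The delicate part is the matching $\limsup$ inequality: a $\J_{0}$-(near-)optimal plan $\pi$ need not satisfy $\pi\ll\alpha\otimes\beta$, so $\D{\mathrm{KL}}{\pi}{\alpha\otimes\beta}=+\infty$ and the constant sequence is useless. For a competitor $(\theta,\pi)$ with $\J_{0}(\theta,\pi)<\infty$ I would use block approximations: take finite measurable partitions $\{A_i\}$ of $\X$ and $\{B_j\}$ of $\Y$ of vanishing mesh, discard the $\alpha$- or $\beta$-null cells, and set
\begin{equation*}
\pi^{(k)}\;=\;\sum_{i,j}\frac{\pi(A_i\times B_j)}{\alpha(A_i)\,\beta(B_j)}\;\alpha|_{A_i}\otimes\beta|_{B_j}.
\end{equation*}
Then $\pi^{(k)}\ll\alpha\otimes\beta$ with piecewise-constant, hence bounded, density, so $\D{\mathrm{KL}}{\pi^{(k)}}{\alpha\otimes\beta}<\infty$; $\pi^{(k)}\wto\pi$ as the mesh shrinks, whence $\int c_{\theta}\,\d\pi^{(k)}\to\int c_{\theta}\,\d\pi$; and since the density of $\pi^{(k)}_i$ with respect to the relevant marginal is a conditional expectation of that of $\pi_i$, the conditional Jensen inequality gives $\D{\varphi_i}{\pi^{(k)}_i}{\cdot}\le\D{\varphi_i}{\pi_i}{\cdot}$, which with lower semicontinuity forces $\D{\varphi_i}{\pi^{(k)}_i}{\cdot}\to\D{\varphi_i}{\pi_i}{\cdot}$. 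It is exactly here that the hypotheses on $\varphi_1,\varphi_2$ enter: when $\varphi_1=\varphi_2=\iota_{\{1\}}$ the construction preserves the marginals exactly ($\pi^{(k)}_1=\alpha$, $\pi^{(k)}_2=\beta$), so there is nothing to check; when $\varphi_1,\varphi_2$ are superlinear and strictly convex, \eqref{comp_cond_strong} guarantees $\cosre<+\infty$ and lets one pick a near-optimal competitor whose marginal densities are admissible and stay admissible under local averaging, keeping the divergence terms finite. A diagonal argument then finishes: given $\eta>0$, first fix $k$ with $\J_{0}(\theta,\pi^{(k)})\le\J_{0}(\theta,\pi)+\eta$, then choose $n$ large so that $\varepsilon_n\,\D{\mathrm{KL}}{\pi^{(k)}}{\alpha\otimes\beta}\le\eta$; letting $\eta\downarrow 0$ gives $\limsup_n\cosree\le\cosre$, which completes the $\Gamma$-convergence and hence both assertions.
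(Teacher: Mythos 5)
Your proof is correct, and while it follows the same overall architecture as the paper's (lower bound via lower semicontinuity and uniform cost convergence, upper bound via a block-approximation recovery sequence with finite KL, then compactness plus value convergence for the minimizers), the mechanism you use for the recovery sequence is genuinely different at the key step. The paper first invokes Fenchel--Kantorovich duality (its Lemma on optimal marginals) to write the optimal marginals as $\alpha_*=\sigma_1\alpha$, $\beta_*=\sigma_2\beta$ with bounded densities, then block-approximates \emph{within} $\Pi(\alpha_*,\beta_*)$ so the divergence terms are untouched, and finally bounds $\mathrm{d}\pi^{\eta}/\mathrm{d}(\alpha\otimes\beta)$ by a change of measure. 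You instead block-average directly against $\alpha\otimes\beta$, accept that the marginals of $\pi^{(k)}$ are only local averages of those of $\pi$, and control the divergence terms by conditional Jensen, using superlinearity only to guarantee $\pi_1\ll\alpha$, $\pi_2\ll\beta$ for any finite-value competitor. This buys you two things: you never need the duality lemma or the boundedness of the dual potentials, and strict convexity of $\varphi_1,\varphi_2$ plays no role in your argument (the paper needs it to apply its duality lemma), so your route is slightly more elementary and marginally more general at this step. The trade-off is that the paper's construction keeps the divergence terms exactly fixed along the approximation, whereas yours only keeps them non-increasing --- which is all the $\limsup$ inequality requires, so nothing is lost. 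The $\Gamma$-convergence packaging of items 1 and 2 is cosmetic; the paper's direct sandwich argument and subsequence extraction are the same computations.
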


\subsection{IP-Cost-Regularized \textsf{UOT}}

Fix \(\X\subset \R^p\) and \(\Y\subset \R^q\) two compact domains of the respective euclidean space. 

\paragraph{The Gromov-Wasserstein problem.}
Let us begin by recalling the definition of \textsf{GW} problem \citep{sturm2023space, memoli2011gromov, chowdhury2019gromov}.

\begin{problem}[\textsf{GW}]
        Fix \(p\in [1,\infty)\). Consider two continuous cost functions \(c_{\X} : \X\times \X \to \R\) and \(c_{\Y} : \Y \times \Y\to \R\).
        The $\mathsf{GW}_p$ problem is defined as
        \begin{equation*}
           \inf_{\pi\in \Pi(\alpha,\beta)} \left[\int_{(\X\times \Y)^2} \| c_{\X}(x,x') -
        c_{\Y}(y,y')\|^p \d (\pi\otimes \pi)\right]^{\frac{1}{p}}.
         \end{equation*}
\end{problem}
We are particularly interested in the following particular case.
\begin{problem}[\textsf{GW-IP}]
        Let \(c_{\X}(x,x') = -\< x,x'\>\) and \(c_{\Y}(y,y') = -\< y,y'\>\), we denote by $\mathsf{GW\text{-}IP}(\alpha,\beta)$ the problem
\[     
    \inf_{\pi\in \Pi(\alpha,\beta)} \int_{(\X\times \Y)^2} \| \<x,x'\> - \<y,y'\>\|^2 \d (\pi\otimes \pi).
\]
\end{problem}
The main reason for our interest is the following result \citep{vayer2020contribution} connecting \textsf{GW-IP} and cost-regularized optimal transport problems.

\begin{prop}\label{prop_gwip}
    Let $r>0$. Denote 
    $\F_r := \{M\in \R^{q\times p} \, \| \, \Vert M\Vert_F\leq r\}$. Then, \textsf{\textup{GW-IP}} and the problem 
    $$\min_{\pi \in \Pi(\alpha,\beta)} \min_{M\in \F_r} -\int_{\X\times \Y} \< Mx,y\> \d \pi(x,y)$$
    are equivalent, i.e. they have the same minimizers in $\pi$.
\end{prop}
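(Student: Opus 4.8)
The plan is to collapse both optimization problems onto a single functional of the coupling $\pi$, namely the squared Frobenius norm of the $p\times q$ moment matrix $A(\pi):=\int_{\X\times\Y}x\,y^{\top}\,\d\pi(x,y)$, and then conclude by a monotonicity argument. First I would expand the $\mathsf{GW\text{-}IP}$ integrand, $|\langle x,x'\rangle-\langle y,y'\rangle|^{2}=\langle x,x'\rangle^{2}-2\,\langle x,x'\rangle\langle y,y'\rangle+\langle y,y'\rangle^{2}$, and integrate against $\pi\otimes\pi$. Since every $\pi\in\Pi(\alpha,\beta)$ has marginals $\pi_{1}=\alpha$ and $\pi_{2}=\beta$, the first and third terms reduce to $\iint\langle x,x'\rangle^{2}\,\d\alpha(x)\,\d\alpha(x')$ and $\iint\langle y,y'\rangle^{2}\,\d\beta(y)\,\d\beta(y')$, which are finite (by compactness of $\X$ and $\Y$) and independent of $\pi$; thus $\mathsf{GW\text{-}IP}$ has the same minimizers in $\pi$ as $\max_{\pi\in\Pi(\alpha,\beta)}\iint\langle x,x'\rangle\langle y,y'\rangle\,\d\pi\,\d\pi$. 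A coordinatewise computation gives $\iint\langle x,x'\rangle\langle y,y'\rangle\,\d\pi\,\d\pi=\sum_{i,j}\big(\int x_{i}y_{j}\,\d\pi\big)^{2}=\norm{A(\pi)}_{F}^{2}$, so $\mathsf{GW\text{-}IP}$ is equivalent, in the sense of sharing minimizers in $\pi$, to $\max_{\pi\in\Pi(\alpha,\beta)}\norm{A(\pi)}_{F}^{2}$.

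For the cost-regularized problem I would write $\langle Mx,y\rangle=\operatorname{tr}(M\,xy^{\top})$, so that by linearity of the integral $\int\langle Mx,y\rangle\,\d\pi(x,y)=\operatorname{tr}\big(M\,A(\pi)\big)=\langle M,A(\pi)^{\top}\rangle_{F}$. The inner minimization over the Frobenius ball is then solved by Cauchy–Schwarz: $\min_{\norm{M}_{F}\le r}\big(-\langle M,A(\pi)^{\top}\rangle_{F}\big)=-r\,\norm{A(\pi)}_{F}$, the optimum being $M=r\,A(\pi)^{\top}/\norm{A(\pi)}_{F}$ when $A(\pi)\neq 0$ (and any $M$ in the ball otherwise). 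Hence the regularized problem equals $-r\max_{\pi\in\Pi(\alpha,\beta)}\norm{A(\pi)}_{F}$ and, since $r>0$, has the same minimizers in $\pi$ as $\max_{\pi\in\Pi(\alpha,\beta)}\norm{A(\pi)}_{F}$.

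Finally, because $t\mapsto t^{2}$ is strictly increasing on $[0,+\infty)$ and $\norm{A(\pi)}_{F}\ge 0$, the argmax sets of $\norm{A(\pi)}_{F}^{2}$ and of $\norm{A(\pi)}_{F}$ over $\Pi(\alpha,\beta)$ coincide, which chains the two equivalences together and proves the proposition. The only points needing care are (i) verifying that the two ``constant'' terms in the $\mathsf{GW\text{-}IP}$ expansion are genuinely $\pi$-independent — this is exactly where the marginal constraints $\pi_{1}=\alpha$, $\pi_{2}=\beta$ enter — and (ii) the index/transpose bookkeeping in the identities $\iint\langle x,x'\rangle\langle y,y'\rangle\,\d\pi\,\d\pi=\norm{A(\pi)}_{F}^{2}$ and $\operatorname{tr}(MA(\pi))=\langle M,A(\pi)^{\top}\rangle_{F}$, given that $M\in\R^{q\times p}$ whereas $A(\pi)\in\R^{p\times q}$; neither is deep, so I expect no substantial obstacle beyond keeping the linear algebra straight.
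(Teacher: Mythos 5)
Your proof is correct and follows the same route the paper relies on: the reduction of the inner minimization over the Frobenius ball via $\int\langle Mx,y\rangle\,\d\pi=\mathrm{tr}(MA(\pi))$ and Cauchy--Schwarz is exactly Lemma~\ref{lemma_gwip} (and Step~1 of the proof of Theorem~\ref{existence_RipOT}), while the expansion of the $\mathsf{GW\text{-}IP}$ objective into $\pi$-independent terms plus $-2\Vert A(\pi)\Vert_F^2$ is the standard computation from the cited reference \citep{vayer2020contribution}. The final monotonicity step linking $\max\Vert A(\pi)\Vert_F$ and $\max\Vert A(\pi)\Vert_F^2$ is handled correctly, so there is nothing to add.
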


Inspired by the previous proposition, we define the following class of cost-regularized unbalanced optimal transport problems using inner product costs parametrized by linear transformations.

\begin{problem}[\(\mathsf{C\mathcal{R}_{r}UOT}\)]\label{problem_RipUOT}
        Suppose we are given \(r>0\) and \(\varepsilon\geq 0\). Consider the family of matrices \(\mathcal{F}_r = \left\{ M\in \R^{q\times p} \, \| \, \Vert M\Vert_F \leq r \right\}\). We define the following problem
        
        \begin{equation*}
            \begin{aligned}
        \mathsf{C}\mathsf{\Ri_{\mathit{r}}UOT}_{\varepsilon}(\alpha,\beta) = \inf_{\substack{\pi\in \M^+(\X\times \Y) \\ M\in \F_r}} &-\int_{\X\times \Y} \< Mx,y\> \d\pi 
        + \D{\varphi_1}{\pi_1}{\alpha} \\
        &+ \D{\varphi_2}{\pi_2}{\beta}
        + \varepsilon \D{\mathrm{KL}}{\pi}{\alpha\otimes \beta}.
        \end{aligned}
        \end{equation*}
        When \(\varepsilon=0\) we will use the alternative notation \(\mathsf{C\Ri_{\mathit{r}}UOT}(\alpha,\beta)\).
\end{problem}

\begin{oss}
        Clearly the previous problem is an instance of the general cost-regularized optimal transport Problem \ref{ROT} with cost-parametrized regularizer \(\Ri_{r} : \Ci(\X\times \Y)\to [0,+\infty]\) defined by
        \[\Ri_{r}(c) = \begin{cases}
                0 & \text{if \(c(x,y) = - \< M x, y\>\) for \(M\in \F_r\)}\\
                +\infty & \text{otherwise}.
        \end{cases}\]
\end{oss}

Interestingly, it still has a connection with \textsf{GW-IP}.

\begin{theorem}\label{existence_RipOT}
        Suppose \(p\geq q\), $\varphi_1,\varphi_2$ superlinear strictly convex satisfying \eqref{comp_cond_strong} or $\varphi_1 = \varphi_2 = \iota_{\{1\}}$ satisfying \eqref{comp_cond}, fix \(r>0\) and \(\varepsilon\geq 0\).
        The problem \(\mathsf{C\Ri_{\mathit{r}}UOT}_{\varepsilon}(\alpha,\beta)\) admits minimizers \((M^*_{\varepsilon},\pi^*_{\varepsilon})\). Moreover, if \((M^*,\pi^*)\) minimizes \(\mathsf{C\Ri_{\mathit{r}}UOT}(\alpha,\beta)\), then 
        \(\pi^*\) minimizes \(\mathsf{GW\text{-}IP}(\pi^*_1,\pi^*_2)\).
\end{theorem}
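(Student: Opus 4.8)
The plan is to handle the two claims separately. For the existence part, I would observe that $\mathsf{C\Ri_{\mathit{r}}UOT}_{\varepsilon}(\alpha,\beta)$ is precisely the instance of $\mathsf{C\Ri UOT}_{\varepsilon}(\alpha,\beta)$ associated with the cost-parametrized regularizer $\Ri_r$ from the preceding Remark, whose parameter set is $\F = \F_r = \{M \in \R^{q\times p} : \Vert M\Vert_F \le r\}$. This set is compact in $\R^{q\times p}$, the map $M \mapsto \tilde\Ri_r(M) \equiv 0$ is trivially lower semicontinuous, coercive (on a compact set), and convex, and the associated costs $c_M(x,y) = -\langle Mx, y\rangle$ are continuous on the compact domain $\X\times\Y$, uniformly bounded below (since $|\langle Mx,y\rangle| \le r\,\mathrm{diam}(\X)\,\mathrm{diam}(\Y)$-type bounds hold uniformly over $\F_r$), and depend continuously on $M$ in the uniform norm: if $M_k \to M$ then $\sup_{x,y}|\langle (M_k - M)x, y\rangle| \to 0$ by compactness of $\X,\Y$. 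Hence all hypotheses of Theorem \ref{thm:exist} are met, and existence of a minimizer $(M^*_\varepsilon, \pi^*_\varepsilon) \in \F_r \times \M^+(\X\times\Y)$ follows immediately. (The hypotheses on $\varphi_1,\varphi_2$ in the statement imply in particular \eqref{comp_cond}, so Theorem \ref{thm:exist} applies.)

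For the second, more substantive claim, fix a minimizer $(M^*, \pi^*)$ of $\mathsf{C\Ri_{\mathit{r}}UOT}(\alpha,\beta)$ (the $\varepsilon = 0$ case). I would first argue that $\pi^*$ is a genuine coupling of its own marginals, i.e. that we may restrict attention to $\Pi(\pi^*_1, \pi^*_2)$, which is automatic. The key step is to show that, holding the marginals $\mu := \pi^*_1$ and $\nu := \pi^*_2$ fixed, the pair $(M^*, \pi^*)$ solves the \emph{balanced} inner-product cost-regularized problem $\min_{\pi \in \Pi(\mu,\nu)} \min_{M \in \F_r} -\int \langle Mx, y\rangle\, \d\pi$; this is because the $\varphi$-divergence terms $\D{\varphi_1}{\pi_1}{\alpha} + \D{\varphi_2}{\pi_2}{\beta}$ are constant over all $\pi$ with those fixed marginals, so minimizing the full objective over $\{\pi : \pi_1 = \mu, \pi_2 = \nu\}$ reduces to minimizing the linear-in-$\pi$ transport term plus $\Ri_r$. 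If $\pi^*$ failed to be optimal for this balanced sub-problem, one could replace it by a better $\tilde\pi \in \Pi(\mu,\nu)$ and strictly decrease the $\mathsf{C\Ri_{\mathit{r}}UOT}$ objective, contradicting optimality of $(M^*,\pi^*)$. Then I would invoke Proposition \ref{prop_gwip}, applied with the measures $\mu$ and $\nu$ in place of $\alpha$ and $\beta$: it states exactly that the balanced problem $\min_{\pi\in\Pi(\mu,\nu)}\min_{M\in\F_r} -\int\langle Mx,y\rangle\,\d\pi$ and $\mathsf{GW\text{-}IP}(\mu,\nu)$ have the same minimizers in $\pi$. Since $\pi^*$ minimizes the former, it minimizes $\mathsf{GW\text{-}IP}(\pi^*_1,\pi^*_2)$, which is the claim.

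The main obstacle is making the reduction to the balanced problem fully rigorous and checking that Proposition \ref{prop_gwip} is genuinely applicable with $(\mu,\nu) = (\pi^*_1,\pi^*_2)$: one must verify that the proposition's proof (equating the cost-regularized and \textsf{GW-IP} minimizers) does not secretly use that $\alpha,\beta$ are normalized probability measures or have matching total mass — a priori $\pi^*_1$ and $\pi^*_2$ are arbitrary finite positive Radon measures with possibly different masses. If Proposition \ref{prop_gwip} as stated already allows general $\alpha,\beta \in \M^+$, this is immediate; otherwise one should note that \textsf{GW-IP} and the inner-product cost-regularized problem are both defined for arbitrary finite positive measures and the equivalence argument — completing the square to rewrite $\|\langle x,x'\rangle - \langle y,y'\rangle\|^2$ and optimizing the resulting linear functional of $M$ over the Frobenius ball, using $p \ge q$ so that the relevant Procrustes-type bound is attained — goes through verbatim, since it only manipulates $\pi\otimes\pi$ and never uses mass normalization. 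A secondary point to be careful about is the direction of the equivalence: Proposition \ref{prop_gwip} gives that minimizers coincide, so from "$\pi^*$ minimizes the cost-regularized balanced problem" we may conclude "$\pi^*$ minimizes \textsf{GW-IP}", which is exactly what is needed; no converse is required.
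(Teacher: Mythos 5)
Your proof is correct and follows essentially the same route as the paper's: existence by verifying the hypotheses of Theorem \ref{thm:exist} for the cost-parametrized regularizer $\Ri_r$ over the compact Frobenius ball $\F_r$, and the second claim by freezing the marginals of $\pi^*$ (so the $\varphi$-divergence terms become constants), reducing to the balanced inner-product cost-regularized problem over $\Pi(\pi^*_1,\pi^*_2)$, and invoking Proposition \ref{prop_gwip}. The only difference is one of presentation: the paper's appendix proof first eliminates $M$ via the Frobenius Cauchy--Schwarz identity $\inf_{\Vert M\Vert_F\le r}\{-\int\langle Mx,y\rangle\,\d\pi\}=-r\Vert\int yx^{\top}\,\d\pi\Vert_F$ and shows that $\pi^*$ minimizes the resulting reduced functional over all of $\M^+(\X\times\Y)$, leaving the final identification with $\mathsf{GW\text{-}IP}(\pi^*_1,\pi^*_2)$ implicit, whereas you restrict to fixed marginals first and make that last step explicit --- and your check that Proposition \ref{prop_gwip} does not secretly require normalized or equal-mass measures is a worthwhile addition, since the expansion of the quadratic $\mathsf{GW\text{-}IP}$ objective into a constant (depending only on the marginals) minus $2\Vert\int yx^{\top}\,\d\pi\Vert_F^2$ indeed uses no mass normalization.
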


\section{Entropic Maps}

We show that under mild regularity, optimal couplings are induced by deterministic maps.

\begin{defn}
        A map \(T: \X\to \Y\) is a \emph{Monge map} for the problem \(\mathsf{C\Ri_{\mathit{r}}UOT}(\alpha,\beta)\) if there exists \((M^*,\pi^*)\in \F_r\times \M^+(\X\times \Y)\) s.t. \((M^*,\pi^*)\) is optimal for
        \(\mathsf{C\Ri_{\mathit{r}}UOT}(\alpha,\beta)\) and \(\pi^* = (\mathrm{id},T)_{\#} \pi^*_1\).
\end{defn}

Denote $B_r := \{y\in \R^q \, \vert \, \Vert y\Vert \leq r \max_{x\in \X}\Vert x\Vert\}$. Observe that 
$M_{\#}\mu \in \M^+(B_r)$ for every $M\in \F_r$ and $\mu\in \M^+(\X)$.

\begin{theorem}\label{thm_monge_map_RipOT}
        Assume $\X\subset \R^p$, $\Y\subset \R^q$ compact subsets, \(p\geq q\), \(\varphi_1,\varphi_2\) superlinear strictly convex satisfying \eqref{comp_cond_strong} or $\varphi_1 = \varphi_2 = \iota_{\{1\}}$ satisfying \eqref{comp_cond} and \(\alpha\) absolutely continuous w.r.t. the Lebesgue measure on \(\X\). Then, there exists a Monge map for \(\mathsf{C\Ri_{\mathit{r}}UOT}_{\bphi}(\alpha,\beta)\).
         In particular, for every optimal couple \((M^*,\pi^*)\) for \(\mathsf{C\Ri_{\mathit{r}}UOT}(\alpha,\beta)\) there exists a map \(T_*\) s.t. \(\pi^* = (\mathrm{id},T_*)_{\#} \pi^*_1\). Moreover, if \(M^*\) is surjective then we can take
        \[T_* = -\nabla f_* \circ M^*\]
        with \(f_*\in \Ci(B_r)\) an optimal Kantorovich potential for the problem \(\mathsf{OT}^{c_{\mathrm{ip}}}(M^*_{\#}\pi^*_1, \pi^*_2)\) differentiable \(M^*_{\#}\pi^*_1\)-a.e.,

    where $c_{\mathrm{ip}} =  -\langle y', y\rangle$ for every $y, y' \in \R^{q}$.
\end{theorem}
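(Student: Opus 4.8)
The plan is to reduce the existence of a Monge map for $\mathsf{C\Ri_{r}UOT}(\alpha,\beta)$ to a classical Monge/Brenier-type statement for a balanced OT problem with the inner-product cost $c_{\mathrm{ip}}$. First I would fix an optimal couple $(M^*,\pi^*)$, whose existence is guaranteed by Theorem~\ref{existence_RipOT}. The key observation is that, once $M^*$ is frozen, $\pi^*$ must minimize (among $\pi\in\M^+(\X\times\Y)$) the unbalanced functional $-\int\langle M^*x,y\rangle\,\d\pi + \D{\varphi_1}{\pi_1}{\alpha} + \D{\varphi_2}{\pi_2}{\beta}$, i.e.\ $\pi^*$ solves an ordinary $\mathsf{UOT}$ problem with cost $c(x,y)=-\langle M^*x,y\rangle$ between $\alpha$ and $\beta$. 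A standard fact about unbalanced OT (which I would cite from \citep{liero2018optimal} or re-derive via the block structure of the problem) is that $\pi^*$, with its own marginals $\pi^*_1$ and $\pi^*_2$ fixed, is an optimal \emph{balanced} plan in $\Pi(\pi^*_1,\pi^*_2)$ for the same cost; otherwise one could replace $\pi^*$ by a cheaper plan with identical marginals, contradicting optimality without touching the divergence terms.

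Next I would push the balanced subproblem through the linear map $M^*$. Writing $c(x,y)=-\langle M^*x,y\rangle = c_{\mathrm{ip}}(M^*x,y)$, the cost depends on $x$ only through $M^*x$, so the balanced problem $\mathsf{OT}^{c}(\pi^*_1,\pi^*_2)$ is the pushforward under $(M^*,\mathrm{id})$ of the balanced problem $\mathsf{OT}^{c_{\mathrm{ip}}}(M^*_\#\pi^*_1,\pi^*_2)$ on $B_r\times\Y$ (here I use the stated remark that $M^*_\#\pi^*_1\in\M^+(B_r)$). Since $c_{\mathrm{ip}}(y',y)=-\langle y',y\rangle$ is, up to additive functions of $y'$ and $y$ (namely $\tfrac12\|y'\|^2+\tfrac12\|y\|^2$), equal to $\tfrac12\|y'-y\|^2$, Brenier's theorem applies on $\R^q$: because $M^*_\#\pi^*_1$ is absolutely continuous w.r.t.\ Lebesgue measure on $B_r$ (this is where I need $p\ge q$, surjectivity of $M^*$, and $\alpha\ll\mathrm{Leb}_\X$ — the image of an a.c.\ measure under a surjective linear map is a.c.), the optimal plan for $\mathsf{OT}^{c_{\mathrm{ip}}}(M^*_\#\pi^*_1,\pi^*_2)$ is induced by a map of the form $y'\mapsto \nabla\psi(y')$ with $\psi$ convex, equivalently $-\nabla f_*$ where $f_*$ is an optimal Kantorovich potential for $c_{\mathrm{ip}}$, differentiable $M^*_\#\pi^*_1$-a.e. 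Composing with $M^*$ gives the map $T_*=-\nabla f_*\circ M^*$, and pushing back up shows $\pi^*=(\mathrm{id},T_*)_\#\pi^*_1$.

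The absolutely-continuous case handled, for the remaining case $\varphi_1=\varphi_2=\iota_{\{1\}}$ one has $\pi^*_1=\alpha\ll\mathrm{Leb}$, and the argument is the same (indeed simpler, since the balanced reduction is automatic). The case where $M^*$ is not surjective needs a separate, softer argument for the mere \emph{existence} of some Monge map (not of the specific form): here I would note that $M^*x$ takes values in the proper subspace $\Ran(M^*)\subset\R^q$, restrict to that subspace where $\Ran(M^*)$ has dimension $q'<q$, observe $M^*_\#\alpha$ is a.c.\ on a $q'$-dimensional set, and invoke the Monge theorem there to get a measurable map $S$ on $\Ran(M^*)$ with $S_\#(M^*_\#\pi^*_1)=\pi^*_2$ realizing the optimal balanced plan; then $T_*=S\circ M^*$ works.

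I expect the main obstacle to be the reduction step asserting that $\pi^*$ is balanced-optimal for its own marginals, together with the measure-theoretic lemma that the pushforward of a Lebesgue-a.c.\ measure under a surjective linear map is Lebesgue-a.c.\ on the target (equivalently on $B_r$); the former requires care because $\varphi$-divergences can take the value $+\infty$, so one must check that the competitor plan keeps the same (finite-divergence) marginals, and the latter requires a clean statement of which ambient measure on $B_r$ we mean. A secondary subtlety is the passage between "optimal transport map for $c_{\mathrm{ip}}$" and "$-\nabla f_*$ for an optimal Kantorovich potential $f_*$", i.e.\ the $c$-concavity / Rockafellar duality bookkeeping identifying $f_*$ with (a modification of) the Brenier potential; this is routine but must be stated so that differentiability of $f_*$ holds $M^*_\#\pi^*_1$-a.e.\ rather than merely Lebesgue-a.e.
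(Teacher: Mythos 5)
Your proposal follows essentially the same route as the paper's proof: freeze $M^*$, observe that $\pi^*$ is then balanced-optimal in $\Pi(\pi^*_1,\pi^*_2)$ for the cost $-\langle M^*x,y\rangle$, push the plan forward through $(M^*,\mathrm{id})$ to a linear OT problem on $\R^q$ with cost $c_{\mathrm{ip}}$, apply Brenier-type duality there, and lift the resulting graph back to $\X$; the paper's Steps 1--4 do exactly this, working with Fenchel--Kantorovich duality for $-\langle\cdot,\cdot\rangle$ directly rather than via your (equivalent) shift to the quadratic cost. The only point where you genuinely diverge is the non-surjective existence claim: you restrict to $\Ran(M^*)$ and invoke Monge there, whereas the paper extracts a measurable selection from the monotone set $-\partial f_*$; note that your variant still has to address the failure of the twist condition in the directions of $\Y$ orthogonal to $\Ran(M^*)$, a point the paper's selection argument also leaves implicit.
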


Clarified the existence of a Monge map for the problem \(\mathsf{C\Ri_{\mathit{r}}UOT}_{\bphi}(\alpha,\beta)\) we turn to the task of its estimation using entropic regularization to leverage the computational advantages that it carries.
We denote $(M^{\varepsilon}, \pi^{\varepsilon})$ solutions to \(\mathsf{C\Ri_{\mathit{r}}UOT}_{\bphi, \varepsilon}(\alpha,\beta)\) and $\alpha^{\varepsilon}, \beta^{\varepsilon}$ the marginals of $\pi^{\varepsilon}$.
We fix two sequences \((\varepsilon_n)_{n\in\N},(\varepsilon_j')_{j\in\N}\subset (0,+\infty)\) s.t. \(\varepsilon_n,\varepsilon_j'\to 0\).

\begin{defn}
    For every \(j,n\in \N\) we define the \emph{entropic map} \(T_{j,n} : \R^p \to \R^q\) as follows
    \begin{equation}\label{eq_entr_map}
        T_{j,n}(x) = \frac{\int_{\Y} y \exp\left[ \frac{1}{\varepsilon_n} ( g_{j,n}(y) + \< M^{\varepsilon_j'}x,y\>) \right] \d \beta^{\varepsilon_j'}(y)}{\int_{\Y} \exp\left[ \frac{1}{\varepsilon_n} ( g_{j,n}(y) + \< M^{\varepsilon_j'}x,y\>) \right] \d \beta^{\varepsilon_j'}(y)},
    \end{equation}
    where \((f_{j,n},g_{j,n})\in \Ci(B_r)\times \Ci(\Y)\) are optimal for 
   \begin{equation*}
\begin{split}
&\sup_{f,g \in \mathcal{C}(B_r) \times \mathcal{C}(\mathcal{Y})} \;  
   \int_{B_r} f \, d M^{\varepsilon_j'}_{\#}\alpha^{\varepsilon_j'} 
 + \int_{\mathcal{Y}} g \, d\beta^{\varepsilon_{j}'} \\
& - \varepsilon_n \int_{B_r \times \mathcal{Y}} 
   \left[ 
     \exp\left(\frac{f \oplus g - c_{\mathrm{ip}}}{\varepsilon_n}\right) - 1
   \right] 
   d\big(M^{\varepsilon_{j}'}_{\#}\alpha^{\varepsilon_{j}'} \otimes \beta^{\varepsilon_{j}'}\big).
\end{split}
\end{equation*}

\end{defn}

\begin{theorem}\label{thm_entropic_map_conv}
        Assume $\X\subset \R^p$, $\Y\subset \R^q$ compact subsets, \(p\geq q\), \(\varphi_1,\varphi_2\) superlinear strictly convex satisfying \eqref{comp_cond_strong} or $\varphi_1 = \varphi_2 = \iota_{\{1\}}$ satisfying \eqref{comp_cond} and \(\alpha\) absolutely continuous w.r.t. the Lebesgue measure on \(\X\). Assume also $B_r\subset \Y$. Then there exists a subsequence $(\varepsilon_{j_h}')_{h\in\N}$ s.t. $M^{\varepsilon_{j_h}'}\to M^*$ optimal for \(\mathsf{C\Ri_{\mathit{r}}UOT}_{\bphi}(\alpha,\beta)\). 
        Moreover, suppose \(M^*\) surjective and that \( M^*_{\#}\alpha^*\) and \(\beta^*\) satisfy the Assumptions A1-A3 in \citep{pooladian2021entropic} for $\alpha\geq 2$. Then
        \[\lim_{n\to +\infty} \lim_{h\to +\infty} T_{j_h,n} = T_* \quad \text{in \(L^2(\alpha^*)\),}\]
        where \(T_*\) is a Monge map for \(\mathsf{C\Ri_{\mathit{r}}UOT}(\alpha,\beta)\) which pushes \(\alpha^*\) to \(\beta^*\).
\end{theorem}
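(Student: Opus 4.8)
The plan is to combine the convergence machinery for entropic $\mathsf{C\Ri_r UOT}$ (Theorem \ref{thm:conv-entr-minimizers}) with the entropic-map consistency estimates of \citep{pooladian2021entropic}, using the Monge-map characterization of Theorem \ref{thm_monge_map_RipOT} as the bridge. First I would extract the subsequence: by Theorem \ref{thm:exist} each $\mathsf{C\Ri_r UOT}_{\bphi,\varepsilon_j'}(\alpha,\beta)$ has a minimizer $(M^{\varepsilon_j'},\pi^{\varepsilon_j'})$; since $\F_r$ is compact and, by Theorem \ref{thm:conv-entr-minimizers}(2), any weak limit point $(\theta_*,\pi_*)$ of minimizers of the entropic problems is optimal for the unregularized problem, I pass to a subsequence $(\varepsilon_{j_h}')$ along which $M^{\varepsilon_{j_h}'}\to M^*$ with $M^*$ optimal for $\mathsf{C\Ri_r UOT}_{\bphi}(\alpha,\beta)$ and $\pi^{\varepsilon_{j_h}'}\wto\pi^*$ an optimal coupling, hence $\alpha^{\varepsilon_{j_h}'}\wto\alpha^*$ and $\beta^{\varepsilon_{j_h}'}\wto\beta^*$. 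I would also note that $M^{\varepsilon_j'}_\#\alpha^{\varepsilon_j'}$ is supported in the fixed compact $B_r\subset\Y$ uniformly in $j$, which keeps everything inside a common compact set.

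Next I would set up the two-layer limit. For fixed $h$, the map $T_{j_h,n}$ is exactly the entropic (barycentric-projection) map of \citep{pooladian2021entropic} for the entropic-OT problem with cost $c_{\mathrm{ip}}$ between $M^{\varepsilon_{j_h}'}_\#\alpha^{\varepsilon_{j_h}'}$ and $\beta^{\varepsilon_{j_h}'}$ at regularization level $\varepsilon_n$: the potentials $(f_{j_h,n},g_{j_h,n})$ solve the stated semidual, and $T_{j_h,n}(x)$ is the $\varepsilon_n$-softmin barycenter of $y\mapsto c_{\mathrm{ip}}(M^{\varepsilon_{j_h}'}x,y)-g_{j_h,n}(y)$. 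The inner limit $h\to+\infty$ is a stability statement: as $M^{\varepsilon_{j_h}'}\to M^*$ and $(\alpha^{\varepsilon_{j_h}'},\beta^{\varepsilon_{j_h}'})\wto(\alpha^*,\beta^*)$, the entropic potentials for the fixed level $\varepsilon_n$ converge uniformly on the compact $B_r\times\Y$ (standard Sinkhorn/entropic-potential stability under weak convergence of marginals plus uniform convergence of the cost $(x,y)\mapsto-\langle M^{\varepsilon_{j_h}'}x,y\rangle$), so $T_{j_h,n}\to T^{(n)}$ where $T^{(n)}$ is the entropic map at level $\varepsilon_n$ for $(M^*_\#\alpha^*,\beta^*)$; one must check this convergence upgrades to $L^2(\alpha^*)$, which follows since all integrands are uniformly bounded on $B_r\times\Y$ and $\alpha^*$ is a finite measure. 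Then the outer limit $n\to+\infty$ is precisely the entropic-map consistency theorem of \citep{pooladian2021entropic}: under Assumptions A1–A3 for $(M^*_\#\alpha^*,\beta^*)$ with the quadratic-equivalent cost $c_{\mathrm{ip}}$, one has $T^{(n)}\to \bar T$ in $L^2(M^*_\#\alpha^*)$, where $\bar T=-\nabla f_*$ is the Brenier/Kantorovich optimal map for $\mathsf{OT}^{c_{\mathrm{ip}}}(M^*_\#\alpha^*,\beta^*)$. Composing with $M^*$ and invoking the change of variables $x\mapsto M^*x$, $\|T^{(n)}\circ M^*-\bar T\circ M^*\|_{L^2(\alpha^*)}^2=\int\|T^{(n)}-\bar T\|^2\,dM^*_\#\alpha^*\to0$, so $\lim_n\lim_h T_{j_h,n}=\bar T\circ M^*$ in $L^2(\alpha^*)$; by Theorem \ref{thm_monge_map_RipOT} (surjective-$M^*$ case) $T_*:=-\nabla f_*\circ M^*=\bar T\circ M^*$ is a Monge map for $\mathsf{C\Ri_r UOT}(\alpha,\beta)$ pushing $\alpha^*$ to $\beta^*$, which is the claim.

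A couple of technical points deserve care. One must justify that the limiting pair $(M^*_\#\alpha^*,\beta^*)$ genuinely satisfies Assumptions A1–A3: this is an explicit hypothesis of the theorem, so I would only verify that the $c_{\mathrm{ip}}$-optimal-transport problem between them is the one to which \citep{pooladian2021entropic} applies, i.e. that $c_{\mathrm{ip}}(y',y)=-\langle y',y\rangle$ is, up to the additive $\tfrac12(\|y'\|^2+\|y\|^2)$ that does not affect the transport plan, the squared-Euclidean cost, and that $B_r\subset\Y$ guarantees $M^*_\#\alpha^*$ and $\beta^*$ are both supported in $\Y$ so the potentials live in $\Ci(B_r)\times\Ci(\Y)$ as written. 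I would also make sure the iterated limit is the intended object — the theorem asserts $\lim_n\lim_h$, so I do not need a diagonal/joint limit, only the two stability results above in the stated order.

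The main obstacle is the inner limit $h\to+\infty$: showing that the entropic potentials $(f_{j_h,n},g_{j_h,n})$ — and hence the barycentric maps $T_{j_h,n}$ — are stable under simultaneous perturbation of the cost (through $M^{\varepsilon_{j_h}'}\to M^*$) and of \emph{both} marginals (through the weak convergence $\alpha^{\varepsilon_{j_h}'}\wto\alpha^*$, $\beta^{\varepsilon_{j_h}'}\wto\beta^*$, which are themselves only the marginals of entropic optima, not prescribed measures). The cleanest route is to normalize the potentials (fix $f_{j_h,n}(x_0)=0$), use the Sinkhorn fixed-point equations to get uniform Lipschitz/boundedness bounds on $B_r\times\Y$ from the uniform bounds on $c_{\mathrm{ip}}$ and on the costs $-\langle M^{\varepsilon_{j_h}'}x,y\rangle$, extract uniformly convergent subsequences by Arzelà–Ascoli, pass to the limit in the fixed-point equations to identify the limit as the (unique, given $\varepsilon_n>0$) entropic potentials for $(M^*_\#\alpha^*,\beta^*)$, and conclude full convergence by uniqueness; the $L^2(\alpha^*)$ convergence of $T_{j_h,n}$ then follows from dominated convergence since the softmax barycenter depends continuously on the potentials with all quantities bounded on the fixed compact $B_r\times\Y$.
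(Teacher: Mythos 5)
Your proposal is correct and follows essentially the same route as the paper: subsequence extraction via Theorem \ref{thm:conv-entr-minimizers}, an inner-limit stability result for the entropic potentials under simultaneous perturbation of the cost and both marginals (the paper's Proposition \ref{prop_entr_map_reg_ip}, which uses exactly your normalization-plus-compactness-plus-uniqueness scheme and dominated convergence for the $L^2(\alpha^*)$ upgrade), and an outer limit via Corollary 1 of \citep{pooladian2021entropic} combined with the change of variables $x\mapsto M^*x$ and the identification $T_*=-\nabla f_*\circ M^*$ from Theorem \ref{thm_monge_map_RipOT}. The technical points you flag (normalizing $f_{j,n}(x_0)=0$, the equivalence of $c_{\mathrm{ip}}$ with the quadratic cost, the role of $B_r\subset\Y$) are precisely the ones the paper handles.
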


\section{Block coordinate descent algorithm for \(\mathsf{C\Ri_{\mathit{r}}UOT}_{\varepsilon}\)}

Consider the setup of the previous section. To approximate a solution for Problem \ref{problem_RipUOT} we use the following block coordinate descent algorithm \citep{bertsekas1997nonlinear, tseng2001convergence}:

\begin{align*}
\pi^{k+1} &= \argmin_{\pi\in \M^+(\X\times \Y)}
   - \int_{\mathcal{X} \times \mathcal{Y}} \langle M_k x, y \rangle \, d\pi(x,y) \notag 
+ \mathrm{D}_{\varphi_1}(\pi_1 \mid \alpha) + \mathrm{D}_{\varphi_2}(\pi_2 \mid \beta) 
   + \varepsilon \mathrm{D}_{\mathrm{KL}}(\pi \mid \alpha \otimes \beta), \\
M_{k+1} &= \argmin_{M\in \F_r}
   - \int_{\mathcal{X} \times \mathcal{Y}} \langle Mx, y \rangle \, d\pi^{k+1}(x,y).
\end{align*}

It is a well-studied problem in the optimization and machine learning literature. Adapting Lemma 4.2.2 in \citep{vayer2020contribution} we get

\begin{lemma}\label{lemma_gwip}
        Fix \(r\in (0,+\infty)\) and \(\pi\in \M^+(\X\times \Y)\). Then 
        \[r \left\Vert \int_{\X\times \Y} y x^T \d \pi(x,y) \right\Vert_F = \sup_{M\in \F_r} \int_{\X\times \Y} \<M x, y\> \d \pi(x,y)\]
        and supremum is achieved by
        \[M(\pi) = \begin{cases}
        \frac{r}{\left\Vert \int_{\X\times \Y} y x^T \d \pi(x,y)\right\Vert_F} \int_{\X\times \Y} y x^T \d \pi(x,y) & \text{if $\int_{\X\times \Y} y x^T \d \pi(x,y)\neq 0$} \\ 0 & \text{otherwise.} \end{cases}\]
\end{lemma}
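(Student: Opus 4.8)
The plan is to recognize the objective $M \mapsto \int_{\X\times\Y}\langle Mx,y\rangle\,\d\pi(x,y)$ as a \emph{linear} functional of the matrix $M$, rewrite it through the Frobenius inner product, and then apply Cauchy--Schwarz on the Euclidean ball $\F_r$.

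First I would observe that, since $\X$ and $\Y$ are compact and $\pi$ is a finite positive measure, the matrix $A := \int_{\X\times\Y} y\, x^T \d\pi(x,y)\in\R^{q\times p}$ is well defined with finite entries, since every component $y_i x_j$ is a bounded continuous function on $\X\times\Y$. Using $\langle Mx,y\rangle = y^T M x = \mathrm{tr}(M\, x\, y^T)$ and linearity of the integral,
\[
\int_{\X\times\Y}\langle Mx,y\rangle\,\d\pi(x,y) = \mathrm{tr}\Big(M\int_{\X\times\Y} x\, y^T \d\pi(x,y)\Big) = \mathrm{tr}(M A^T) = \langle M, A\rangle_F,
\]
where $\langle\cdot,\cdot\rangle_F$ denotes the Frobenius inner product on $\R^{q\times p}$, whose induced norm is $\Vert\cdot\Vert_F$.

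Next I would take the supremum over $M\in\F_r=\{M\in\R^{q\times p}:\Vert M\Vert_F\le r\}$. By Cauchy--Schwarz, $\langle M,A\rangle_F\le\Vert M\Vert_F\,\Vert A\Vert_F\le r\,\Vert A\Vert_F$ for every such $M$, hence $\sup_{M\in\F_r}\langle M,A\rangle_F\le r\,\Vert A\Vert_F$. If $A\neq 0$, then $M=(r/\Vert A\Vert_F)\,A$ belongs to $\F_r$ and achieves the value $r\,\Vert A\Vert_F$, which forces equality and identifies the maximizer; if $A=0$ the functional is identically zero, so every $M\in\F_r$ --- in particular $M=0$ --- attains the supremum $0=r\,\Vert A\Vert_F$. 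Unwinding $A=\int_{\X\times\Y} y\, x^T \d\pi(x,y)$ then gives both the stated closed form for $M(\pi)$ and the claimed value of the supremum.

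I do not expect a genuine obstacle: the only points needing care are the index bookkeeping in the trace identity --- so that the maximizer comes out as $\int y\, x^T \d\pi\in\R^{q\times p}$ rather than its transpose --- and isolating the degenerate case $A=0$ so that the supremum is genuinely attained, matching the case split in the definition of $M(\pi)$.
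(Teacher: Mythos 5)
Your proof is correct and follows essentially the same route as the paper: the paper establishes this identity in Step~1 of the proof of Theorem~\ref{existence_RipOT}, using the same trace identity $\langle Mx,y\rangle=\mathrm{tr}(yx^{\top}M^{\top})$ to rewrite the objective as a Frobenius inner product with $\int yx^{\top}\,\mathrm{d}\pi$ and then invoking Cauchy--Schwarz on the ball $\F_r$. Your explicit handling of the degenerate case $\int yx^{\top}\,\mathrm{d}\pi=0$ is a small but welcome addition that the paper leaves implicit.
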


By Lemma \ref{lemma_gwip} the block coordinate descent algorithm becomes

\begin{mdframed}[linewidth=0.5pt, roundcorner=4pt, innertopmargin=5pt, innerbottommargin=5pt, innerleftmargin=8pt, innerrightmargin=8pt, userdefinedwidth=\linewidth]
\begin{align}\label{eq_BCD}
   \pi^{k+1} &= \argmin_{\pi\in \M^+(\X\times \Y)} 
   -\int_{\X\times \Y} \< M_{k} x, y\> \, d\pi(x,y)\notag + \D{\varphi_1}{\pi_1}{\alpha} 
   + \D{\varphi_2}{\pi_2}{\beta} 
   + \varepsilon \D{\mathrm{KL}}{\pi}{\alpha\otimes \beta}\notag\\
   M_{k+1} &=  A_{r}(\pi^{k+1}) \int_{\X\times \Y} y x^T \, d \pi^{k+1}(x,y),
\end{align}
\end{mdframed}

 where $A_{r}(\pi^{k+1}) = \frac{r}{\left\Vert \int_{\X\times \Y} y x^T \d \pi^{k+1}(x,y)\right\Vert_{F}} $.

We now give a simple convergence result for the above algorithm in the practical case of discrete measures.

\begin{theorem}\label{thm:conv-algo}
        Let \(\X = \{x_i\}_{i=1}^n\subset \R^p\), \(\Y = \{y_j\}_{j=1}^m\subset \R^q\), \(\alpha=\sum_{i=1}^n a_i \delta_{x_i}\), \(\beta=\sum_{j=1}^m b_j \delta_{y_j}\) and \(\varepsilon,r>0\). Suppose \(\{a_i\}_{i=1}^n, \{b_j\}_{j=1}^m\subset (0,+\infty)\) and the entropy functions \(\varphi_1\) and \(\varphi_2\) to be superlinear.
        Then, any limit point of the sequence \(((M_k,\pi^k))_{k\in\N}\) defined by the block coordinate descent scheme \eqref{eq_BCD} is a stationary point of the objective function of \(\mathsf{C\mathcal{R}_r UOT}_{\varepsilon}(\alpha,\beta)\).
\end{theorem}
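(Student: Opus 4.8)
The plan is to recognize the update rule \eqref{eq_BCD} as a two-block Gauss--Seidel (cyclic block coordinate minimization) scheme for the objective of $\mathsf{C\mathcal{R}_r UOT}_{\varepsilon}(\alpha,\beta)$ and to invoke the convergence theory for block coordinate descent with nonsmooth convex blocks, following \citep{bertsekas1997nonlinear, tseng2001convergence}. First I would pass to finite dimensions: since $\varepsilon>0$, the term $\varepsilon\,\D{\mathrm{KL}}{\pi}{\alpha\otimes\beta}$ is finite only for $\pi\ll\alpha\otimes\beta$, and $\alpha\otimes\beta$ is supported on the grid $\{(x_i,y_j)\}_{i,j}$; hence every $\pi$ with finite objective has the form $\pi=\sum_{i,j}\pi_{ij}\,\delta_{(x_i,y_j)}$, which I identify with $(\pi_{ij})\in\R_{\geq 0}^{nm}$, its marginals being $\pi_1=\big(\sum_j\pi_{ij}\big)_i$ and $\pi_2=\big(\sum_i\pi_{ij}\big)_j$. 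Since $a_i,b_j>0$ one has $\pi_1\ll\alpha$ and $\pi_2\ll\beta$, so the singular parts in the $\varphi_i$-divergences vanish, $\D{\varphi_1}{\pi_1}{\alpha}=\sum_i a_i\varphi_1\big(\tfrac{1}{a_i}\sum_j\pi_{ij}\big)$ and likewise for $\varphi_2$ and for $\D{\mathrm{KL}}{\pi}{\alpha\otimes\beta}$, and the objective becomes a concrete function $\Phi:\R_{\geq 0}^{nm}\times\R^{q\times p}\to\R\cup\{+\infty\}$, finite exactly on $\R_{\geq 0}^{nm}\times\F_r$.

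Next I would record the decomposition $\Phi=f_0+f_1+f_2$ where $f_0(\pi,M)=-\sum_{ij}\pi_{ij}\<Mx_i,y_j\>$ is bilinear, hence $C^\infty$ on $\R^{nm}\times\R^{q\times p}$; $f_1(\pi)=\sum_i a_i\varphi_1\big(\tfrac{1}{a_i}\sum_j\pi_{ij}\big)+\sum_j b_j\varphi_2\big(\tfrac{1}{b_j}\sum_i\pi_{ij}\big)+\varepsilon\sum_{ij}a_ib_j\,\varphi_{\mathrm{KL}}\big(\tfrac{\pi_{ij}}{a_ib_j}\big)$ is proper, lower semicontinuous and convex (each summand is an lsc convex entropy composed with a linear map, and the last sum also enforces $\pi\geq 0$); and $f_2(M)=\iota_{\F_r}(M)$ is proper, lsc and convex, $\F_r$ being nonempty, compact and convex. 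This is exactly the ``smooth plus blockwise proper lsc convex'' form of \citep{tseng2001convergence} with the two blocks $\pi$ and $M$. Under it, the $\pi$-update in \eqref{eq_BCD} is precisely $\argmin_{\pi}\big[f_0(\pi,M_k)+f_1(\pi)\big]$, and, by Lemma \ref{lemma_gwip}, the $M$-update is a selection from $\argmin_{M}\big[f_0(\pi^{k+1},M)+f_2(M)\big]$ (the closed-form expression is its unique element when $\int y x^T\d\pi^{k+1}\neq 0$, and $M=0$ belongs to it otherwise); so \eqref{eq_BCD} is the cyclic two-block coordinate minimization of $\Phi$.

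It then remains to verify the standing hypotheses. For coercivity and compactness of sublevel sets: on $\mathrm{dom}\,\Phi$ one has $|\<Mx_i,y_j\>|\leq r\,\Vert x_i\Vert\,\Vert y_j\Vert$ for $M\in\F_r$, while $\varepsilon\sum_{ij}a_ib_j\,\varphi_{\mathrm{KL}}(\pi_{ij}/(a_ib_j))\geq 0$ is superlinear in $\Vert\pi\Vert_1$ and the two $\varphi_i$-terms are nonnegative; hence $\Phi(\pi,M)\to+\infty$ as $\Vert\pi\Vert_1\to+\infty$ uniformly over $M\in\F_r$, so every sublevel set of $\Phi$ is bounded, and, being closed by lower semicontinuity, compact. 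In particular each $\pi$-subproblem minimizes a coercive lsc function over the closed set $\R_{\geq 0}^{nm}$ and attains its minimizer, which is moreover unique by strict convexity of the $\varphi_{\mathrm{KL}}$ term, while each $M$-subproblem attains its minimum by Lemma \ref{lemma_gwip}. Finally, since $f_0$ is smooth and $f_1,f_2$ are convex, $\Phi$ is regular at every point of $\mathrm{dom}\,\Phi$, so that a coordinatewise minimum of $\Phi$ is a stationary point; together with the uniqueness of the $\pi$-subproblem solution and the fact that there are only two blocks, no further pseudoconvexity or uniqueness assumption beyond the above is required.

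Applying the convergence theorem of \citep{tseng2001convergence} (or the classical two-block result in \citep{bertsekas1997nonlinear}) then gives that every limit point $(M_*,\pi^*)$ of $((M_k,\pi^k))_{k\in\N}$ is a coordinatewise minimum of $\Phi$, that is, $\pi^*\in\argmin_{\pi}\Phi(\pi,M_*)$ and $M_*\in\argmin_{M}\Phi(\pi^*,M)$; by regularity of $\Phi$ this is precisely a stationary point of the objective of $\mathsf{C\mathcal{R}_r UOT}_{\varepsilon}(\alpha,\beta)$, and reinterpreting $\pi^*$ as a measure completes the proof. I expect the main work to lie not in the descent machinery itself but in the two structural checks feeding it: isolating the possibly irregular boundary behaviour of the $\varphi_i$-divergences inside the convex lsc block $f_1$ (so that the smooth part $f_0$ is merely the bilinear cost), and verifying the regularity of $\Phi$ so that the coordinatewise minimum returned by the theorem genuinely upgrades to stationarity. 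The non-uniqueness of the $M$-update when $\int y x^T\d\pi=0$ is harmless precisely because there are only two blocks.
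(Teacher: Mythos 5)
Your proof is correct and follows the same overall route as the paper: reduce to finite dimensions via $\pi\ll\alpha\otimes\beta$, recognize \eqref{eq_BCD} as two-block Gauss--Seidel, check compactness of sublevel sets via superlinearity, and conclude with Lemma 3.1 and Theorem 4.1 of Tseng (2001). The one genuine difference is the smooth/nonsmooth split. The paper places the $\varphi$-divergences and the KL term inside the ``smooth'' part $J_0$, keeping only the two indicator functions as the nonsmooth blocks; it then asserts that $J_0$ is continuously differentiable, which is not automatic for general superlinear lsc entropies $\varphi_1,\varphi_2$ (nor for $x\log x$ at the boundary), and has to be rescued by the observation that the $\pi$-subproblem minimizer is strictly positive when $\varepsilon>0$. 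You instead keep only the bilinear cost $-\sum_{ij}\pi_{ij}\langle Mx_i,y_j\rangle$ as the smooth $f_0$ and absorb all divergence terms (together with the nonnegativity constraint) into the proper lsc convex block $f_1(\pi)$. This buys you regularity of the objective directly from Tseng's Lemma 3.1 with no differentiability hypothesis on $\varphi_1,\varphi_2$ whatsoever, and it cleanly isolates the boundary behaviour of the entropies where Tseng's framework can handle it; the price is negligible, since the $\pi$-subproblem remains exactly the one solved by the unbalanced Sinkhorn step. Your handling of the degenerate $M$-update ($C(\pi)=0$) and the remark that two blocks make subproblem non-uniqueness harmless match the paper's treatment.
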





\section{Applications to single-cell multiomics alignments}

We evaluate the effectiveness of our Algorithm \ref{eq_BCD} on two \textbf{single-cell multi-omics datasets} \citep{argelaguet2021statistical}. Each dataset consists of two tables that record different cellular characteristics (modalities), measured on cells of distinct types. The two modalities live in Euclidean spaces of different dimensions, and our goal is to align the cells across modalities with respect to their type.  

Formally, we assign uniform probability measures $\alpha$ and $\beta$ to the source and target datasets, respectively. For each choice of $\varepsilon,\varepsilon',\lambda > 0$, we compute an entropic map $T_{\varepsilon,\varepsilon'}$ (see \eqref{eq_entr_map}) that approximates a Monge map for the cost-regularized problem $\Ri_r\mathsf{UOT}_{\bphi}(\alpha,\beta)$ (see Theorem~\ref{thm_entropic_map_conv}). Since alignments are always computed from the higher- to the lower-dimensional modality, we denote by \emph{source modality data} the table containing the higher-dimensional measurements, and by \emph{target modality data} the table containing the lower-dimensional ones. Importantly, in these datasets, the two modalities admit a one-to-one correspondence: each source measurement has a unique paired target measurement from the same cell, and every target cell appears in the source data.  

To evaluate performance in more challenging conditions, we additionally simulate \textbf{unbalancedness} by subsampling the source and target data with cell-type-dependent proportions, thereby breaking the one-to-one correspondence.  

All experiments are carried out in a \textbf{supervised setting}, where the cell type (label) is available for both modalities. Performance is quantified using \textbf{Label Transfer Accuracy (LTA)} \citep{demetci2022scot, demetci2022scotv2}, defined as the accuracy of predictions on aligned source data (in the target space) obtained by a $k$-nearest neighbors classifier trained on the target modality.  

In all experiments, we set the entropy functions to $\varphi_1 = \varphi_2 = \lambda \varphi_{\mathrm{KL}}$, where $\varphi_{\mathrm{KL}}(x) = x \log(x) - x + 1$. We fix $k=5$ for the $k$-NN classifier used in computing LTA, and set $\varepsilon = 5 \times 10^{-3}$ and $r = 1$ for the constraint set $\F_r$. The remaining hyperparameters $\varepsilon'$ and $\lambda$ are tuned by grid search.  

\subsection{scGEM dataset}

The first dataset we use is the scGEM dataset \citep{cheow2016single}, \citep{demetci2022scot, demetci2022scotv2} containing the gene expression and DNS methylation modalities of 177 human somatic cells. The source modality is the gene expression one, which have dimension \(p=34\), while the DNA methylation is the target modality, of dimension \(q=27\).
The task is to match source and target datasets using an entropic map from the source to the target. In the left column of Table of Figure~\ref{fig:myimage} are described the results of \(C\Ri_r\mathsf{UOT}\) on the full scGEM dataset when varying the parameter \(\lambda\) and the same kind of results is reported in the right column of Table of Figure~\ref{fig:myimage} for the randomly subsampled scGEM dataset. 
In particular, for the latter experiment, we randomly pick two cell types and subsampled at \(30\%\) the cells of the first type in the gene expression domain (source) and at \(30\%\) the cells of the second type in the DNA methylation domain (target). 



\begin{figure}[htbp]
    \centering

    \begin{subfigure}{0.45\textwidth}
        \centering
        \includegraphics[width=\linewidth]{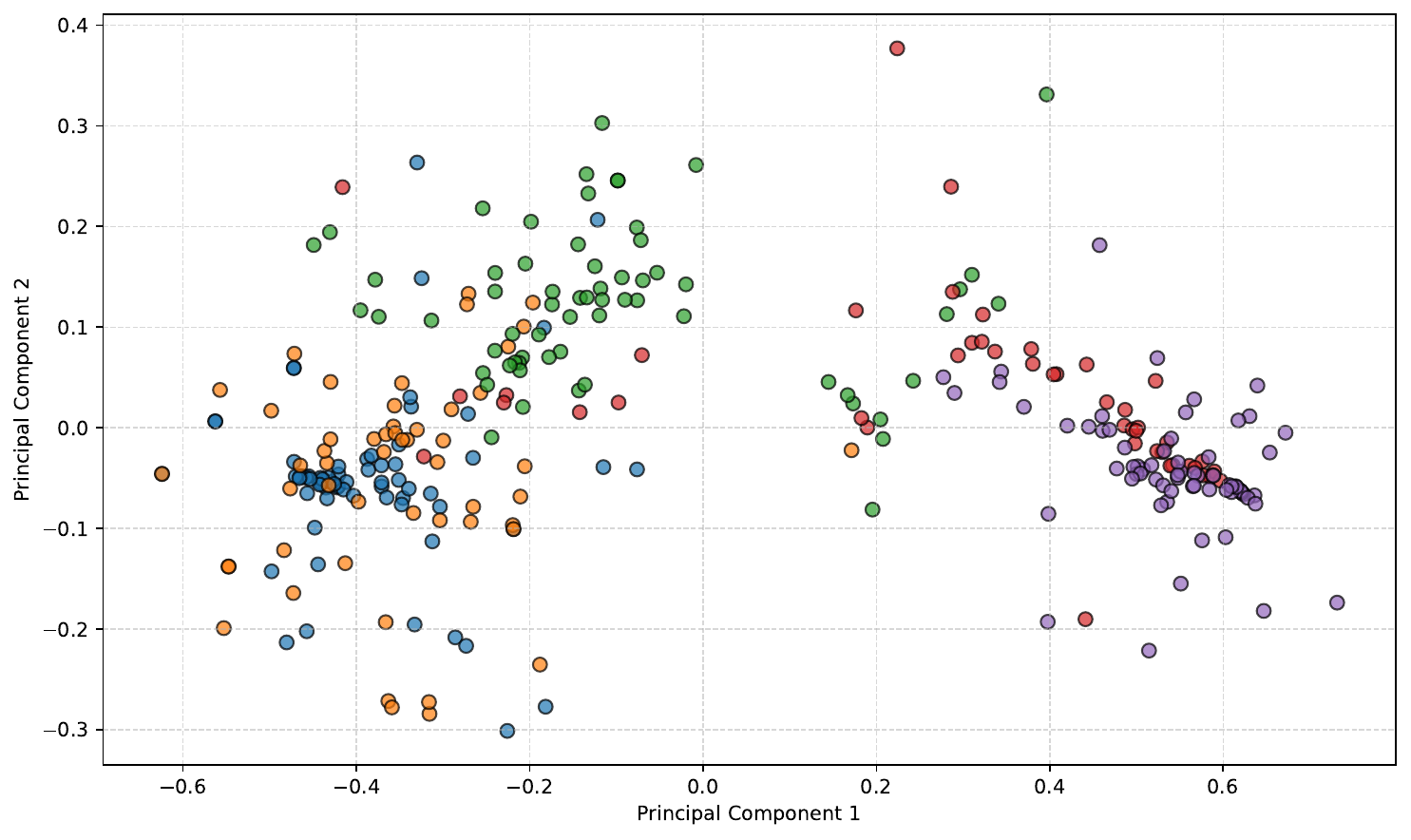}
        \caption{Entropic map alignment of the subsampled scGM dataset with $\lambda=1.0$ using two-dimensional PCA. Different colours refer to different cell types.}
        \label{fig:alignment_sub_scGM}
    \end{subfigure}
    \hfill
    \begin{subfigure}{0.45\textwidth}
        \centering
        \includegraphics[width=\linewidth]{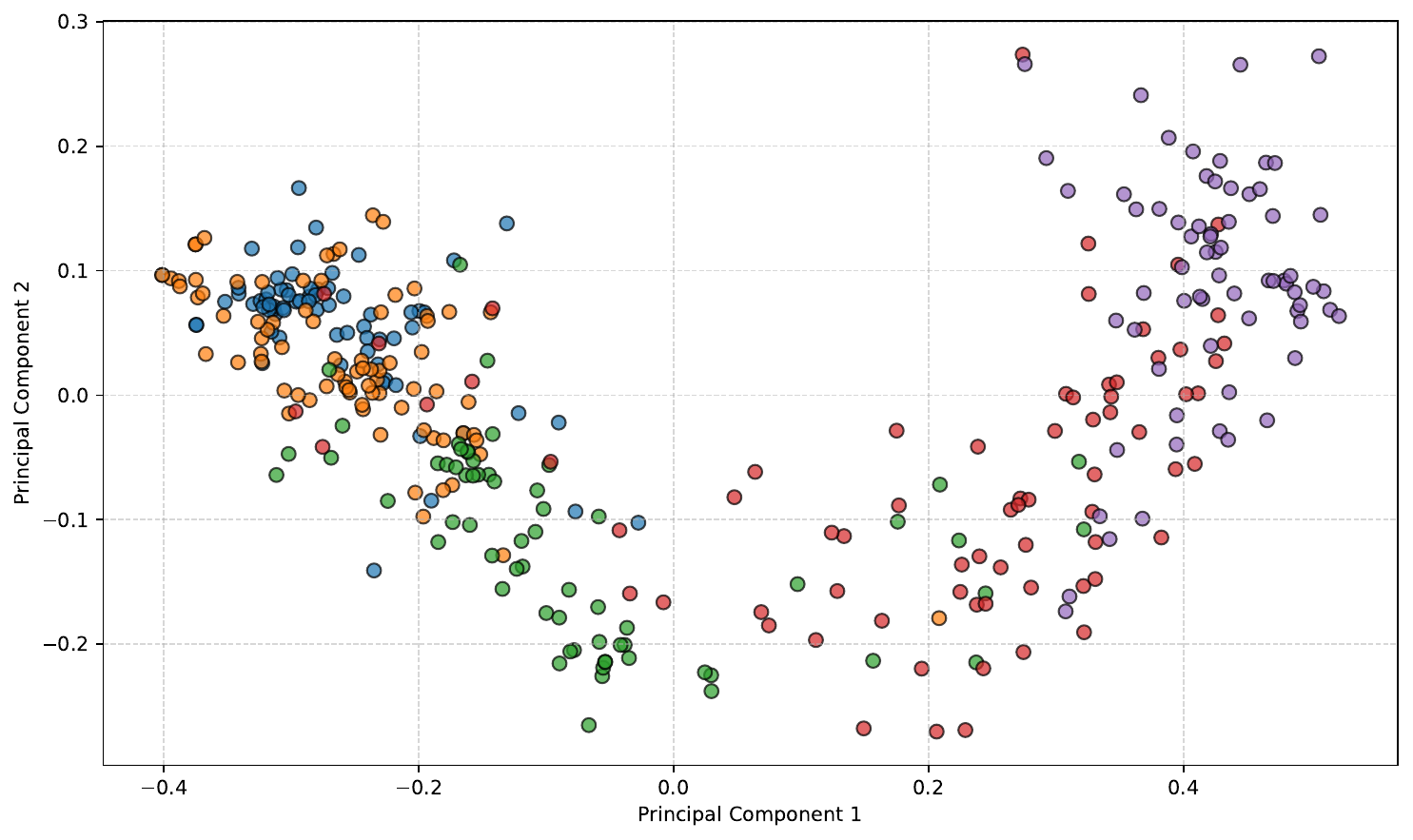}
        \caption{Entropic map alignment of the full scGM dataset with $\lambda=1.3$ using two-dimensional PCA. Different colours refer to different cell types.}
        \label{fig:alignment_scGM}
    \end{subfigure}

    \caption{Visualization of entropic map alignments for subsampled and full scGM datasets.}
    \label{fig:alignment_scGM_both}
\end{figure}


\begin{figure}[htbp]
    \centering

    \begin{minipage}{0.48\textwidth}
        \centering
        \includegraphics[width=\textwidth]{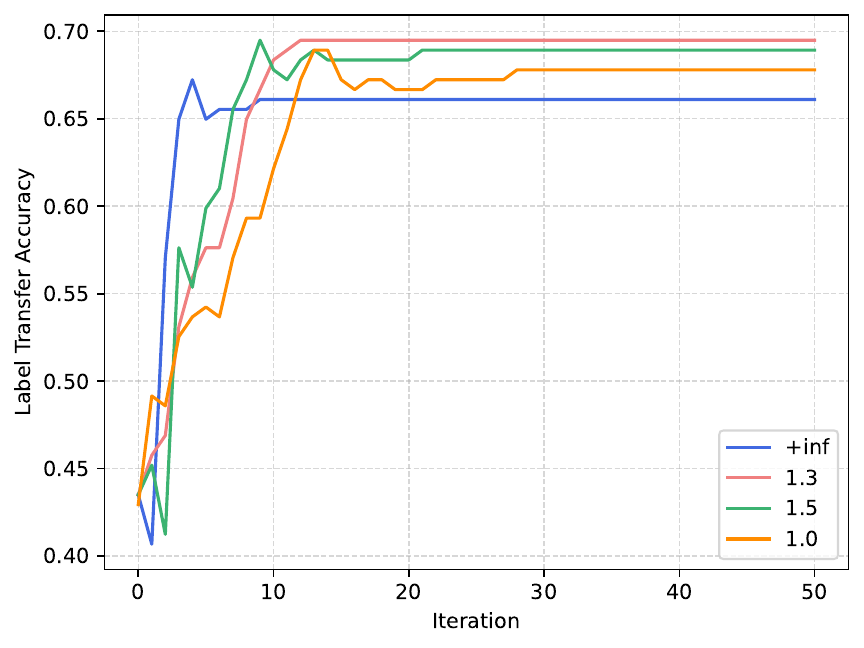}
     \end{minipage}
    \hfill
    \begin{minipage}{0.48\textwidth}
        \centering
        \begin{tabular}{cc !{\vrule width 1.2pt} cc}
            \toprule
            \multicolumn{2}{c!{\vrule width 1.2pt}}{Subsampled scGEM} &
            \multicolumn{2}{c}{Full scGEM} \\
            \cmidrule(r){1-2} \cmidrule(l){3-4}
            $\lambda$ & LTA & $\lambda$ & LTA \\
            \midrule
            $+\infty$ & 0.477 & $+\infty$ & 0.661 \\
            1.5       & 0.638 & 2.0       & 0.667 \\
            1.3       & 0.665 & 1.5       & 0.672 \\
            1.0       & \textbf{0.664} & 1.3 & \textbf{0.689} \\
            0.7       & 0.604 & 1.0       & 0.661 \\
            \bottomrule
        \end{tabular}
    \end{minipage}

    \caption{Plots of the LTA of the alignments for the full scGEM dataset
    obtained using the entropic map associated to the couple \((M,P)\) at each 
    iteration of Algorithm~\ref{eq_BCD}, and the corresponding LTA accuracies 
    for subsampled and full scGEM.}\label{fig:myimage}
\end{figure}

Note that the case \(\lambda=+\infty\) corresponds to the problem \(C\mathcal{R}_r\mathsf{UOT}_{\varepsilon}(\alpha,\beta)\) with $\varphi_1=\varphi_2=\iota_{\{1\}}$, which can be seen as an entropic-regularized version of the Gromov-Wasserstein problem \(\mathsf{GW\text{-}IP}(\alpha,\beta)\) (see Proposition~\ref{prop_gwip}).

We observe that the unbalanced alignments (\(\lambda< +\infty\)), consistently outperform those obtained with the balanced formulation (\(\lambda=+\infty\), \citep{sebbouh2024structured}) . In particular, on the subsampled scGEM dataset, introducing unbalancedness yields a substantial improvement: the method effectively compensates for differences in cell-type proportions and the lack of one-to-one correspondence caused by subsampling. This highlights the importance of relaxing the mass conservation constraint in scenarios where the datasets exhibit sampling biases or partial overlap.

\subsection{SNAREseq dataset}
The second dataset we use is the SNAREseq dataset \citep{chen2019high}, \citep{demetci2022scot, demetci2022scotv2}, containing the chromatine accessibility (ATAC-seq) and gene expression (RNA-seq) of 1047 single cells of 4 different types. The source ATAC-seq modality has dimension \(p=19\), while the target RNA-seq modality has dimension \(q=10\). As in the previous experiment, we align the source and target modality datasets using an entropic map from the source to the target. Results on both the full dataset and an unbalanced subsampling show that our method remains robust when cell-type proportions differ, with complete results and implementation details provided in the Appendix.
\section{Conclusion and future work}
We introduced a unified framework that combines unbalanced optimal transport (UOT) with cost learning, supported by theoretical guarantees on the existence of minimizers, convergence, and Monge maps. On the computational side, we highlighted in the appendix the potential of low-rank parametrization of transport plans and regularizers which forcing sparcity in line with recent advances on scalable OT methods \citep{sebbouh2024structured}. A systematic treatment of these approximations within our framework could substantially reduce memory and runtime complexity while preserving theoretical guarantees.

\section*{Acknowledgments} KP acknowledges the partial support of the project PNRR - M4C2 - Investimento 1.3, Partenariato Esteso PE00000013 - ``FAIR - Future Artificial Intelligence Research'' - Spoke 1 ``Human-centered AI'', funded by the European Commission under the NextGeneration EU programme. KP would like to thank Dario Trevisan and Andrea Agazzi for helpful discussions. KP was supported by Max Planck Institute for Mathematics in the Sciences in Leipzig.

\appendix
\renewcommand\thesection{\Alph{section}}


\section{Appendix for Section 3}
\label{app:sec3}

\paragraph{Notation.}
We write $\M^+(Z)$ for finite nonnegative Radon measures on a compact metric space $Z$, and $\Ci(Z)$ for continuous real-valued functions on $Z$. For $\pi\in \M^+(\X\times\Y)$, denote marginals by $\pi_i = p_i{}_\#\pi$, $i=1,2$. For entropy functions $\varphi_1,\varphi_2$, we use the $\varphi$-divergences $\D{\varphi_i}{\cdot}{\cdot}$, and $\D{\mathrm{KL}}{\cdot}{\cdot}$ is the Kullback-Leibler divergence. We keep $\rho=\alpha\otimes\beta$ in entropic terms, where $\alpha$, $\beta$ is the source and target measure respectively.

\subsection{Cost Regularized Unbalanced Optimal Transport}
\label{app:subsec-ruot}

\begin{problem}[\(\mathsf{C}\Ri\mathsf{UOT}\) problems]\label{app:prob-RUOT}
Let $\Ri : \Ci(\X\times\Y)\to\R \cup \{+\infty\}$ be a convex function. We define the following cost-regularised problem
\[
\mathsf{C}\Ri\textsf{UOT}(\alpha,\beta)
\triangleq \inf_{\pi\in \M^+(\X\times\Y)} \ \inf_{c\in \Ci(\X\times\Y)}
\Big\{\ \textstyle\int_{\X\times\Y} c\,\mathrm{d}\pi + \D{\varphi_1}{\pi_1}{\alpha} + \D{\varphi_2}{\pi_2}{\beta} + \Ri(c)\ \Big\}.
\]
For $\varepsilon>0$, we define the entropic cost-regularised unbalanced optimal transport
\[
\textsf{C}\Ri\textsf{UOT}_{\varepsilon}(\alpha,\beta)
\triangleq \inf_{\pi,c}\Big\{\ \textstyle\int c\,\mathrm{d}\pi + \D{\varphi_1}{\pi_1}{\alpha} + \D{\varphi_2}{\pi_2}{\beta} + \Ri(c) + \varepsilon\,\D{\mathrm{KL}}{\pi}{\rho}\ \Big\}.
\]
In the balanced case $\bphi=(\iota_{(=)},\iota_{(=)})$ we write $\textsf{C}\Ri\textsf{OT}_\varepsilon(\alpha,\beta)$ and $\textsf{C}\Ri\textsf{OT}(\alpha,\beta)$.
\end{problem}

For completeness we recall the compatibility conditions
\begin{equation}\label{comp_cond}
    \left( m(\alpha) \mathrm{dom}(\varphi_1) \right) \cap \left( m(\beta) \mathrm{dom}(\varphi_2) \right)\neq \emptyset
\end{equation}
and the stronger one 
\begin{equation}\label{comp_cond_strong}
    \begin{split}
    &\left[ \mathrm{Int} (m(\alpha) \mathrm{dom}(\varphi_1)) \cap \left( m(\beta) \mathrm{dom}(\varphi_2) \right) \right]\\
    &\quad \cup \left[ \left( m(\alpha) \mathrm{dom}(\varphi_1) \right) \cap \Int ( m(\beta) \mathrm{dom}(\varphi_2) ) \right] \neq \emptyset.
    \end{split}
\end{equation}


\begin{oss}[Feasibility conditions]
\leavevmode\\
\textbf{Balanced case.} 
If $\varphi = (\iota_{=}, \iota_{=})$, feasibility requires the compatibility condition \textnormal{(1)}. 
Indeed, if the supports of $\alpha$ and $\beta$ do not overlap under the marginal maps $m(\cdot)$, 
then there exists no admissible coupling with the prescribed marginals, and consequently 
$\Pi(\alpha, \beta) = \emptyset$, making $\mathrm{CROT}(\alpha, \beta) = +\infty$. 
Condition~\textnormal{(1)} thus ensures that at least some mass from $\alpha$ can be transported to $\beta$ 
without violating the marginal constraints. 

\medskip
\textbf{Unbalanced case.} 
When general entropy functions $\varphi_1, \varphi_2$ are used, the feasibility of the relaxed formulation 
requires either \textnormal{(1)} or the stronger condition~\textnormal{(2)}. 
The latter guarantees that the effective domains of the divergences are compatible, 
so that partial mass transfer is possible even when the supports of $\alpha$ and $\beta$ 
do not perfectly coincide. 
In practice,~\textnormal{(2)} prevents degenerate situations where both divergences 
assign infinite cost to any nontrivial measure, ensuring that the unbalanced OT functional 
admits at least one finite value.
\end{oss}


\subsection{From convex to concave functionals on plans}

\begin{problem}[\(\mathsf{UO}\mathcal{Q}\mathsf{T}\) problems]\label{app:prob-OQT}
Given entropy functions $\varphi_1,\varphi_2$ and a concave $\mathcal{Q}:\M^+(\X\times\Y)\to\R$, define
\[
\mathsf{UO}\mathcal{Q}\mathsf{T}(\alpha,\beta)
\triangleq \inf_{\pi\in \M^+(\X\times\Y)} \ \mathcal{Q}(\pi)+\D{\varphi_1}{\pi_1}{\alpha}+\D{\varphi_2}{\pi_2}{\beta},
\]
and $\mathsf{UO}\mathcal{Q}\mathsf{T}_{\varepsilon}$ by adding $\varepsilon\,\D{\mathrm{KL}}{\pi}{\rho}$.
\end{problem}

For fixed $\varepsilon\ge 0$, set
\[
\mathcal{J}_{\bphi,\Ri}(\pi):=\inf_{c\in\Ci(\X\times\Y)} \int c\,\mathrm{d}\pi+\D{\varphi_1}{\pi_1}{\alpha}+\D{\varphi_2}{\pi_2}{\beta}+\Ri(c)+\varepsilon\,\D{\mathrm{KL}}{\pi}{\rho},
\]
\[
\mathcal{I}_{\bphi,\mathcal{Q}}(\pi):=\mathcal{Q}(\pi)+\D{\varphi_1}{\pi_1}{\alpha}+\D{\varphi_2}{\pi_2}{\beta}+\varepsilon\,\D{\mathrm{KL}}{\pi}{\rho}.
\]

\begin{prop}\label{app:prop:OQT-from-R}
\textbf{(From $\Ri$ to $\mathcal{Q}$).}

Fix $\varepsilon\ge 0$. Let $\varphi_1,\varphi_2:[0,+\infty)\to[0,+\infty]$ be entropy functions and
$\Ri:\Ci(\X\times\Y)\to[0,+\infty]$ be convex such that, for every $\pi\in \M^+(\X\times\Y)$,
\[
\inf_{c\in \Ci(\X\times\Y)}\ \int_{\X\times\Y} c\,\mathrm{d}\pi+\Ri(c)\ \in \ \R.
\]
Define the concave functional $\mathcal{Q}_{\Ri}:\M^+(\X\times\Y)\to\R$ by
\[
\mathcal{Q}_{\Ri}(\pi)\ \triangleq \ \inf_{c\in \Ci(\X\times\Y)}\ \int_{\X\times\Y} c\,\mathrm{d}\pi+\Ri(c).
\]
Then, for all $\pi\in \M^+(\X\times\Y)$,
\[
\mathcal{J}_{\bphi,\Ri}(\pi)\ =\ \mathcal{I}_{\bphi,\mathcal{Q}_{\Ri}}(\pi),
\]
and in particular the minimizers coincide and
\[
\mathsf{C}\Ri\textsf{UOT}_{\varepsilon}(\alpha,\beta)\ =\ \mathsf{U}\mathsf{O}\mathcal{Q}_{\Ri}\textsf{T}_{\bphi,\varepsilon}(\alpha,\beta).
\]
\end{prop}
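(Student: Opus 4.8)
The plan is to prove the pointwise identity $\mathcal{J}_{\bphi,\Ri}(\pi) = \mathcal{I}_{\bphi,\mathcal{Q}_{\Ri}}(\pi)$ for every fixed $\pi\in\M^+(\X\times\Y)$; once this is established, the equality of minimizers and of the optimal values $\mathsf{C}\Ri\textsf{UOT}_{\varepsilon} = \mathsf{U}\mathsf{O}\mathcal{Q}_{\Ri}\textsf{T}_{\bphi,\varepsilon}$ follows immediately by taking the infimum over $\pi$ on both sides, since the two functionals agree as functions on $\M^+(\X\times\Y)$. So the whole content is the pointwise identity, and that in turn is essentially a bookkeeping observation about where the cost variable $c$ enters.

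First I would fix $\pi\in\M^+(\X\times\Y)$ and write out $\mathcal{J}_{\bphi,\Ri}(\pi)$ explicitly. The key structural point is that the four terms $\D{\varphi_1}{\pi_1}{\alpha}$, $\D{\varphi_2}{\pi_2}{\beta}$, and $\varepsilon\,\D{\mathrm{KL}}{\pi}{\rho}$ do not depend on $c$ at all — they are constants with respect to the inner infimum over $c\in\Ci(\X\times\Y)$. Hence
\begin{align*}
\mathcal{J}_{\bphi,\Ri}(\pi) &= \inf_{c\in\Ci(\X\times\Y)}\Big\{\int c\,\mathrm{d}\pi + \D{\varphi_1}{\pi_1}{\alpha} + \D{\varphi_2}{\pi_2}{\beta} + \Ri(c) + \varepsilon\,\D{\mathrm{KL}}{\pi}{\rho}\Big\}\\
&= \Big(\inf_{c\in\Ci(\X\times\Y)}\Big\{\int c\,\mathrm{d}\pi + \Ri(c)\Big\}\Big) + \D{\varphi_1}{\pi_1}{\alpha} + \D{\varphi_2}{\pi_2}{\beta} + \varepsilon\,\D{\mathrm{KL}}{\pi}{\rho}.
\end{align*}
By the hypothesis that $\inf_{c}\{\int c\,\mathrm{d}\pi + \Ri(c)\}\in\R$ for every $\pi$, the pulled-out term is exactly the finite quantity $\mathcal{Q}_{\Ri}(\pi)$ as defined in the statement, so the right-hand side is precisely $\mathcal{I}_{\bphi,\mathcal{Q}_{\Ri}}(\pi)$. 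The only subtlety to address carefully is that pulling a finite constant out of an infimum is always valid ($\inf_c (F(c) + K) = (\inf_c F(c)) + K$ when $K\in\R$), and that the finiteness assumption is what prevents any $+\infty - \infty$ ambiguity from the divergence terms being $+\infty$ while the cost infimum is $-\infty$; I would remark that divergences are always $\geq 0$ so only the $\mathcal{Q}_{\Ri}(\pi)$ term could be negative, and it is finite by assumption, so the sum is unambiguously well-defined in $\R\cup\{+\infty\}$.

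I should also include the brief verification that $\mathcal{Q}_{\Ri}$ is genuinely concave and real-valued, as claimed: real-valuedness is the finiteness hypothesis, and concavity holds because $\mathcal{Q}_{\Ri}$ is a pointwise infimum, over $c$, of the maps $\pi\mapsto \int c\,\mathrm{d}\pi + \Ri(c)$, each of which is affine (indeed linear) in $\pi$, and an infimum of affine functions is concave. This makes $\mathcal{Q}_{\Ri}$ a legitimate input to Problem~\ref{app:prob-OQT}. I do not anticipate any real obstacle here — the proof is a one-line interchange of an infimum with an additive constant — but the one place to be vigilant is the well-definedness of all the $\inf$'s and the absence of indeterminate $\infty-\infty$ forms; making the finiteness hypothesis do its job cleanly is the ``hard'' (really, the only) part. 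Finally, taking $\inf_\pi$ of the established identity gives $\mathsf{C}\Ri\textsf{UOT}_\varepsilon(\alpha,\beta) = \inf_\pi \mathcal{J}_{\bphi,\Ri}(\pi) = \inf_\pi \mathcal{I}_{\bphi,\mathcal{Q}_{\Ri}}(\pi) = \mathsf{U}\mathsf{O}\mathcal{Q}_{\Ri}\textsf{T}_{\bphi,\varepsilon}(\alpha,\beta)$, and since the two objective functionals on $\M^+(\X\times\Y)$ coincide, a given $\pi$ minimizes one iff it minimizes the other, which is the asserted coincidence of minimizers.
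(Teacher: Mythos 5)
Your proposal is correct and follows the same route as the paper: the paper's proof simply notes that $\mathcal{Q}_{\Ri}$ is concave as an infimum of affine maps of $\pi$ and that the identity $\mathcal{J}_{\bphi,\Ri}=\mathcal{I}_{\bphi,\mathcal{Q}_{\Ri}}$ follows immediately from the definitions. You have merely spelled out the details (pulling the $c$-independent divergence terms out of the inner infimum and using the finiteness hypothesis to rule out $\infty-\infty$ ambiguities), which is a faithful expansion of the same one-line argument.
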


\begin{proof}
$\mathcal{Q}_{\Ri}$ is concave as an infimum of affine maps of $\pi$. The identity
$\mathcal{J}_{\bphi,\Ri}=\mathcal{I}_{\bphi,\mathcal{Q}_{\Ri}}$ follows immediately from the definitions,
hence the equality of values and minimizers.
\end{proof}

\begin{prop}\label{app:prop:R-from-OQT}
\textbf{(From $\mathcal{Q}$ to $\Ri$).}
Fix $\varepsilon\ge 0$. Let $\mathcal{Q}:\M^+(\X\times\Y)\to\R$ be concave and weakly upper semicontinuous.
Define $\bar{\mathcal{Q}}:\M(\X\times\Y)\to\R\cup\{-\infty\}$ by
\[
\bar{\mathcal{Q}}(\gamma)\ =\
\begin{cases}
\mathcal{Q}(\gamma), & \gamma\in \M^+(\X\times\Y),\\
-\infty, & \text{otherwise},
\end{cases}
\]
and the convex functional $\Ri_{\mathcal{Q}}:\Ci(\X\times\Y)\to\R\cup\{+\infty\}$ by
\[
\Ri_{\mathcal{Q}}(c)\ \triangleq \ (-\bar{\mathcal{Q}})^*(-c).
\]
Then, for all $\pi\in \M^+(\X\times\Y)$,
\[
\mathcal{J}_{\bphi,\Ri_{\mathcal{Q}}}(\pi)\ =\ \mathcal{I}_{\bphi,\mathcal{Q}}(\pi),
\]
and, in particular,
\[
\mathsf{UO}\mathcal{Q}\mathsf{T}_{\varepsilon}(\alpha,\beta)\ =\ 
\mathsf{C}\Ri_{\mathcal{Q}}\mathsf{UOT}_{\varepsilon}(\alpha,\beta),
\]
with the same set of minimizers.
\end{prop}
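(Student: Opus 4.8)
The plan is to recognize $\Ri_{\mathcal{Q}}$ as a Fenchel conjugate and to reduce the claim to the Fenchel--Moreau biconjugation theorem, carried out in the duality between $\Ci(\X\times\Y)$ and its topological dual $\M(\X\times\Y)$ (Riesz representation on a compact metric space). Concretely, the goal is to show that the ``round trip'' $\mathcal{Q}\mapsto\Ri_{\mathcal{Q}}\mapsto\mathcal{Q}_{\Ri_{\mathcal{Q}}}$ is the identity on $\M^+(\X\times\Y)$, where $\mathcal{Q}_{\Ri}(\pi):=\inf_{c\in\Ci(\X\times\Y)}\big(\int c\,\mathrm{d}\pi+\Ri(c)\big)$ is the concave functional of Proposition~\ref{app:prop:OQT-from-R}; the statement then follows by substituting this identity into the definitions of $\mathcal{J}_{\bphi,\Ri_{\mathcal{Q}}}$ and $\mathcal{I}_{\bphi,\mathcal{Q}}$.

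First I would unwind the conjugates. With the pairing $\langle c,\gamma\rangle=\int_{\X\times\Y}c\,\mathrm{d}\gamma$, the function $(-\bar{\mathcal{Q}})^*$ lives on $\Ci(\X\times\Y)$ and, since $\bar{\mathcal{Q}}=-\infty$ off the positive cone,
\[
\Ri_{\mathcal{Q}}(c)=(-\bar{\mathcal{Q}})^*(-c)=\sup_{\gamma\in\M^+(\X\times\Y)}\Big(\mathcal{Q}(\gamma)-\int_{\X\times\Y} c\,\mathrm{d}\gamma\Big).
\]
Next, for $\pi\in\M^+(\X\times\Y)$, substituting $c\mapsto -c$ in the infimum defining $\mathcal{Q}_{\Ri_{\mathcal{Q}}}$ gives
\[
\mathcal{Q}_{\Ri_{\mathcal{Q}}}(\pi)=\inf_{c\in\Ci(\X\times\Y)}\Big(\int_{\X\times\Y} c\,\mathrm{d}\pi+\Ri_{\mathcal{Q}}(c)\Big)=-\sup_{c\in\Ci(\X\times\Y)}\big(\langle c,\pi\rangle-(-\bar{\mathcal{Q}})^*(c)\big)=-(-\bar{\mathcal{Q}})^{**}(\pi),
\]
so everything reduces to identifying the biconjugate of $-\bar{\mathcal{Q}}$.

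Then I would verify the three hypotheses of Fenchel--Moreau for $-\bar{\mathcal{Q}}:\M(\X\times\Y)\to\R\cup\{+\infty\}$, regarding $\M(\X\times\Y)$ as $\Ci(\X\times\Y)^*$ with the weak-$*$ topology (which is exactly the weak topology of measures used in the statement): (i) convexity of $-\bar{\mathcal{Q}}$ is the concavity of $\mathcal{Q}$; (ii) properness holds because $\mathcal{Q}$ is real-valued on the nonempty set $\M^+(\X\times\Y)$ and $-\bar{\mathcal{Q}}$ is never $-\infty$; (iii) weak-$*$ lower semicontinuity holds because $\M^+(\X\times\Y)=\{\gamma:\langle f,\gamma\rangle\ge0\text{ for all }f\in\Ci(\X\times\Y),\,f\ge0\}$ is an intersection of weak-$*$ closed half-spaces, hence weak-$*$ closed, and $\mathcal{Q}$ is weakly upper semicontinuous on it, so $-\bar{\mathcal{Q}}$ is weak-$*$ lower semicontinuous even across the boundary of the cone. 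Consequently $(-\bar{\mathcal{Q}})^{**}=-\bar{\mathcal{Q}}$, and therefore $\mathcal{Q}_{\Ri_{\mathcal{Q}}}(\pi)=-(-\bar{\mathcal{Q}})(\pi)=\mathcal{Q}(\pi)$ for every $\pi\in\M^+(\X\times\Y)$. In particular $\mathcal{Q}_{\Ri_{\mathcal{Q}}}$ is real-valued, so the integrability hypothesis of Proposition~\ref{app:prop:OQT-from-R} is met with $\Ri=\Ri_{\mathcal{Q}}$; applying that proposition (equivalently, pulling the $c$-free divergence and entropic terms out of the infimum, which is licit since $\inf_c(\cdots)=\mathcal{Q}(\pi)\in\R$) yields $\mathcal{J}_{\bphi,\Ri_{\mathcal{Q}}}(\pi)=\mathcal{I}_{\bphi,\mathcal{Q}}(\pi)$ for all $\pi$, whence equality of the two infima and of their sets of minimizers.

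The main obstacle is conceptual rather than computational: one must set up the Legendre--Fenchel conjugation in the correct, non-reflexive pairing $(\Ci(\X\times\Y),\M(\X\times\Y))$ so that ``weak upper semicontinuity'' of $\mathcal{Q}$ is precisely the topological hypothesis required by Fenchel--Moreau, and one must keep careful track of the $\pm\infty$ bookkeeping introduced by the extension $\bar{\mathcal{Q}}$ — in particular checking that $-\bar{\mathcal{Q}}$ remains proper and lower semicontinuous after being collapsed to $+\infty$ outside the cone, and that the infimum/supremum defining the round trip never produces an indeterminate $(-\infty)+(+\infty)$. Once these points are settled, the argument is the standard biconjugation identity.
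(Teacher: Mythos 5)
Your proposal is correct and follows essentially the same route as the paper: both apply Fenchel--Moreau biconjugation to $-\bar{\mathcal{Q}}$ in the duality between $\Ci(\X\times\Y)$ and $\M(\X\times\Y)$ to recover $\mathcal{Q}(\pi)=\inf_c\big(\int c\,\mathrm{d}\pi+\Ri_{\mathcal{Q}}(c)\big)$ on the positive cone, and then add the $c$-independent divergence and entropic terms. Your version merely spells out the verification of properness, convexity, and weak-$*$ lower semicontinuity (via weak-$*$ closedness of $\M^+(\X\times\Y)$) that the paper asserts without detail, and routes the final identity through Proposition~\ref{app:prop:OQT-from-R}; both of these are faithful elaborations rather than a different argument.
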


\begin{proof}
Since $-\bar{\mathcal{Q}}$ is proper, convex and weakly lower semicontinuous on $\M(\X\times\Y)$,
Fenchel–Moreau yields
\[
\bar{\mathcal{Q}}(\pi)
= -(-\bar{\mathcal{Q}})^{**}(\pi)
= -\sup_{c\in \Ci(\X\times\Y)}\Big\{\textstyle\int c\,\mathrm{d}\pi - (-\bar{\mathcal{Q}})^*(c)\Big\}
= \inf_{c\in \Ci(\X\times\Y)}\Big\{\textstyle\int c\,\mathrm{d}\pi + (-\bar{\mathcal{Q}})^*(-c)\Big\}.
\]
Adding $\D{\varphi_1}{\pi_1}{\alpha}+\D{\varphi_2}{\pi_2}{\beta}+\varepsilon\,\D{\mathrm{KL}}{\pi}{\rho}$ on both sides gives
\[
\mathcal{I}_{\bphi,\mathcal{Q}}(\pi)
= \inf_{c\in \Ci(\X\times\Y)}\Big\{\textstyle\int c\,\mathrm{d}\pi + \Ri_{\mathcal{Q}}(c)\Big\}
+ \D{\varphi_1}{\pi_1}{\alpha}+\D{\varphi_2}{\pi_2}{\beta}+\varepsilon\,\D{\mathrm{KL}}{\pi}{\rho}
= \mathcal{J}_{\bphi,\Ri_{\mathcal{Q}}}(\pi).
\]
\end{proof}

\begin{oss}
\label{app:rmk:convex-analysis}
The conjugate is taken with respect to the duality $\langle \gamma,c\rangle=\int c\,\mathrm{d}\gamma$
between $\M^{+}(\X\times\Y)$ and $\Ci(\X\times\Y)$. Weak topologies are the ones induced by this pairing.
\end{oss}

\subsection{Proof of Theorem \ref{thm:exist}: Existence of minimizers for \texorpdfstring{$\mathsf{C}\Ri\mathsf{UOT}$}{\textsf{C}RUOT}}
For completeness we recall the definition of the class of cost-parametrized regularizers and the statement of Theorem \ref{thm:exist}.
\begin{defn}[Cost-Parametrized Regularizers]\label{cost-param-reg}
A convex function $\Ri \triangleq \Ci(\X\times \Y)\to [0,+\infty]$ is called \emph{cost-parametrized regularizer} if there exist $\F$ a compact subset of $\R^{d}$ and a family of costs $\{c_\theta\}_{\theta \in \F} \subset \mathcal{C}(\X\times \Y)$ s.t.
 \[\Ri(c) = \begin{cases}
        \tilde{\Ri}(\theta) & \text{if \(c=c_{\theta}\) for some \(\theta\in \F\)}\\
        +\infty & \text{otherwise,}
        \end{cases}\]
with $\tilde{\Ri} : \F \to [0,+\infty)$ a lower semicontinuous function.
\end{defn}

\begin{theorem}[Existence]\label{app:existence-RUOT}
Let $(\varphi_1, \varphi_2)$ be a pair of superlinear entropy functions satisfying \eqref{comp_cond} and $\varepsilon \geq 0$. Assume a cost-parametrized regularizer $\Ri$ as defined in Definition \ref{cost-param-reg} with $\{c_{\theta}\}_{\theta\in \F}$ a uniformly bounded from below family of continuous costs s.t. $c_{\theta_k}\to c_{\theta}$ uniformly whenever $\theta_k\to \theta$. Then the problem $\mathsf{C\Ri UOT}_{\varepsilon}(\alpha,\beta)$ admit at least one minimizer in $\F \times \M^{+}(\X\times \Y)$.

\end{theorem}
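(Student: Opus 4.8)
The plan is to run the direct method of the calculus of variations on the reduced problem. Since $\Ri(c)=+\infty$ unless $c=c_\theta$ for some $\theta\in\F$, minimizing over pairs $(\pi,c)$ collapses to minimizing over $\F\times\M^+(\X\times\Y)$ the functional
\[
\mathcal{G}(\theta,\pi)\ \triangleq\ \int_{\X\times\Y} c_\theta\,\mathrm{d}\pi \;+\; \D{\varphi_1}{\pi_1}{\alpha} \;+\; \D{\varphi_2}{\pi_2}{\beta} \;+\; \tilde\Ri(\theta) \;+\; \varepsilon\,\D{\mathrm{KL}}{\pi}{\alpha\otimes\beta}.
\]
First I would check finiteness of the infimum: by \eqref{comp_cond} pick $t\in(m(\alpha)\mathrm{dom}(\varphi_1))\cap(m(\beta)\mathrm{dom}(\varphi_2))$ and set $\pi_0=\tfrac{t}{m(\alpha)m(\beta)}\,\alpha\otimes\beta$; its marginals $\tfrac{t}{m(\alpha)}\alpha$ and $\tfrac{t}{m(\beta)}\beta$ have finite $\varphi_i$-divergence against $\alpha,\beta$, the term $\D{\mathrm{KL}}{\pi_0}{\alpha\otimes\beta}$ is finite because $\varphi_{\mathrm{KL}}$ is everywhere finite, and for any $\theta_0\in\F$ the cost term is finite ($c_{\theta_0}$ continuous on the compact $\X\times\Y$) and $\tilde\Ri(\theta_0)<+\infty$. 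Thus $\mathcal{G}(\theta_0,\pi_0)<+\infty$, and we may fix a minimizing sequence $(\theta_k,\pi_k)_k$ with $\mathcal{G}(\theta_k,\pi_k)\le M<+\infty$.

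The second step is compactness. Since $\F$ is compact, up to a subsequence $\theta_k\to\theta_*\in\F$. For the plans I extract a uniform total-mass bound from superlinearity: discarding the nonnegative terms $\tilde\Ri(\theta_k)$, $\D{\varphi_2}{\pi_{k,2}}{\beta}$ and $\varepsilon\,\D{\mathrm{KL}}{\pi_k}{\alpha\otimes\beta}$, and using the uniform lower bound $c_\theta\ge -c_0$ (with $c_0\ge 0$) on the cost family,
\[
M\ \ge\ \int c_{\theta_k}\,\mathrm{d}\pi_k \;+\; \D{\varphi_1}{\pi_{k,1}}{\alpha}\ \ge\ -c_0\,m(\pi_k)\;+\;\D{\varphi_1}{\pi_{k,1}}{\alpha}.
\]
Superlinearity of $\varphi_1$ forces $\pi_{k,1}\ll\alpha$ (otherwise the divergence is $+\infty$) and, for every $C>0$, gives a constant $A_C$ with $\varphi_1(s)\ge Cs-A_C$ for all $s\ge0$, whence $\D{\varphi_1}{\pi_{k,1}}{\alpha}\ge C\,m(\pi_k)-A_C\,m(\alpha)$; plugging this into the display and taking $C=c_0+1$ yields $m(\pi_k)\le M+A_C\,m(\alpha)$, a bound independent of $k$. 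Because $\X\times\Y$ is compact metric, $\Ci(\X\times\Y)$ is separable, so mass-bounded subsets of $\M^+(\X\times\Y)=\Ci(\X\times\Y)^*$ are sequentially weak-$*$ compact; hence along a further subsequence $\pi_k\wto\pi_*\in\M^+(\X\times\Y)$.

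The third step is lower semicontinuity of $\mathcal{G}$ along $(\theta_k,\pi_k)\to(\theta_*,\pi_*)$, term by term. The cost term actually converges: from
\[
\Bigl|\int c_{\theta_k}\,\mathrm{d}\pi_k-\int c_{\theta_*}\,\mathrm{d}\pi_*\Bigr|\ \le\ \lVert c_{\theta_k}-c_{\theta_*}\rVert_{\infty}\, m(\pi_k)\;+\;\Bigl|\int c_{\theta_*}\,\mathrm{d}(\pi_k-\pi_*)\Bigr|,
\]
the first summand vanishes by the uniform convergence $c_{\theta_k}\to c_{\theta_*}$ and the uniform mass bound, the second by $\pi_k\wto\pi_*$ and $c_{\theta_*}\in\Ci(\X\times\Y)$. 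The marginal maps $\pi\mapsto\pi_i$ are weak-$*$ continuous (push-forward along a continuous projection of a compact space), and $\mu\mapsto\D{\varphi_i}{\mu}{\alpha}$, $\pi\mapsto\D{\mathrm{KL}}{\pi}{\alpha\otimes\beta}$ are weak-$*$ lower semicontinuous, since $\varphi$-divergences with fixed second argument are weak-$*$ lsc for every entropy function \citep{liero2018optimal}; thus $\liminf_k\D{\varphi_i}{\pi_{k,i}}{\alpha}\ge\D{\varphi_i}{\pi_{*,i}}{\alpha}$ and likewise for the KL term. Finally $\tilde\Ri$ is lsc on $\F$, so $\liminf_k\tilde\Ri(\theta_k)\ge\tilde\Ri(\theta_*)=\Ri(c_{\theta_*})$. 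Combining, $\mathsf{C\Ri UOT}_{\varepsilon}(\alpha,\beta)=\lim_k\mathcal{G}(\theta_k,\pi_k)\ge\mathcal{G}(\theta_*,\pi_*)\ge\mathsf{C\Ri UOT}_{\varepsilon}(\alpha,\beta)$, so $(\theta_*,\pi_*)$ — equivalently $(c_{\theta_*},\pi_*)$ — is a minimizer. When $\varepsilon=0$ the entropic term is simply absent and every step above is unchanged.

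The main obstacle I anticipate is the uniform total-mass bound: one must verify that the superlinear growth of $\varphi_1$ genuinely dominates the possibly negative contribution of the cost term along the minimizing sequence — this is precisely why superlinearity of the entropy functions (not mere coercivity) is assumed — and one must track the joint convergence $(\theta_k,\pi_k)$ so that the bilinear term $\int c_{\theta_k}\,\mathrm{d}\pi_k$ passes to the limit, which is where the uniform continuity of $\theta\mapsto c_\theta$ enters. The weak-$*$ lower semicontinuity of $\varphi$-divergences is classical and can be quoted from \citep{liero2018optimal}.
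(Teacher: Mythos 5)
Your proof is correct and follows essentially the same route as the paper's: the direct method with coercivity in the total mass coming from superlinearity of the entropies against the uniform lower bound on the costs, weak-$*$ compactness of mass-bounded sets, and term-by-term lower semicontinuity with the bilinear cost term handled via uniform convergence $c_{\theta_k}\to c_{\theta_*}$ plus the mass bound. The only cosmetic differences are that you derive the mass bound from the affine minorant $\varphi_1(s)\ge Cs-A_C$ whereas the paper uses Jensen's inequality, and you explicitly verify finiteness of the infimum from \eqref{comp_cond}, a step the paper leaves implicit.
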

\begin{proof}

Consider
\[
J(\theta,\pi)\;=\;\int_{\X\times\Y} c_\theta\,d\pi \;+\; \D{\varphi_1}{\pi_1}{\alpha}
\;+\; \D{\varphi_2}{\pi_2}{\beta} + \varepsilon\,\mathrm{KL}(\pi\,\|\,\rho)\;+\;\tilde{\mathcal R}(\theta),
\]
where $\rho$ is the reference measure in $\X\times \Y$ and $\rho = \alpha \times \beta$.
Using the facts that $c_\theta\ge L$ and the convexity of $\varphi_i$, one gets the standard mass–coercivity bound
\[
J(\theta,\pi)\geq m(\pi)\left(
L + \frac{m(\alpha)}{m(\pi)}\varphi_1\Big(\frac{m(\pi)}{m(\alpha)}\Big)
   + \frac{m(\beta)}{m(\pi)}\varphi_2 \Big(\frac{m(\pi)}{m(\beta)}\Big)
\right) 
,
\]
which tends to $+\infty$ uniformly in $\theta$ when $m(\pi)\to\infty$, since
$\varphi_i$ are superlinear. Hence, the minimizers lie in
\[
\mathcal A \; \triangleq  \; \F \times \mathcal B_R^+,
\qquad
\mathcal B_R^+ \triangleq \{\pi\in\M^+(\X\times\Y): m(\pi)\le R\},
\]
for some $R>0$. Note that $\mathcal B_R^+$ is weakly compact by weak closedness and Banach-Alaoglou theorem.

So
$\mathcal A$ is compact for the product topology $\tau\triangleq\tau_{\mathrm{eucl}}\times\tau_{\mathrm{weak}}$, where $\tau_{\mathrm{eucl}}$ is the Euclidean topology on $\mathcal{F}$ and $\tau_{\mathrm{weak}}$ is the weak topology in $\mathcal{M}^{+}(\X \times \Y).$

It remains to show $J$ is $\tau$–l.s.c. Take a sequence $(\theta_{k}, \pi^{k})_{k}\subset \F \times \M^{+} (\X \times \Y) $ such that $(\theta_k,\pi^k)\to (\theta,\pi)$ in $\tau$.
Then $m(\pi^k)$ is bounded, $\pi^k$ weakly converges to \(\pi\), and $\pi_i^k$ weakly converges to $\pi_i$.
By the uniform convergence $c_{\theta_k}\to c_\theta$ and the boundedness of the masses $m(\pi^k)$,
\begin{align*}
\liminf_{k\to\infty}\int c_{\theta_k}\,d\pi^k
&\ge \liminf_{k\to\infty}\left(
-\norm{c_{\theta_k}-c_\theta}_\infty\,m(\pi^k) + \int c_\theta\,d\pi^k\right)
= \int c_\theta\,d\pi.
\end{align*}
The mappings $\pi\mapsto \D{\varphi_i}{\pi_i}{\cdot}$ are weakly l.s.c. in the marginals.
Also, $\mathrm{KL}(\cdot\,\|\,\rho)$ is weakly l.s.c. on compact metric spaces.
Therefore $J$ is l.s.c. on $\mathcal A$, and by Weierstrass theorem there exists a minimizer of $J$ on $\mathcal A$, see \citep[Box 1.1]{santambrogio2015optimal}.
Hence, $\mathrm{CRUOT}_\varepsilon(\alpha,\beta)$ admits at least one minimizer in $\F \times \M^{+}(\X\times \Y)$.

\end{proof}

\subsection{Proof of Theorem \ref{thm:conv-entr-minimizers}: Convergence of Entropic Minimizers}
For completeness, we restate Theorem \ref{thm:conv-entr-minimizers}.
\begin{theorem}
Let $\varepsilon_n\to 0$ and suppose that the assumptions of Theorem \ref{thm:exist} hold with $\varphi_1,\varphi_2$ superlinear strictly convex satisfying \eqref{comp_cond_strong} or $\varphi_1 = \varphi_2 = \iota_{\{1\}}$ satisfying \eqref{comp_cond}. 
Then the following hold.
\begin{enumerate}
    \item \(\mathsf{C\Ri UOT}_{\varepsilon_n}(\alpha,\beta) \stackrel{n\to +\infty}{\longrightarrow}\mathsf{C\Ri UOT}(\alpha,\beta)\).
    \item Consider a sequence \((\theta_*^{\varepsilon_n},\pi_*^{\varepsilon_n})_{n\in\N}\subset \F\times \M^+(\X\times \Y)\) s.t. \((\theta_*^{\varepsilon_n},\pi_*^{\varepsilon_n})\) minimizes \(\mathsf{C\Ri UOT}_{\varepsilon_n}(\alpha,\beta)\) for every \(n\in\N\). There exists a subsequence \((\theta_*^{\varepsilon_{n_k}},\pi_*^{\varepsilon_{n_k}})_{k\in\N}\) s.t.
    \[\theta_*^{\varepsilon_{n_k}}\to \theta_*, \qquad \pi_*^{\varepsilon_{n_k}} \wto \pi_*,\] where \((\theta_*,\pi_*)\) is optimal for \(\mathsf{C\Ri UOT}(\alpha,\beta)\).
\end{enumerate}
\end{theorem}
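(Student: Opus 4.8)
The plan is to read the statement as a $\Gamma$-convergence-type result and to prove the first claim first, then deduce the second. Write $J_\varepsilon(\theta,\pi)$ for the objective functional appearing in $\mathsf{C\Ri UOT}_\varepsilon(\alpha,\beta)$ and $J_0$ for the one in $\mathsf{C\Ri UOT}(\alpha,\beta)$. The lower bound $\liminf_n \mathsf{C\Ri UOT}_{\varepsilon_n}(\alpha,\beta)\ge\mathsf{C\Ri UOT}(\alpha,\beta)$ is immediate: dropping the nonnegative term $\varepsilon_n\,\D{\mathrm{KL}}{\pi}{\rho}$ only lowers the objective, so $J_{\varepsilon_n}\ge J_0$ pointwise and hence $\mathsf{C\Ri UOT}_{\varepsilon_n}(\alpha,\beta)\ge\mathsf{C\Ri UOT}(\alpha,\beta)$ for every $n$. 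All the work in the first claim is therefore the matching upper bound, i.e.\ the construction of a recovery sequence.

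For that I would isolate the following approximability claim: for every $(\theta,\pi)\in\F\times\M^+(\X\times\Y)$ with finite value $J_0(\theta,\pi)$ and every $\eta>0$, there is $\pi'\in\M^+(\X\times\Y)$ with $\D{\mathrm{KL}}{\pi'}{\rho}<+\infty$ and
\[
\int_{\X\times\Y} c_\theta \,\d\pi' + \D{\varphi_1}{\pi'_1}{\alpha} + \D{\varphi_2}{\pi'_2}{\beta} \;\le\; \int_{\X\times\Y} c_\theta \,\d\pi + \D{\varphi_1}{\pi_1}{\alpha} + \D{\varphi_2}{\pi_2}{\beta} + \eta .
\]
Granting this, take an optimal $(\theta_*,\pi_*)$ for $\mathsf{C\Ri UOT}(\alpha,\beta)$, which exists by Theorem~\ref{thm:exist} with $\varepsilon=0$ (so $\tilde{\Ri}(\theta_*)<+\infty$), and for each $\eta>0$ the corresponding $\pi'_\eta$. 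Then $\mathsf{C\Ri UOT}_{\varepsilon_n}(\alpha,\beta)\le J_0(\theta_*,\pi'_\eta)+\varepsilon_n\,\D{\mathrm{KL}}{\pi'_\eta}{\rho}$; since $\D{\mathrm{KL}}{\pi'_\eta}{\rho}$ is a fixed finite number, letting $n\to\infty$ gives $\limsup_n\mathsf{C\Ri UOT}_{\varepsilon_n}(\alpha,\beta)\le J_0(\theta_*,\pi'_\eta)\le\mathsf{C\Ri UOT}(\alpha,\beta)+\eta$, and then $\eta\to0$. Note the cost-regularizer term $\tilde{\Ri}(\theta_*)$ is never touched, which is why keeping $\theta_*$ frozen in the recovery sequence (and $\F$ compact) costs nothing.

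The approximability claim I would establish by a block (piecewise-product) approximation in the spirit of Carlier–Duval–Peyré–Schmitzer and its unbalanced refinement by Séjourné et al. In the balanced case $\varphi_1=\varphi_2=\iota_{\{1\}}$ this is clean: $\pi$ has marginals $\alpha,\beta$, the block approximation $\pi^\delta$ associated with partitions of $\X$ and $\Y$ into cells of diameter $\le\delta$ again has marginals exactly $\alpha,\beta$, is absolutely continuous w.r.t.\ $\rho=\alpha\otimes\beta$ with finite relative entropy for each $\delta>0$, and $\pi^\delta\wto\pi$, so $\int c_\theta\,\d\pi^\delta\to\int c_\theta\,\d\pi$ by continuity of $c_\theta$ on the compact $\X\times\Y$ while the $\varphi_i$-terms remain $0$. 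In the superlinear case, finiteness of $\D{\varphi_i}{\pi_i}{\cdot}$ together with $\varphi'_{i,\infty}=+\infty$ forces $\pi_1\ll\alpha$ and $\pi_2\ll\beta$, so again $\pi^\delta\ll\rho$ with marginals exactly $\pi_1,\pi_2$, and only the cost term moves. The genuinely delicate point — and the main obstacle — is that $\D{\varphi_i}{\pi_i}{\cdot}<+\infty$ does not by itself give $\D{\mathrm{KL}}{\pi_i}{\cdot}<+\infty$ (a superlinear entropy may grow strictly slower than $x\log x$), so $\D{\mathrm{KL}}{\pi^\delta}{\rho}$ can be infinite. I would repair this by a further regularization of the joint density $h^\delta=\mathrm{d}\pi^\delta/\mathrm{d}\rho$, e.g.\ replacing $h^\delta$ by $(1-\lambda)\min(h^\delta,N)+\lambda$ so that the density is bounded and bounded away from $0$ and the relative entropy is finite; as $N\to+\infty$ and $\lambda\to0$ the perturbed plan monotonically approaches $\pi^\delta$, the transport cost converges, and convexity of $\varphi_i$ together with lower semicontinuity control the change in the $\varphi_i$-divergences of the perturbed marginals, so everything can be absorbed into the slack $\eta$ after a routine diagonal extraction over $\delta,N,\lambda$. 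The structural facts $\varphi_i\ge0$, $\varphi_i(1)=0$, $\varphi_i$ convex (hence nonincreasing on $[0,1]$, nondecreasing on $[1,+\infty)$) are exactly what makes these monotone/dominated-convergence steps go through.

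For the second claim: since $\mathsf{C\Ri UOT}_{\varepsilon_n}(\alpha,\beta)\to\mathsf{C\Ri UOT}(\alpha,\beta)<+\infty$ by the first part, the values $J_{\varepsilon_n}(\theta_*^{\varepsilon_n},\pi_*^{\varepsilon_n})$ are uniformly bounded, and the mass-coercivity estimate from the proof of Theorem~\ref{thm:exist} — which uses only $c_\theta\ge L$ and superlinearity of $\varphi_i$, and is uniform in $\varepsilon$ — confines $(\theta_*^{\varepsilon_n},\pi_*^{\varepsilon_n})$ to the $\tau$-compact set $\F\times\mathcal{B}_R^+$ for a single $R$. Extract a subsequence with $\theta_*^{\varepsilon_{n_k}}\to\theta_*$ and $\pi_*^{\varepsilon_{n_k}}\wto\pi_*$. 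Using $J_0\le J_\varepsilon$ and the $\tau$-lower semicontinuity of $J_0$ (established componentwise in the proof of Theorem~\ref{thm:exist}),
\[
\begin{aligned}
J_0(\theta_*,\pi_*) &\le \liminf_k J_0\big(\theta_*^{\varepsilon_{n_k}},\pi_*^{\varepsilon_{n_k}}\big)\\
&\le \liminf_k J_{\varepsilon_{n_k}}\big(\theta_*^{\varepsilon_{n_k}},\pi_*^{\varepsilon_{n_k}}\big) = \mathsf{C\Ri UOT}(\alpha,\beta),
\end{aligned}
\]
and since $J_0(\theta_*,\pi_*)\ge\mathsf{C\Ri UOT}(\alpha,\beta)$ by definition of the infimum, $(\theta_*,\pi_*)$ is optimal for $\mathsf{C\Ri UOT}(\alpha,\beta)$, as claimed.
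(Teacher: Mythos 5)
Your overall architecture is correct and, for most of the argument, coincides with the paper's: the easy inequality $J_{\varepsilon_n}\ge J_0$, the sandwich $\mathsf{C\Ri UOT}(\alpha,\beta)\le \mathsf{C\Ri UOT}_{\varepsilon_n}(\alpha,\beta)\le J_0(\theta_*,\pi')+\varepsilon_n\,\D{\mathrm{KL}}{\pi'}{\rho}$ applied to a finite-KL near-minimizer (this is exactly the paper's Lemma~\ref{lemma:conv_RUOT}), and your part~(2) (uniform mass bound from the coercivity estimate of Theorem~\ref{thm:exist}, compactness of $\F\times\mathcal B_R^+$, lower semicontinuity, then comparison with the converged values) is the paper's argument verbatim. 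Where you genuinely diverge is in how you manufacture a recovery plan with finite KL divergence. You correctly identify the crux: the block approximation of Lemma~\ref{lemma:block} has bounded density only with respect to $\pi_1\otimes\pi_2$, and finiteness of $\D{\varphi_i}{\pi_i}{\cdot}$ for a general superlinear $\varphi_i$ does not give $\D{\mathrm{KL}}{\pi_i}{\cdot}<+\infty$. The paper resolves this via Fenchel--Kantorovich duality (Lemma~\ref{lemma:FKdual}): for strictly convex superlinear entropies the optimal marginals are $\alpha_*=(\varphi_1^*)'(-f_*)\,\alpha$ and $\beta_*=(\varphi_2^*)'(-g_*)\,\beta$ with \emph{bounded} densities, so block-approximating inside $\Pi(\alpha_*,\beta_*)$ leaves the $\varphi_i$-terms literally unchanged and makes $\mathrm{d}\pi^\eta/\mathrm{d}(\alpha\otimes\beta)$ bounded, hence finite KL, with no further perturbation. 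You instead propose truncating and mixing the joint density, $(1-\lambda)\min(h^\delta,N)+\lambda$. This can be made to work, but it is the weakest point of your write-up: the perturbation moves the marginals, so you must bound $\D{\varphi_i}{\pi'_i}{\cdot}$ from \emph{above} by $\D{\varphi_i}{\pi_i}{\cdot}+o(1)$, and invoking lower semicontinuity there points in the wrong direction; what actually does the job is the convexity inequality $\varphi_i((1-\lambda)t+\lambda s)\le(1-\lambda)\varphi_i(t)+\lambda\varphi_i(s)$ together with a dominated-convergence argument as $N\to+\infty$, and one must also handle $m(\beta)\ne 1$ and the possibility $\varphi_i(0)=+\infty$ when the truncation drags a marginal density below $1$. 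The duality route buys you a cleaner proof precisely because the marginals are frozen; your route buys self-containedness (no appeal to existence and boundedness of dual optimizers) at the price of this extra, nontrivial bookkeeping, which you should spell out rather than label routine.
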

In order to prove Theorem \ref{thm:conv-entr-minimizers} we need the following results. First Lemma gives us convergence of values. The last two Lemmas are well-known results in the literature of unbalanced optimal transport problems about optimal marginals \cite{liero2018optimal}.

\begin{lemma}\label{lemma:conv_RUOT}
Suppose the same assumptions as in Theorem \ref{thm:conv-entr-minimizers} hold. Let $(\varepsilon_n)_{n\in\N}\subset(0,+\infty)$ with $\varepsilon_n\to0$.
Assume that there exists a sequence $(\eta_j)_{j\in\N}\subset(0,+\infty)$ with $\eta_j\to0$ such that for every $j\in\N$
there exists $\pi^j\in\M^+(\X\times\Y)$, $\theta_j\in \F$ with $\D{\mathrm{KL}}{\pi^j}{\alpha\otimes\beta}<+\infty$ and
\[
    \int_{\X\times\Y} c_{\theta_j}\,\mathrm{d}\pi^j
+\D{\varphi_1}{(\pi^j)_1}{\alpha}
+\D{\varphi_2}{(\pi^j)_2}{\beta}
+\tilde{\mathcal R}(\theta_j)
\le \mathrm{C}\Ri \mathrm{UOT}(\alpha,\beta)+\eta_j.
\]
Then $\mathrm{C}\Ri\mathsf{UOT}_{\varepsilon_n}(\alpha,\beta)\to \mathrm{C}\Ri\mathsf{UOT}(\alpha,\beta)$ as $n\to\infty$.
\end{lemma}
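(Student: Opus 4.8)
The plan is to establish the two-sided estimate
\[
\cosre \;\le\; \liminf_{n\to\infty}\cosree \;\le\; \limsup_{n\to\infty}\cosree \;\le\; \cosre,
\]
which immediately forces the sequence $\cosree$ to converge to $\cosre$. So the whole argument reduces to proving the leftmost and rightmost inequalities separately.

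\textbf{Lower bound.} First I would observe that for every $\varepsilon_n>0$ the entropic term $\varepsilon_n\D{\mathrm{KL}}{\pi}{\alpha\otimes\beta}$ is nonnegative, so the objective defining $\cosree$ dominates, pointwise in $(\pi,\theta)\in\M^+(\X\times\Y)\times\F$, the objective defining $\cosre$. Taking the infimum over this common feasible set gives $\cosre\le\cosree$ for every $n$, hence $\cosre\le\liminf_{n\to\infty}\cosree$. No structural hypothesis is needed here.

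\textbf{Upper bound.} Next, for a fixed $j\in\N$ I would use the competitor $(\pi^j,\theta_j)$ furnished by the hypothesis as an admissible point in the variational problem defining $\cosree$, which yields
\[
\cosree \;\le\; \int_{\X\times\Y} c_{\theta_j}\,\mathrm{d}\pi^j + \D{\varphi_1}{(\pi^j)_1}{\alpha} + \D{\varphi_2}{(\pi^j)_2}{\beta} + \tilde{\Ri}(\theta_j) + \varepsilon_n\,\D{\mathrm{KL}}{\pi^j}{\alpha\otimes\beta}.
\]
By the hypothesis the first four terms sum to at most $\cosre+\eta_j$, and $\D{\mathrm{KL}}{\pi^j}{\alpha\otimes\beta}$ is a finite constant independent of $n$; letting $n\to\infty$ the last term vanishes, so $\limsup_{n\to\infty}\cosree\le\cosre+\eta_j$. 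Since this holds for every $j$ and $\eta_j\to0$, sending $j\to\infty$ gives $\limsup_{n\to\infty}\cosree\le\cosre$. Chaining with the lower bound completes the proof.

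\textbf{Main obstacle.} There is essentially no analytic difficulty internal to this lemma: no compactness or lower semicontinuity of the full objective is invoked, and the structural assumptions on $\Ri$ and on $\varphi_1,\varphi_2$ do not enter. The only point needing care is the order of the double limit — one must send $n\to\infty$ for a fixed recovery index $j$ and only afterwards send $j\to\infty$, which is legitimate precisely because $\D{\mathrm{KL}}{\pi^j}{\alpha\otimes\beta}<+\infty$. The substantive work lies in verifying the hypothesis itself, namely producing a recovery sequence $(\pi^j,\theta_j)$ with finite KL divergence whose unregularized objective approaches $\cosre$ up to $\eta_j$; this is where strict convexity and superlinearity of the $\varphi_i$ under \eqref{comp_cond_strong} (or the hard marginal constraint under \eqref{comp_cond}) is actually used, and it is carried out in the companion results feeding into Theorem~\ref{thm:conv-entr-minimizers}.
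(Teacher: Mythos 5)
Your argument is correct and is essentially identical to the paper's proof: the lower bound follows from nonnegativity of the entropic penalty, and the upper bound from plugging in the competitor $(\pi^j,\theta_j)$, using finiteness of its KL divergence to send $n\to\infty$ before $j\to\infty$. Your closing remark that the real work lies in constructing the recovery sequence (done elsewhere, in the proof of Theorem~\ref{thm:conv-entr-minimizers} via the block approximation and duality lemmas) accurately reflects the paper's structure.
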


\begin{proof}
For each $n$ we have
\begin{align*}
    \cosre
\le \cosree
\le &\int_{\X\times\Y} c_{\theta_j}\,\mathrm{d}\pi^j
+\D{\varphi_1}{(\pi^j)_1}{\alpha}
+\D{\varphi_2}{(\pi^j)_2}{\beta}\\
&+\tilde{\mathcal R}(\theta_j)
+\varepsilon_n\,\D{\mathrm{KL}}{\pi^j}{\alpha\otimes\beta}.
\end{align*}
Hence
\[
\cosre
\le \liminf_{n\to\infty}\cosree
\le \limsup_{n\to\infty}\cosree
\le \cosre+\eta_j.
\]
Letting $j\to\infty$ gives the claim.
\end{proof}

\begin{lemma}[Fenchel--Kantorovich duality and optimal marginals, \citep{liero2018optimal, eyring2023unbalancedness}]
\label{lemma:FKdual}
Let $\varphi_1,\varphi_2:[0,+\infty)\to[0,+\infty]$ be proper l.s.c.\ strictly convex entropy functions, and $c\in\Ci(\X\times\Y)$. 
Consider the unbalanced optimal transport problem
\[
\mathsf{UOT}^c(\alpha,\beta)
= \inf_{\pi\in \M^+(\X\times \Y)}\int c\,\mathrm{d}\pi
+ \mathrm{D}_{\varphi_1}(\pi_1|\alpha)
+ \mathrm{D}_{\varphi_2}(\pi_2|\beta).
\]
Then its Fenchel--Kantorovich dual reads
\[
\mathsf{D}^c(\alpha,\beta)
= \sup
\left\{
-\int \varphi_1^*(-f)\,\mathrm{d}\alpha
-\int \varphi_2^*(-g)\,\mathrm{d}\beta
\,\middle|\,
(f,g)\in\Ci(\X)\times \Ci(\Y), \, f(x)+g(y)\leq c(x,y)
\right\}.
\]
If $(\pi_*,f_*,g_*)$ are optimal for the primal and dual problems, 
then the optimal marginals satisfy
\[
\frac{\mathrm{d}\pi_{*,1}}{\mathrm{d}\alpha}
= (\varphi_1^*)'(-f_*),
\qquad
\frac{\mathrm{d}\pi_{*,2}}{\mathrm{d}\beta}
= (\varphi_2^*)'(-g_*).
\]
Equivalently,
\[
\alpha_* = (\varphi_1^*)'(-f_*)\,\alpha,
\qquad
\beta_*  = (\varphi_2^*)'(-g_*)\,\beta.
\]
\end{lemma}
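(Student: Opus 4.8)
The statement is the classical Fenchel--Kantorovich duality for unbalanced optimal transport together with the resulting characterization of the optimal marginals; see \citep[Sec.~4]{liero2018optimal} and \citep{eyring2023unbalancedness}. The plan is to obtain it from (a) the variational representation of $\varphi$-divergences, (b) a minimax/Fenchel--Rockafellar exchange, and (c) first-order optimality read through the pointwise Fenchel equality. First I would record the variational formula $\D{\varphi}{\mu}{\nu}=\sup_{\phi\in\Ci(Z)}\big\{\int_Z\phi\,\mathrm{d}\mu-\int_Z\varphi^*(\phi)\,\mathrm{d}\nu\big\}$, valid for $\mu,\nu\in\M^+(Z)$ and any proper l.s.c.\ entropy $\varphi$, where $\varphi^*(s)=\sup_{t\ge0}(st-\varphi(t))$ has effective domain contained in $(-\infty,\varphi'_\infty]$; the recession constant $\varphi'_\infty$ in the definition of $\D{\varphi}{\cdot}{\cdot}$ is precisely what makes this identity handle the singular part of $\mu$, and it is harmless to take the supremum over continuous (rather than Borel) $\phi$ since $\varphi^*$ is nondecreasing and bounded on the relevant range and densities can be approximated in $L^1(\nu)$.

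Next I would plug this formula into both marginal terms of $\mathsf{UOT}^c(\alpha,\beta)$, rewrite the marginal pairings as $\int u\,\mathrm{d}\pi_1=\int u(x)\,\mathrm{d}\pi(x,y)$ and $\int v\,\mathrm{d}\pi_2=\int v(y)\,\mathrm{d}\pi(x,y)$, and exchange $\inf_{\pi\in\M^+(\X\times\Y)}$ with $\sup_{(u,v)\in\Ci(\X)\times\Ci(\Y)}$. This exchange is the point where care is needed: I would justify it by Fenchel--Rockafellar in the pairing $(\Ci(\X)\times\Ci(\Y),\,\M(\X)\times\M(\Y))$, the constraint qualification holding because \eqref{comp_cond} (resp.\ the stronger \eqref{comp_cond_strong}) provides a feasible plan and $c\in\Ci(\X\times\Y)$ makes the associated perturbation function finite and continuous at $0$; alternatively, one restricts $\pi$ to the weakly compact ball $\{m(\pi)\le R\}$ coming from the mass--coercivity bound in the proof of Theorem \ref{app:existence-RUOT} and invokes Sion's minimax theorem. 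After the exchange the inner problem is $\inf_{\pi\ge0}\int(c+u\oplus v)\,\mathrm{d}\pi$, which equals $0$ when $u\oplus v\ge -c$ pointwise and $-\infty$ otherwise; substituting $f=-u$, $g=-v$ yields exactly $\mathsf{D}^c(\alpha,\beta)$, hence $\mathsf{UOT}^c(\alpha,\beta)=\mathsf{D}^c(\alpha,\beta)$.

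For attainment of the dual supremum inside $\Ci(\X)\times\Ci(\Y)$ I would run the usual $c$-transform argument: from a maximizing sequence, replace $f$ by $\bar f(x)=\inf_y\big(c(x,y)-g(y)\big)$ and then $g$ by $\bar g(y)=\inf_x\big(c(x,y)-\bar f(x)\big)$, which are continuous with the modulus of continuity of $c$ on the compact product and do not worsen the objective since $s\mapsto-\varphi_i^*(-s)$ is nondecreasing; a residual additive normalization plus Arzel\`a--Ascoli then gives a uniformly convergent subsequence, and lower semicontinuity of $\varphi_i^*$ with Fatou passes to the limit, producing an optimal $(f_*,g_*)$. Finally, given a primal--dual optimal triple $(\pi_*,f_*,g_*)$, complementary slackness $\int(c+(-f_*)\oplus(-g_*))\,\mathrm{d}\pi_*=0$ combined with the variational inequalities from the first step forces equality $\D{\varphi_1}{\pi_{*,1}}{\alpha}=\int(-f_*)\,\mathrm{d}\pi_{*,1}-\int\varphi_1^*(-f_*)\,\mathrm{d}\alpha$ (and analogously for $\beta,g_*$); equality here means the pointwise Fenchel equality $\varphi_1(\sigma_*)+\varphi_1^*(-f_*)=-f_*\sigma_*$ holds $\alpha$-a.e.\ for $\sigma_*=\mathrm{d}\pi_{*,1}/\mathrm{d}\alpha$ (and, when $\varphi_1$ is superlinear, that there is no singular part), so $-f_*\in\partial\varphi_1(\sigma_*)$, equivalently $\sigma_*\in\partial\varphi_1^*(-f_*)$; strict convexity of $\varphi_1$ makes $\varphi_1^*$ differentiable, which pins down $\mathrm{d}\pi_{*,1}/\mathrm{d}\alpha=(\varphi_1^*)'(-f_*)$, and symmetrically $\mathrm{d}\pi_{*,2}/\mathrm{d}\beta=(\varphi_2^*)'(-g_*)$.

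The main obstacle is the rigorous minimax/strong-duality step without a priori compactness in $\pi$: this is exactly where the compatibility conditions \eqref{comp_cond}--\eqref{comp_cond_strong} and continuity of $c$ enter, and it is cleanest to quote \citep[Sec.~4]{liero2018optimal} rather than reprove it from scratch. A secondary technical point is ensuring that $-f_*$ (resp.\ $-g_*$) takes values in the interior of $\mathrm{dom}\,\varphi_1^*$ (resp.\ $\mathrm{dom}\,\varphi_2^*$), so that $(\varphi_i^*)'(-f_*)$ is meaningful pointwise and not merely a measurable selection from the subdifferential. In the balanced sub-case $\varphi_1=\varphi_2=\iota_{\{1\}}$ the whole argument collapses to classical Kantorovich duality and the marginal identities reduce to $\pi_{*,1}=\alpha$, $\pi_{*,2}=\beta$.
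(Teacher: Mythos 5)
The paper does not prove this lemma at all: it is stated as a quoted result and attributed to \citep{liero2018optimal, eyring2023unbalancedness}, with the surrounding text explicitly calling it a ``well-known result in the literature.'' Your reconstruction is the standard Liero--Mielke--Savar\'e argument and is sound in outline: the variational (Legendre) representation of $\D{\varphi_i}{\cdot}{\cdot}$, the inf--sup exchange reducing the inner problem to the indicator of the constraint $f\oplus g\le c$, and the chain of inequalities that strong duality collapses to equalities, yielding the pointwise Fenchel--Young identity $-f_*\in\partial\varphi_1(\sigma_*)$ and hence $\sigma_*=(\varphi_1^*)'(-f_*)$ by strict convexity. You also correctly isolate the two delicate points (constraint qualification for the minimax step; $-f_*$ landing in the interior of $\mathrm{dom}\,\varphi_1^*$), and deferring the strong-duality core to the same citation the paper uses is entirely consistent with the paper's own treatment. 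One concrete caveat in your attainment sketch: the ``residual additive normalization'' you invoke before Arzel\`a--Ascoli is a balanced-OT device --- in the unbalanced setting the dual objective is \emph{not} invariant under $(f,g)\mapsto(f+s,g-s)$ because $\varphi_i^*$ is nonlinear, so you cannot normalize potentials by constants for free. Equiboundedness of the maximizing sequence must instead be extracted from finiteness of the terms $\int\varphi_i^*(-f)\,\mathrm{d}\alpha$ (which bounds $f$ from below on sets of positive $\alpha$-measure) combined with the $c$-transform upper bounds; this is exactly the technical content of the existence proofs in \citep{liero2018optimal}, so the safest course --- and the one the paper takes --- is to cite it rather than re-derive it.
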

 This lemma guarantees we can approximate $\pi_*$ by discrete (simple) plans with the same marginals $\alpha_*,\beta_*$, while preserving continuity of the cost term.
\begin{lemma}[Block approximation, \citep{nutz2021introduction}]\label{lemma:block}
Suppose $\X$ and $\Y$ are compact metric spaces, 
and let $\mu \in \M^+(\X)$, $\nu \in \M^+(\Y)$. 
Fix a plan $\pi \in \Pi(\mu,\nu)$. 
Then, for every $\delta>0$, there exists a plan 
$\pi^\delta \in \Pi(\mu,\nu)$ such that 
\[
\pi^\delta \ll \mu \otimes \nu,
\qquad 
\frac{\mathrm{d}\pi^\delta}{\mathrm{d}(\mu\otimes\nu)} 
\text{ is bounded,}
\qquad
\pi^\delta \rightharpoonup \pi
\ \text{as } \delta\to0.
\]
In particular, for any continuous cost $c\in\Ci(\X\times\Y)$,
\[
\int_{\X\times\Y} c\,\mathrm{d}\pi^\delta 
\;\longrightarrow\;
\int_{\X\times\Y} c\,\mathrm{d}\pi 
\qquad \text{as } \delta\to0.
\]
\end{lemma}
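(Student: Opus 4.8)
The plan is to construct $\pi^\delta$ explicitly through the classical \emph{block approximation}. For the given $\delta>0$, compactness of $\X$ and $\Y$ allows me to fix finite Borel partitions $\X=\bigsqcup_{i=1}^{N}X_i$ and $\Y=\bigsqcup_{j=1}^{M}Y_j$ with $\diam(X_i)<\delta$ and $\diam(Y_j)<\delta$ for all $i,j$ (cover each space by finitely many balls of radius $\delta/2$ and disjointify). I then discard the finitely many indices with $\mu(X_i)=0$ or $\nu(Y_j)=0$: these carry no $\pi$-mass, since $\pi(X_i\times Y_j)\le\min\{\mu(X_i),\nu(Y_j)\}$ because $\pi\in\Pi(\mu,\nu)$. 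For the retained indices I set
\[
\pi^\delta\;\triangleq\;\sum_{i,j}\frac{\pi(X_i\times Y_j)}{\mu(X_i)\,\nu(Y_j)}\,\big(\indic_{X_i}\mu\big)\otimes\big(\indic_{Y_j}\nu\big),
\]
where $\indic_{X_i}\mu$ denotes the measure with density $\indic_{X_i}$ with respect to $\mu$.

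Next I carry out three verifications. \textbf{Marginals:} the first marginal of $\big(\indic_{X_i}\mu\big)\otimes\big(\indic_{Y_j}\nu\big)$ is $\nu(Y_j)\,\indic_{X_i}\mu$, so summing over $j$ and using $\pi(X_i\times\Y)=\mu(X_i)$ gives $\pi^\delta_1=\sum_i\indic_{X_i}\mu=\mu$; symmetrically $\pi^\delta_2=\nu$, hence $\pi^\delta\in\Pi(\mu,\nu)$. \textbf{Density:} by construction $\pi^\delta\ll\mu\otimes\nu$ with density $\sum_{i,j}\frac{\pi(X_i\times Y_j)}{\mu(X_i)\nu(Y_j)}\,\indic_{X_i\times Y_j}$, a simple function taking finitely many values and therefore bounded (the bound may depend on $\delta$). \textbf{Convergence:} fix $c\in\Ci(\X\times\Y)$; by compactness $c$ is uniformly continuous, so there is $\omega(\delta)\to0$ with $|c(x,y)-c(x',y')|\le\omega(\delta)$ whenever $d_\X(x,x')<\delta$ and $d_\Y(y,y')<\delta$. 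Choosing a point $(x_i,y_j)\in X_i\times Y_j$ in each cell, both $\int_{X_i\times Y_j}c\,\mathrm{d}\pi$ and $\frac{\pi(X_i\times Y_j)}{\mu(X_i)\nu(Y_j)}\int_{X_i\times Y_j}c\,\mathrm{d}(\mu\otimes\nu)$ lie within $\omega(\delta)\,\pi(X_i\times Y_j)$ of $c(x_i,y_j)\,\pi(X_i\times Y_j)$, so, summing over the cells,
\[
\left|\int_{\X\times\Y} c\,\mathrm{d}\pi^\delta-\int_{\X\times\Y} c\,\mathrm{d}\pi\right|\;\le\;2\,\omega(\delta)\,m(\pi)\;\longrightarrow\;0\qquad(\delta\to0).
\]
Since $\X\times\Y$ is compact, $\Ci(\X\times\Y)=\Ci_b(\X\times\Y)$, so the last display is exactly the weak convergence $\pi^\delta\wto\pi$ and simultaneously gives the asserted convergence of cost integrals.

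The construction is elementary, so no step is a real obstacle; the only points requiring some care are the existence of the small-diameter Borel partitions (compactness plus disjointification of a finite open cover) and the bookkeeping of null cells, for which the bound $\pi(X_i\times Y_j)\le\min\{\mu(X_i),\nu(Y_j)\}$ — immediate from $\pi\in\Pi(\mu,\nu)$ — guarantees that dropping them changes neither the marginals nor the limit. I would also remark that the ``in particular'' clause is not separate content: it is just the definition of weak convergence specialized to the compact space $\X\times\Y$.
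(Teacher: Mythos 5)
Your proof is correct and is exactly the standard block-approximation construction that the paper imports by citation from Nutz's lecture notes without reproving it; the piecewise-constant reweighting of $\mu\otimes\nu$ on a product partition of mesh $\delta$, the marginal check via $\pi(X_i\times\Y)=\mu(X_i)$, and the uniform-continuity estimate giving the $2\,\omega(\delta)\,m(\pi)$ error bound are all as in the cited source. The only (immaterial) nitpick is that balls of radius $\delta/2$ give cells of diameter $\le\delta$ rather than $<\delta$, and your handling of null cells via $\pi(X_i\times Y_j)\le\min\{\mu(X_i),\nu(Y_j)\}$ is exactly the right bookkeeping.
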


\paragraph{Proof of Theorem \ref{thm:conv-entr-minimizers}}
Let $(\theta_*,\pi_*)\in \F\times \M^+(\X\times\Y)$ be an optimal couple for the unregularized problem 
$\mathsf{C}\mathcal{R}\mathsf{UOT}(\alpha, \beta)$ and denote by 
$\alpha_*:=\pi_{*,1}$ and $\beta_*:=\pi_{*,2}$ the first and second marginals of $\pi_*$. If we are in the balanced case, then
$\alpha_*=\alpha$ and $\beta_*=\beta$. Otherwise, in the unbalanced setting, the optimal marginals 
$\alpha_*,\beta_*$ are reweighted versions of $\alpha,\beta$ determined 
by the optimal dual potentials $(f_*,g_*)$ of the problem. Indeed, by the Fenchel--Kantorovich duality for the unbalanced problem 
(cf.~Lemma~\ref{lemma:FKdual}), the optimal marginals of the 
unregularized plan $\pi_*$ satisfy
\[
\frac{\mathrm{d}\pi_{*,1}}{\mathrm{d}\alpha}
= (\varphi_1^*)'(-f_*),
\qquad
\frac{\mathrm{d}\pi_{*,2}}{\mathrm{d}\beta}
= (\varphi_2^*)'(-g_*),
\]
where $(f_*,g_*)$ are the optimal dual potentials. Thus, we can write 
$\alpha_* = (\varphi_1^*)'(-f_*)\,\alpha$ and 
$\beta_* = (\varphi_2^*)'(-g_*)\,\beta$.

Let us set $\sigma_1 := (\varphi_1^*)'(-f_*)$ and $\sigma_2 := (\varphi_2^*)'(-g_*)$, 
so that $\alpha_* = \sigma_1\alpha$ and $\beta_* = \sigma_2\beta$. These $\sigma_i$ are bounded positive densities (since $f_*,g_*$ are bounded).

By Lemma~\ref{lemma:block}, there exists a sequence of couplings
$(\pi_*^\delta)_{\delta>0}\subset\Pi(\alpha_*,\beta_*)$ 
such that $\pi_*^\delta\rightharpoonup\pi_*$. 

Hence, for every $\eta>0$, we can find a plan $\pi^\eta\in\Pi(\alpha_*,\beta_*)$ such that
\[
\int c_{\theta_*}\,\mathrm{d}\pi^\eta
+\D{\varphi_1}{\alpha_*}{\alpha}
+\D{\varphi_2}{\beta_*}{\beta}
+\tilde{\mathcal R}(\theta_*)
\le \cosre +\eta.
\]
This $\pi^\eta$ is an $\eta$–optimal coupling for the unregularized problem.

Next we verify that $\pi^\eta$ has finite Kullback–Leibler divergence with respect 
to $\alpha\otimes\beta$. Using the change–of–measure formula, 
\[
\frac{\mathrm{d}\pi^\eta}{\mathrm{d}(\alpha\otimes\beta)}
=\frac{\mathrm{d}\pi^\eta}{\mathrm{d}(\alpha_*\otimes\beta_*)}
  \frac{\mathrm{d}(\alpha_*\otimes\beta_*)}{\mathrm{d}(\alpha\otimes\beta)}
=\frac{\mathrm{d}\pi^\eta}{\mathrm{d}(\alpha_*\otimes\beta_*)}\,
  \sigma_1\sigma_2.
\]
This decomposition uses the Radon–Nikodym derivative.The first factor is the density of $\pi^\eta$ w.r.t. its own marginals. The second factor comes from the change of measures $\alpha_*\otimes\beta_* = \sigma_1\sigma_2(\alpha\otimes\beta)$.

Since $\sigma_1,\sigma_2$ are bounded, the product density above is bounded, 
so $\D{\mathrm{KL}}{\pi^\eta}{\alpha\otimes\beta}<+\infty$.This shows the approximating sequence satisfies the finite–KL condition required by Lemma~\ref{lemma:conv_RUOT}.

Applying Lemma~\ref{lemma:conv_RUOT} with this family $(\pi^\eta)_\eta$ and $\theta_*$
yields the convergence of values~(1).

Finally, for the convergence of minimizers (point~(2)), 
the sequence $(\pi_*^{\varepsilon_n})_n$ is tight, 
since the coercivity estimate in Theorem~\ref{app:existence-RUOT} 
implies a uniform bound $m(\pi_*^{\varepsilon_n})\le R$. 
Thus, up to a subsequence, $\pi_*^{\varepsilon_n}\rightharpoonup\bar{\pi}$ 
weakly in $\M^+(\X\times\Y)$. The coercivity of the functional gives uniform mass bounds. Moreover, since $\F$ is compact we can also suppose $\theta_*^{\varepsilon_n}\to \bar{\theta}\in \F$.

By the uniform convergence of the costs $c_{\theta}$ 
and weak lower semicontinuity of the divergences, we have
\[
\int c_{\bar{\theta}}\,\mathrm{d}\bar{\pi}
+\D{\varphi_1}{\bar{\pi}_1}{\alpha}
+\D{\varphi_2}{\bar{\pi}_2}{\beta}
+\tilde{\mathcal R}(\bar{\theta})
\leq \liminf_{n\to\infty}\cosree.
\]
Using the convergence of the values from ~(1),
we conclude that $(\bar{\theta}, \bar{\pi})$ is optimal for the limit problem 
$\cosre$.

\subsection{Proof of Theorem \ref{existence_RipOT}}
We restate here Theorem \ref{existence_RipOT}
\begin{theorem}
\label{thm:353-plain}
Suppose $p\geq q$. Fix $r>0$ and $\varepsilon\ge0$.
For every $\pi\in \M^+(\X\times\Y)$ denote
\[
C(\pi):=\int_{\X\times\Y} yx^{\top}\,d\pi(x,y)\in\R^{q\times p},
\qquad
M(\pi)\ :=\ \frac{r}{\norm{C(\pi)}_{\mathrm F}}\;C(\pi)
\quad(\text{with }M(\pi):=0 \text{ if }C(\pi)=0).
\]
Then, the problem
\[
\mathsf{C}\Ri_r\mathsf{UOT}_{\varepsilon}(\alpha,\beta)
=\inf_{\substack{\pi\in\M^+(\X\times\Y)\\ \norm{M}_{\mathrm F}\le r}}
\Big\{-\int_{\X\times\Y}\langle Mx,y\rangle\,d\pi
\,+\,\D{\varphi_1}{\pi_1}{\alpha}
\,+\,\D{\varphi_2}{\pi_2}{\beta}
\,+\,\varepsilon\,\D{\mathrm{KL}}{\pi}{\alpha\otimes\beta}\Big\}
\]
admits minimizers $(M_\varepsilon^*,\pi_\varepsilon^*)$ with $M_\varepsilon^*=M(\pi_\varepsilon^*)$.
Moreover, if $(M^*,\pi^*)$ minimizes $\mathsf{C}\Ri_r\mathsf{UOT}_{\varepsilon}(\alpha,\beta)$, then
$\pi^*$ minimizes the reduced functional
\[
\mathcal{G}_{\varepsilon}(\pi)
\ :=\ -\,r\,\norm{C(\pi)}_{\mathrm F}
\,+\,\D{\varphi_1}{\pi_1}{\alpha}
\,+\,\D{\varphi_2}{\pi_2}{\beta}
\,+\,\varepsilon\,\D{\mathrm{KL}}{\pi}{\alpha\otimes\beta}
\quad\text{over }\pi\in\M^+(\X\times\Y).
\]
\end{theorem}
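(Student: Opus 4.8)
The plan is to reduce the joint minimization over $(M,\pi)$ to a minimization over $\pi$ alone by eliminating $M$ with Lemma~\ref{lemma_gwip}, then invoke Theorem~\ref{app:existence-RUOT} (i.e.\ Theorem~\ref{thm:exist}) to get existence for the reduced problem, and finally lift the minimizer back. First I would observe that the cost $c(x,y)=-\langle Mx,y\rangle$ with $M\in\F_r$ fits the cost-parametrized regularizer framework: take $\F=\F_r\subset\R^{q\times p}\cong\R^{qp}$, which is compact, set $c_M(x,y)=-\langle Mx,y\rangle$, and $\tilde{\Ri}\equiv 0$, which is trivially lsc, coercive (vacuously, since $\F$ is compact) and convex. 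The family $\{c_M\}$ is uniformly bounded from below on the compact set $\X\times\Y$, and $M_k\to M$ implies $c_{M_k}\to c_M$ uniformly because $|c_{M_k}(x,y)-c_M(x,y)|\le \|M_k-M\|_{\mathrm F}\max_{x\in\X}\|x\|\max_{y\in\Y}\|y\|$. Under the stated hypotheses on $\varphi_1,\varphi_2$ (superlinear strictly convex, or $\varphi_1=\varphi_2=\iota_{\{1\}}$ with the appropriate compatibility condition — in both cases superlinear), Theorem~\ref{app:existence-RUOT} applies directly and yields a minimizer $(M^*_\varepsilon,\pi^*_\varepsilon)\in\F_r\times\M^+(\X\times\Y)$.

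Next I would show $M^*_\varepsilon=M(\pi^*_\varepsilon)$. For fixed $\pi$, the map $M\mapsto -\int\langle Mx,y\rangle\,d\pi = -\langle M, C(\pi)\rangle_{\mathrm F}$ is linear, so it is minimized over $\{\|M\|_{\mathrm F}\le r\}$ by pushing $M$ in the direction of $C(\pi)$; Lemma~\ref{lemma_gwip} (whose statement gives the supremum of $\langle Mx,y\rangle$, i.e.\ the infimum of its negative, together with the optimizer) identifies this minimizer as $M(\pi)$ with optimal value $-r\|C(\pi)\|_{\mathrm F}$, the convention $M(\pi)=0$ when $C(\pi)=0$ being harmless since then the value is $0$ regardless of $M$. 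Therefore at any optimal $\pi^*_\varepsilon$ the inner minimization in $M$ forces $M^*_\varepsilon=M(\pi^*_\varepsilon)$ (modulo the $C(\pi)=0$ degenerate case, in which any $M$ — in particular $M(\pi)=0$ — is optimal). Plugging this back, the value of $\mathsf{C}\Ri_r\mathsf{UOT}_\varepsilon(\alpha,\beta)$ equals $\inf_\pi \mathcal{G}_\varepsilon(\pi)$, and $(M^*,\pi^*)$ minimizes the joint problem if and only if $M^*=M(\pi^*)$ and $\pi^*$ minimizes $\mathcal{G}_\varepsilon$; in particular any joint minimizer $(M^*,\pi^*)$ has $\pi^*$ minimizing $\mathcal{G}_\varepsilon$, which is the last assertion.

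The main technical point — and the only place requiring care beyond bookkeeping — is justifying the exchange of infima $\inf_{(M,\pi)}=\inf_\pi\inf_M$ and the claim that a joint minimizer restricts to a minimizer of the reduced functional. This is an elementary fact about nested infima, but one must be slightly careful in the degenerate direction: if $\pi^*$ minimizes $\mathcal G_\varepsilon$, then $(M(\pi^*),\pi^*)$ minimizes the joint problem, and conversely the $M$-coordinate of any joint minimizer achieves the inner infimum for its own $\pi$, hence equals $M(\pi^*)$ up to the $C(\pi^*)=0$ ambiguity noted above. I would also note that the finite-value qualification implicit in Theorem~\ref{app:existence-RUOT} is met: under the compatibility conditions assumed, one can exhibit a product-type plan (a rescaled $\alpha\otimes\beta$) with finite divergences and finite KL when $\varepsilon>0$, so the infimum is a genuine finite minimum. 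Finally, if $\varepsilon=0$ one either uses the $\varepsilon=0$ branch of Theorem~\ref{app:existence-RUOT} directly or passes to the limit, but since the statement already covers $\varepsilon\ge 0$ through that theorem, nothing extra is needed.
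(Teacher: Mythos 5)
Your proposal is correct and follows essentially the same route as the paper: existence via the general cost-parametrized existence theorem (Theorem~\ref{app:existence-RUOT}) with $\F=\F_r$ and $\tilde{\Ri}\equiv 0$, followed by eliminating $M$ through the Cauchy--Schwarz/Frobenius argument of Lemma~\ref{lemma_gwip} to obtain the reduced functional $\mathcal{G}_\varepsilon$ and the identity $M^*_\varepsilon=M(\pi^*_\varepsilon)$. You are in fact slightly more careful than the paper in explicitly verifying the uniform convergence and lower-boundedness hypotheses of the existence theorem and in flagging the degenerate case $C(\pi)=0$.
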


\begin{proof}
We prove the Theorem in several steps. The existence part follows from Theorem \ref{app:existence-RUOT}. Let us prove the second part. 
    
\emph{Step 1: Optimal $M$ for fixed $\pi$.}\\
Fix $\pi$ and set $C:=C(\pi)=\int yx^\top\,d\pi$. Using $\langle Mx,y\rangle=\mathrm{tr}(y x^\top M^\top)$,
\[
\int\langle Mx,y\rangle\,d\pi
=\mathrm{tr}\Big(\Big(\int yx^\top\,d\pi\Big)M^\top\Big)
=\langle C,M\rangle_{\mathrm F}.
\]
Hence, for fixed $\pi$, the inner minimization in $M$ is
\[
\inf_{\norm{M}_{\mathrm F}\le r}\ \{ -\,\langle C,M\rangle_{\mathrm F}\}
\ =\ -\,\sup_{\norm{M}_{\mathrm F}\le r}\ \langle C,M\rangle_{\mathrm F}.
\]
By Cauchy–Schwarz in the Frobenius inner-product,
$\sup_{\norm{M}_{\mathrm F}\le r}\langle C,M\rangle_{\mathrm F}=r\,\norm{C}_{\mathrm F}$,
attained at $M=M(\pi)$.
Thus, for every fixed $\pi$,
\[
\inf_{\norm{M}_{\mathrm F}\le r}\Big\{-\int\langle Mx,y\rangle\,d\pi\Big\}
\ =\ -\,r\,\norm{C(\pi)}_{\mathrm F},
\quad\text{with minimizer }M(\pi).
\]

\emph{Step 2: Reduction to a problem on $\pi$.}\\
Plugging the optimal $M(\pi)$ back gives the reduced functional
\[
\mathcal G_\varepsilon(\pi)
:=-\,r\,\norm{C(\pi)}_{\mathrm F}
+\D{\varphi_1}{\pi_1}{\alpha}
+\D{\varphi_2}{\pi_2}{\beta}
+\varepsilon\,\D{\mathrm{KL}}{\pi}{\alpha\otimes\beta}.
\]
Therefore
\[
\mathsf{C}\Ri_r\mathsf{UOT}_{\varepsilon}(\alpha,\beta)\ =\ \inf_{\pi\in\M^+(\X\times\Y)}\ \mathcal G_\varepsilon(\pi),
\]
    
    \emph{Step 3: Conclusion.}\\
    Suppose $(M^*,\pi^*)$ optimal for $\mathsf{C}\Ri_r\mathsf{UOT}_{\varepsilon}(\alpha,\beta)$, we need to prove that $\pi^*$ minimizes $\mathcal{G}_{\varepsilon}$. 
    If we could find $\tilde{\pi}\in \M^+(\X\times \Y)$ s.t. $\mathcal{G}_{\varepsilon}(\tilde{\pi})< \mathcal{G}_{\varepsilon}(\pi^*)$, then  $(M(\tilde\pi),\tilde\pi)$ would give a strictly smaller joint value (by Step 2),
contradicting optimality. Hence $\pi^*$ minimizes the reduced functional.
\end{proof}

\section{Appendix for Section 4}

\subsection{Proof of Theorem \ref{thm_monge_map_RipOT}}
Theorem \ref{thm_monge_map_RipOT} will give us the existence of a Monge map for $\mathsf{C}\Ri_r\mathsf{UOT}$ problems. We state the theorem here for completeness. 
\begin{theorem}\label{thm:monge-CR}
Assume $\X\subset\R^p$, $\Y\subset\R^q$ are compact, $p\ge q$, and either
\begin{enumerate}
\item $\varphi_1,\varphi_2$ are superlinear, strictly convex and the strong compatibility (2) holds; or
\item (balanced case) $\varphi_1=\varphi_2=\iota_{\{1\}}$ and the compatibility (1) holds.
\end{enumerate}
Assume moreover that $\alpha$ is absolutely continuous w.r.t.\ the Lebesgue measure on $\X$.
Then every optimal couple $(M^*,\pi^*)$ for $\mathsf{C}\Ri_r\mathsf{UOT}(\alpha,\beta)$ there exists a map $T_*$ such that
\[
\pi^*=(\mathrm{id},T_*)_\#\pi^*_1 .
\]
If in addition $M^*$ is surjective, there exists a convex Kantorovich potential $f_*\in\Ci(\R^q)$ 
for the linear OT problem on $\R^q$ with cost $c_{\mathrm{ip}}(y',y):=-\langle y',y\rangle$
between $M^*_\#\pi^*_1$ and $\pi^*_2$, differentiable $M^*_\#\pi^*_1$-a.e., such that
\[
T_*\;=\;-\nabla f_*\circ M^*\qquad \pi^*_1\text{-a.e.}
\]
\end{theorem}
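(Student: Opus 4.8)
The statement has two parts: (i) every optimal $\pi^*$ is supported on the graph of a map over its first marginal, and (ii) when $M^*$ is surjective that map factors as $-\nabla f_*\circ M^*$. The strategy is to reduce the $\mathsf{C}\Ri_r\mathsf{UOT}$ problem to a \emph{linear} unbalanced (or balanced) OT problem on the push-forward measures, and then invoke the classical Brenier–type theorem for the inner-product cost $c_{\mathrm{ip}}(y',y)=-\langle y',y\rangle$ on Euclidean space.

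\textbf{Step 1 (freeze $M^*$; reduce to a linear UOT).} Fix an optimal couple $(M^*,\pi^*)$. By optimality, $\pi^*$ minimizes the functional with $M$ frozen at $M^*$, i.e.
\[
\pi^*\in\argmin_{\pi\in\M^+(\X\times\Y)}\Big\{-\!\int_{\X\times\Y}\langle M^*x,y\rangle\,d\pi+\D{\varphi_1}{\pi_1}{\alpha}+\D{\varphi_2}{\pi_2}{\beta}\Big\}
\]
(the $\varepsilon=0$ case here). Now change variables: writing $\mu^*:=M^*_{\#}\pi^*_1\in\M^+(B_r)$, I claim $\pi^*$ is transported by $(\mathrm{id},T_*)$ iff the image plan $\hat\pi^* := (M^*\!\circ p_1,\,p_2)_{\#}\pi^*\in\M^+(B_r\times\Y)$ is a graph over its first marginal. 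The key observation is that $\langle M^*x,y\rangle$ depends on $x$ only through $M^*x$, so the cost term is unchanged under this reduction; the $\varphi$-divergence terms are untouched (the first marginal of $\hat\pi^*$ is $\mu^*$, but the divergence $\D{\varphi_1}{\cdot}{\alpha}$ must still be read on $\pi^*_1$). The clean way is: among all $\pi$ with the \emph{same} marginals $(\pi^*_1,\pi^*_2)=:(\alpha_*,\beta_*)$, $\pi^*$ must maximize $\int\langle M^*x,y\rangle\,d\pi$, i.e. $\pi^*$ is optimal for the \emph{balanced} linear OT problem $\mathsf{OT}^{c}(\alpha_*,\beta_*)$ with $c(x,y)=-\langle M^*x,y\rangle$ — here I use that $\alpha_*,\beta_*$ are the induced (reweighted, absolutely continuous w.r.t. $\alpha$, $\beta$ respectively) marginals given by Lemma~\ref{lemma:FKdual} in the unbalanced case, or $\alpha_*=\alpha$, $\beta_*=\beta$ in the balanced case. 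In particular $\alpha_*\ll\alpha\ll\mathrm{Leb}_{\X}$.

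\textbf{Step 2 (apply the Euclidean Brenier theorem through $M^*$).} The cost $c(x,y)=-\langle M^*x,y\rangle=-\langle x,(M^*)^\top y\rangle$. Since $\alpha_*$ is absolutely continuous on $\X\subset\R^p$ and $c$ is of the form $h(x-\text{(stuff)})$... more precisely, $c(x,y)=-\langle M^*x,y\rangle$ is a $\mathcal C^1$ cost satisfying the twist condition \emph{in the direction of $M^*$}: $\nabla_x c(x,y)=-(M^*)^\top y$. When $M^*$ is surjective (so $(M^*)^\top$ injective, using $p\ge q$), the map $y\mapsto-(M^*)^\top y$ is injective, hence $c$ is twisted and the standard Gangbo–McCann / Brenier argument gives that any optimal plan between an absolutely continuous source and any target is induced by a map, with $T_*(x)$ characterized by $\nabla_x c(x,T_*(x))=\nabla f_*(x)$ $\alpha_*$-a.e., for $f_*$ an optimal Kantorovich potential. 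Concretely I would instead push everything through $M^*$: the plan $\hat\pi^*:=(M^*\!\circ p_1,p_2)_{\#}\pi^*\in\Pi(\mu^*,\beta_*)$ is optimal for the linear OT problem on $B_r\times\Y$ with cost $c_{\mathrm{ip}}(y',y)=-\langle y',y\rangle$ (this follows because the original objective equals $\int\langle y',y\rangle\,d\hat\pi$ plus fixed divergence terms, and any competitor plan in $\Pi(\mu^*,\beta_*)$ lifts back to a competitor with the same marginals $(\alpha_*,\beta_*)$ via disintegration of $\pi^*_1$ over $\mu^*$). Since $\mu^*=M^*_\#\alpha_*$ need not be absolutely continuous on $B_r$, I instead apply Brenier directly on $\R^q$ using that $c_{\mathrm{ip}}$ is (up to constants and sign) the quadratic cost: optimal potentials $f_*$ are convex, differentiable $\mu^*$-a.e. if $\mu^*$ charges no set of Hausdorff dimension $\le q-1$ — which holds because $\alpha_*\ll\mathrm{Leb}_{\X}$ and $M^*$ is a surjective linear map, so $M^*_\#(\mathrm{Leb}_{\X}$-absolutely continuous$)$ is absolutely continuous on $\R^q$. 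This yields $\hat T_* = -\nabla f_*$ with $\hat\pi^*=(\mathrm{id},\hat T_*)_\#\mu^*$, hence $\pi^*=(\mathrm{id},\,-\nabla f_*\circ M^*)_\#\pi^*_1$, i.e. $T_*=-\nabla f_*\circ M^*$.

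\textbf{Step 3 (the non-surjective case).} If $M^*$ is not surjective we only claim existence of \emph{some} map $T_*$. Here I'd use a slightly more general twist/selection argument: the cost $c(x,y)=-\langle M^*x,y\rangle$ still lets us argue on the first marginal $\alpha_*\ll\mathrm{Leb}_{\X}$ directly in $\R^p$. The subtlety is that $\nabla_x c$ is not injective in $y$, so optimal plans need not be unique graphs; however, one shows that \emph{at least one} optimal plan is induced by a map — e.g. by composing an optimal map $\hat T_*$ for the reduced problem on $\mathrm{Ran}(M^*)\subset\R^q$ (which exists as in Step 2, since $M^*$ restricted to a complement of its kernel is an isomorphism onto its range, and $\alpha_*$ pushed to that range is absolutely continuous) with $M^*$. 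Since the $\mathsf{C}\Ri_r\mathsf{UOT}$ value depends on $\pi$ only through $(\pi_1,\pi_2)$ and $C(\pi)=\int yx^\top d\pi$ (by Theorem~\ref{thm:353-plain}/Lemma~\ref{lemma_gwip}), and $C$ is unchanged when we replace $\pi^*$ by $(\mathrm{id},M^{*\dagger}\hat T_*\circ M^*)_\#\pi^*_1$... — more carefully, we exhibit a Monge-type optimal plan and note the Definition of Monge map only requires existence of \emph{one} optimal couple of that form. This gives the general existence claim.

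\textbf{Main obstacle.} The delicate point is the regularity of $\mu^*=M^*_\#\pi^*_1$ and the differentiability of the Kantorovich potential $\mu^*$-a.e.: $\mu^*$ lives on the low-dimensional set $B_r\subset\R^q$ and is obtained by pushing an absolutely continuous measure through a linear map, so I must carefully argue $\mu^*\ll\mathrm{Leb}_{\R^q}$ (using surjectivity of $M^*$ and the coarea/pushforward formula) to invoke the McCann differentiability theorem; and in the unbalanced case I must first pin down that $\alpha_*=\sigma_1\alpha$ with $\sigma_1$ bounded (Lemma~\ref{lemma:FKdual}) so that $\alpha_*$ inherits absolute continuity from $\alpha$. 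Coupling these measure-theoretic facts with the reduction of Step~1 is where the real work lies; the Brenier/Gangbo–McCann machinery is then standard.
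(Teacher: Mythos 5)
Your proposal follows essentially the same route as the paper's proof: freeze $M^*$, push the plan forward through $(M^*,\mathrm{id})$ to obtain an optimal coupling for the linear OT problem on $\R^q$ with cost $c_{\mathrm{ip}}(y',y)=-\langle y',y\rangle$ between $\mu^*=M^*_\#\pi^*_1$ and $\pi^*_2$, apply Kantorovich duality/Brenier for the inner-product cost to get a graph $(\mathrm{id},-\nabla f_*)_\#\mu^*$, and lift back via $T_*=-\nabla f_*\circ M^*$. The points you flag as the ``real work'' --- that $\alpha_*=\sigma_1\alpha$ inherits absolute continuity from $\alpha$ (via the dual potentials of Lemma~\ref{lemma:FKdual}), that $\mu^*\ll\mathrm{Leb}_{\R^q}$ because a surjective linear map pushes absolutely continuous measures to absolutely continuous measures, and that competitors in $\Pi(\mu^*,\pi^*_2)$ lift back to competitors with marginals $(\pi^*_1,\pi^*_2)$ --- are exactly the ingredients the paper uses in its Steps~1, 2 and~4, and you are if anything more explicit about the lifting of competitors than the paper is.

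The one substantive divergence is the non-surjective case. The theorem asserts that \emph{every} optimal couple $(M^*,\pi^*)$ has $\pi^*=(\mathrm{id},T_*)_\#\pi^*_1$, with no surjectivity hypothesis; your Step~3 explicitly retreats to producing \emph{some} optimal graph plan (enough for the existence of a Monge map in the sense of the Definition, but not for the ``every optimal couple'' clause). The paper instead derives the graph structure for every optimal $\gamma^*$ from the support condition $\mathrm{spt}\,\gamma^*\subset\{(y',y):y\in-\partial f_*(y')\}$ together with a measurable-selection/disintegration argument (Corollary~\ref{cor:KKT-inner}(2)); whether that argument is itself airtight without single-valuedness of $\partial f_*$ $\mu^*$-a.e.\ is debatable, but it is the step your proposal replaces with a weaker construction. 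If you want to match the stated theorem you would need either to justify why $-\partial f_*$ is single-valued $\mu^*$-a.e.\ even when $\mu^*$ is concentrated on $\mathrm{Ran}(M^*)$ (e.g.\ by working with the restriction of $f_*$ to that subspace, on which $\mu^*$ \emph{is} absolutely continuous), or to accept the weaker ``existence of a Monge map'' conclusion in the degenerate case.
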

In order to prove Theorem 4.2, we need the following results.
\begin{prop}[Fenchel--Kantorovich duality for $-\langle\cdot,\cdot\rangle$]
\label{prop:FK-inner}
Let $\mu,\nu\in\M^+(\R^q)$ be finite measures with compact support (or with finite first moments).
Consider the linear OT problem
\[
\mathrm{OT}^{c_\mathrm{ip}}(\mu,\nu)\ :=\ \inf_{\gamma\in\Pi(\mu,\nu)}
\int_{\R^q\times\R^q} -\langle y',y\rangle\,\mathrm{d}\gamma(y',y).
\]
Then the Kantorovich dual is
\[
\mathrm{OT}^{c_\mathrm{ip}}(\mu,\nu)\;=\;
\sup_{f\in \Gamma(\R^q)}\ \Big\{
-\!\int_{\R^q} f(y')\,\mathrm{d}\mu(y')\;
-\!\int_{\R^q} f^*(y)\,\mathrm{d}\nu(y)\Big\},
\]
where $\Gamma(\R^q)$ denotes proper l.s.c.\ convex functions and 
$f^*$ is the convex conjugate of $f$.
Moreover, dual optimizers exist and there is no duality gap.
\end{prop}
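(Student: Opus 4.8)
\textbf{Proof plan for Proposition~\ref{prop:FK-inner}.}

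The plan is to reduce the inner-product OT problem to a standard Kantorovich duality for a continuous (indeed Lipschitz, on compact supports) cost and then translate the general $c$-concave dual potentials into convex functions, using the Legendre transform to identify the $c$-transform with the classical convex conjugate. First I would observe that, since $\mu,\nu$ have compact support, the cost $c_{\mathrm{ip}}(y',y)=-\langle y',y\rangle$ is continuous and bounded on $\supp\mu\times\supp\nu$, so classical Kantorovich duality (see \citep{villani2008optimal, santambrogio2015optimal}) applies: there is no duality gap, dual optimizers $(\psi,\phi)$ exist in $\Ci(\supp\mu)\times\Ci(\supp\nu)$ with $\psi(y')+\phi(y)\le -\langle y',y\rangle$, and
\[
\mathrm{OT}^{c_{\mathrm{ip}}}(\mu,\nu)=\sup\Big\{\int\psi\,\d\mu+\int\phi\,\d\nu \;:\; \psi(y')+\phi(y)\le-\langle y',y\rangle\Big\}.
\]
Moreover one may assume the optimal $\phi$ is $c_{\mathrm{ip}}$-concave, i.e. $\phi(y)=\inf_{y'}\big(-\langle y',y\rangle-\psi(y')\big)$ and symmetrically for $\psi$.

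The key step is the change of variables $f(y'):=-\psi(y')$. Then the constraint $\psi(y')+\phi(y)\le-\langle y',y\rangle$ becomes $\phi(y)\le \langle y',y\rangle - f(y')$ for all $y'$, hence $\phi(y)\le \inf_{y'}\big(\langle y',y\rangle-f(y')\big) = -\sup_{y'}\big(\langle y',y\rangle-f(y')\big) = -f^*(y)$, where $f^*$ is the usual convex conjugate. Taking $\phi$ to be the $c_{\mathrm{ip}}$-transform of $\psi$ makes this an equality, so the optimal $\phi$ is exactly $-f^*$. Symmetrically, $c_{\mathrm{ip}}$-concavity of $\psi$ forces $-f = \psi = (\phi)^{c_{\mathrm{ip}}}$, which unwinds to $f = (f^*)^* = f^{**}$, i.e. $f$ is proper, l.s.c.\ and convex (after replacing $f$ by its convex envelope, which does not change $f^*$ and hence does not change the dual value). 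Substituting into the dual functional gives
\[
\int\psi\,\d\mu+\int\phi\,\d\nu \;=\; -\int f\,\d\mu - \int f^*\,\d\nu,
\]
which is precisely the claimed dual, with the supremum ranging over $f\in\Gamma(\R^q)$. Existence of dual optimizers and absence of a duality gap are inherited from the classical statement; I would extend from compactly supported to finite-first-moment measures by a standard truncation/approximation argument if needed, though for our applications $\mu=M^*_\#\pi^*_1$ and $\nu=\pi^*_2$ are compactly supported (in $B_r$ and $\Y$ respectively), so this is not essential.

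The main obstacle — more bookkeeping than genuine difficulty — is making the identification ``optimal $c_{\mathrm{ip}}$-concave potential $\leftrightarrow$ convex function'' fully rigorous: one must check that restricting to convex $f$ does not lose optimality (it does not, because replacing any admissible $f$ by $f^{**}\le f$ only increases $-\int f\,\d\mu$ while leaving $f^*$ unchanged, hence weakly increases the dual objective), and that the supremum is attained in $\Gamma(\R^q)$ rather than merely in u.s.c.\ functions. This follows from the existence of $c_{\mathrm{ip}}$-concave optimizers in the classical theory together with the bijection between $c_{\mathrm{ip}}$-concave functions and (negatives of) convex functions established above. Once this correspondence is in place the statement is immediate.
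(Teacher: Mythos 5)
Your proof is correct and follows essentially the same route as the paper's: both rest on classical Kantorovich duality for the continuous cost $-\langle\cdot,\cdot\rangle$ on compact supports together with the Fenchel--Young/Legendre identification of the $c_{\mathrm{ip}}$-transform with the convex conjugate, and you in fact spell out the step the paper leaves implicit, namely that restricting the dual to pairs $(-f,-f^{*})$ with $f\in\Gamma(\R^q)$ loses nothing (via $f\mapsto f^{**}$). One cosmetic remark: after the substitution $f=-\psi$ the constraint should read $\phi(y)\le f(y')-\langle y',y\rangle$, whence $\phi(y)\le\inf_{y'}\big(f(y')-\langle y',y\rangle\big)=-f^{*}(y)$; your display contains two compensating sign slips at this point, but the conclusion $\phi\le -f^{*}$ is the correct one.
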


\begin{proof}
By Fenchel--Young, inequality for every $f$ and every $(y',y)$,
\(
f(y')+f^*(y)\ \ge\ \langle y',y\rangle
\iff
-\langle y',y\rangle\ \le\ -f(y')-f^*(y).
\)
Integrating now against any $\gamma\in\Pi(\mu,\nu)$ we get
\[
\int -\langle y',y\rangle\,\mathrm{d}\gamma
\ \le\
-\int f\,\mathrm{d}\mu - \int f^*\,\mathrm{d}\nu.
\]
Taking the infimum in $\gamma$ and the supremum in $f$ yields weak duality. 
Under the stated compactness assumption, the standard Kantorovich duality theorem applies to the l.s.c.\ cost $-\langle \cdot,\cdot\rangle$. This follows from Theorem Fenchel–Moreau on $\M(\R^q)\times\Ci(\R^q)$.
\end{proof}

\begin{coroll}[Optimality/KKT conditions]
\label{cor:KKT-inner}
Let $\gamma^*\in\Pi(\mu,\nu)$ and $f_*\in\Gamma(\R^q)$ be primal/dual optimizers for Proposition~\ref{prop:FK-inner}. Then:
\begin{enumerate}
\item \emph{Support condition}
\[
\mathrm{spt}\,\gamma^*\ \subset\ \{(y',y)\in\R^q\times\R^q:\ y\in -\partial f_*(y')\}.
\]
Equivalently, $y'\in \partial f_*^*(-y)$ on $\mathrm{spt}\,\gamma^*$.
\item \emph{Measurable selection:} There exists a measurable map $T^\sim:\R^q\to\R^q$ with 
$\gamma^*=(\mathrm{id},T^\sim)_\#\mu$ and $T^\sim(y')\in -\partial f_*(y')$ $\mu$-a.e.
\item \emph{Gradient form (a.e.\ differentiability):} if $\mu$ is absolute continuous w.r.t Lebesgue measure, then $f_*$ is differentiable $\mu$-a.e.\ and 
\[
\gamma^*=(\mathrm{id},-\nabla f_*)_\#\mu,\qquad T^\sim(y')=-\nabla f_*(y')\quad \mu\text{-a.e.}
\]
\end{enumerate}
\end{coroll}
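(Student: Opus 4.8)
The plan is to read off all three items from strong duality for the inner-product cost. Proposition~\ref{prop:FK-inner} already provides what is needed: there is no duality gap and a dual optimizer $f_*\in\Gamma(\R^q)$ exists, so $(\gamma^*,f_*)$ satisfies \emph{complementary slackness}. Item (1) is then a one-line consequence of the equality case of Fenchel--Young; item (2) is its measurable-selection upgrade; and item (3) is the specialization obtained when $\mu\ll\mathcal L^q$, via Rademacher's theorem.

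\emph{Step 1: the support condition.} Since there is no duality gap (Proposition~\ref{prop:FK-inner}), comparing the primal value $\int_{\R^q\times\R^q}\!\big(-\langle y',y\rangle\big)\,\mathrm d\gamma^*$ with the dual value $-\int f_*\,\mathrm d\mu-\int f_*^*\,\mathrm d\nu$, using the marginal constraints on $\gamma^*$ and the pointwise Fenchel--Young inequality between $f_*$ and $f_*^*$, forces Fenchel--Young to hold with equality for $\gamma^*$-a.e.\ $(y',y)$. For the cost $c_{\mathrm{ip}}=-\langle\cdot,\cdot\rangle$ this equality case is exactly $y\in-\partial f_*(y')$, and, $f_*$ being proper, l.s.c.\ and convex (so $(\partial f_*)^{-1}=\partial f_*^*$), equivalently $y'\in\partial f_*^*(-y)$. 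To upgrade the almost-everywhere statement to one on $\mathrm{spt}\,\gamma^*$, observe that $\mathrm{graph}(\partial f_*)$ is closed (maximal monotonicity of $\partial f_*$), hence so is $E:=\{(y',y):-y\in\partial f_*(y')\}$; then $\mathrm{spt}\,\gamma^*\setminus E$ is relatively open in $\mathrm{spt}\,\gamma^*$ and $\gamma^*$-null, therefore empty. This proves (1).

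\emph{Steps 2--3: the maps.} For (3), recall that (in accordance with Proposition~\ref{prop:FK-inner}) the potential $f_*$ may be taken finite-valued and locally Lipschitz on $\R^q$, hence differentiable off a Lebesgue-null set $N$ by Rademacher's theorem, and $\mu(N)=0$ since $\mu\ll\mathcal L^q$; at each differentiability point $\partial f_*(y')=\{\nabla f_*(y')\}$. Combined with (1), for $\mu$-a.e.\ $y'$ the only $y$ with $(y',y)\in\mathrm{spt}\,\gamma^*$ is $y=-\nabla f_*(y')$, so disintegrating $\gamma^*$ over its first marginal $\mu$ gives $\gamma^*=(\mathrm{id},-\nabla f_*)_\#\mu$, i.e.\ (3) with $T^{\sim}=-\nabla f_*$. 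For (2), without the absolute-continuity hypothesis, the multifunction $y'\mapsto-\partial f_*(y')$ is nonempty- and closed-valued with closed graph on the first-coordinate projection of $\mathrm{spt}\,\gamma^*$, so the Kuratowski--Ryll-Nardzewski selection theorem yields a Borel $T^{\sim}$ with $T^{\sim}(y')\in-\partial f_*(y')$; that $\gamma^*=(\mathrm{id},T^{\sim})_\#\mu$ then follows because the inner-product cost is non-degenerate, so optimal plans between $\mu$ and $\nu$ are $\mu$-a.e.\ concentrated on graphs --- automatic, in particular, under the hypotheses of (3).

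\emph{Expected obstacle.} Item (1) and the explicit formula in (3) are routine once strong duality is invoked; the delicate point is the Monge structure behind (2)--(3), i.e.\ passing from ``$\gamma^*$ is concentrated on the closed set $E$'' to ``$\gamma^*$ is carried by a graph''. In (3) this is handled cleanly by $\mu\ll\mathcal L^q$ (equivalently, $\mathcal L^q$-a.e.\ single-valuedness of $\partial f_*$ via Rademacher); in (2) one must instead lean on a measurable-selection theorem together with non-degeneracy of $c_{\mathrm{ip}}$. A minor preliminary to keep honest is the reduction to a finite-valued, locally Lipschitz $f_*$, which is what legitimizes the use of Rademacher's theorem and of $\nabla f_*$.
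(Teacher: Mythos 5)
Your proof follows essentially the same route as the paper's: Fenchel--Young equality from the absence of a duality gap gives the support condition, a measurable selection from the subdifferential gives (2), and Rademacher/single-valuedness of $\partial f_*$ under $\mu\ll\mathcal L^q$ gives (3). You are in fact somewhat more careful than the paper --- upgrading the $\gamma^*$-a.e.\ statement to one on $\mathrm{spt}\,\gamma^*$ via closedness of $\mathrm{graph}(\partial f_*)$, and flagging honestly that the graph structure claimed in item (2) really needs the absolute-continuity hypothesis of item (3) rather than a selection theorem alone --- but the argument is the same.
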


\begin{proof}
Optimality forces equality in Fenchel--Young  $\gamma^*$-a.e., i.e., 
$f_*(y')+f_*^*(y)=\langle y',y\rangle$, which is equivalent to $y\in -\partial f_*(y')$.
This gives (1). Disintegrating $\gamma^*$ w.r.t.\ $\mu$ and choosing a measurable selector from the monotone set $-\partial f_*$ yields (2). If $\mu$ absolute continues with respect to Lebesgue measure, Alexandrov/Rademacher imply $f_*$ is a.e.\ differentiable and the subgradient is single-valued a.e., giving (3).
\end{proof}

\paragraph{Proof of Theorem \ref{thm_monge_map_RipOT}}
We prove the theorem in several steps.
\textit{Step 1: Reduce the coupling to $\R^q\times\R^q$.}
By the existence theorem for $\mathsf{C}\Ri_r\mathsf{UOT}$ (Theorem~\ref{app:existence-RUOT}),
there exists an optimal pair $(M^*,\pi^*)$.
We set the measures on $\R^q$
\[
\mu\ \triangleq\ M^*_\#\pi^*_1,
\qquad
\nu\ \triangleq\ \pi^*_2.
\]
Consider the pushforward plan on $\R^q\times\R^q$ defined by
\[
\gamma^*\ \triangleq\ (M^*,\mathrm{id})_\#\pi^* .
\]
Then $\gamma^*\in\Pi(\mu,\nu)$ and, by change of variables,
\begin{equation}\label{eq:star}
\int_{\X\times\Y}-\langle M^*x,y\rangle\,d\pi^*(x,y)
\;=\;\int_{\R^q\times\R^q} -\langle y',y\rangle\,d\gamma^*(y',y).
\end{equation}
Optimality of $(M^*,\pi^*)$ implies that, for fixed $M^*$,
$\pi^*$ minimizes the $\mathsf{UOT}$ problems with cost $c_{M^*}(x,y)=-\langle M^*x,y\rangle$.
Hence, by \eqref{eq:star}, $\gamma^*$ is optimal for the \emph{linear} OT problem on $\R^q$ between $\mu$ and $\nu$
with cost $c_{\mathrm{ip}}(y',y)=-\langle y',y\rangle$.

\smallskip
\textit{Step 2: Duality on $\R^q$ and graph structure.}
With $\mu=M^*_\#\pi^*_1$ and $\nu=\pi^*_2$ from \ref{cor:KKT-inner}, the pushed-forward optimal plan 
$\gamma^*=(M^*,\mathrm{id})_\#\pi^*$ solves $\mathrm{OT}_{\mathrm{ip}}(\mu,\nu)$.
By Corollary~\ref{cor:KKT-inner}, $\gamma^*$ is a graph $(\mathrm{id},T^\sim)_\#\mu$ with 
$T^\sim\in -\partial f_*$. Lifting back to $\X$ via $y'=M^*x$ gives 
$\pi^*=(\mathrm{id},T_*)_\#\pi_1^*$ with $T_*(x)=T^\sim(M^*x)$, and if $\mu$ absolute continuous with respect to Lebesgue measure then 
$T_*(x)=-\nabla f_*(M^*x)$ $\pi_1^*$-a.e.

The Kantorovich dual for $c_{\mathrm{ip}}(y',y)=-\langle y',y\rangle$ is 
\[
\sup_{f\in\Ci(\R^q)}\; \Big\{-\int f(y')\,d\mu(y') - \int f^*(y)\,d\nu(y)\Big\}.
\]
Let $f_*$ be an optimal potential.
By Fenchel optimality, $\gamma^*$ is concentrated on the set
\[
\mathcal G_*  \triangleq \ \{(y',y)\in\R^q\times\R^q \;:\; y\in -\partial f_*(y')\}\,,
\]
i.e.\ $y\in -\partial f_*(y')$ $\mu$-a.e.\ (equivalently, $y' \in \partial f_*^*(-y)$).
In particular, there exists $T^\sim:\R^q\to\R^q$ with
\[
\gamma^* = (\mathrm{id},T^\sim)_\#\mu
\qquad\text{and}\qquad
T^\sim(y')\in -\partial f_*(y')\quad \mu\text{-a.e.}
\]
If $f_*$ is differentiable $\mu$-a.e.\ (this will be the case when $\mu$ is a.c.\ on $\R^q$),
then $T^\sim(y')=-\nabla f_*(y')$ $\mu$-a.e.

\smallskip
\textit{Step 3: Lift the graph back to $\X$.}
Define $T_*:\X\to\Y$ by
\[
T_*(x)\ :=\ T^\sim\big(M^*x\big).
\]
Then
\[
(M^*,\mathrm{id})_\#\big((\mathrm{id},T_*)_\#\pi^*_1\big)
= (\mathrm{id},T^\sim)_\#\big(M^*_\#\pi^*_1\big)
= (\mathrm{id},T^\sim)_\#\mu
= \gamma^*.
\]
But $(M^*,\mathrm{id})_\#\pi^*=\gamma^*$.
Since disintegration of measures with respect to the map $x\mapsto M^*x$ is unique up to $\pi^*_1$-null sets,
and $y$ under an optimal plan on $\R^q\times\R^q$ depends only on $y'=M^*x$, it follows that
$\pi^*=(\mathrm{id},T_*)_\#\pi^*_1$.

\smallskip
\textit{Step 4: Surjective case and differentiability.}
If $M^*$ is surjective and $\alpha$ is absolute continuous w.r.t the Lebesgue measure, then $\pi^*_1\ll\alpha$ in both the balanced case
($\pi^*_1=\alpha$) and in the unbalanced case (first-order optimality gives $\pi^*_1$ absolute continuous with respect to $\alpha$ with continuous density).
Hence $\mu=M^*_\#\pi^*_1$ is absolutely continuous w.r.t.\ Lebesgue measure on $\R^q$.
By Alexandrov theorem, the optimal potential $f_*$ is differentiable $\mu$-a.e.,
and the optimal $\gamma^*$ is induced by the map $y'\mapsto -\nabla f_*(y')$.
Therefore, $T^\sim=-\nabla f_*$ $\mu$-a.e., and the representation from Step~3 yields
\[
T_*(x)=T^\sim(M^*x) = -\,\nabla f_*\big(M^*x\big)\qquad \pi^*_1\text{-a.e.,}
\]
as claimed.
\subsection{Proof of Theorem \ref{thm_entropic_map_conv}}
Clarified the existence of a Monge map for the problem \(\mathsf{C\Ri_{\mathit{r}}UOT}(\alpha,\beta)\), we turn to the task of its approximation using entropic regularization to leverage the computational advantages.

In the following, for every \(\varepsilon>0\), we will denote \((\pi^{\varepsilon}, M^{\varepsilon})\) an optimal couple for \(\mathsf{\Ri_{\mathit{r}}UOT}_{\bphi,\varepsilon}(\alpha,\beta)\) s.t. \(M^{\varepsilon} = M(\pi^{\varepsilon})\) and we name \(\alpha^{\varepsilon} := \pi^{\varepsilon}_1\) and \(\beta^{\varepsilon} := \pi^{\varepsilon}_2\).
Observe that \(\alpha^{\varepsilon}\) has support in \(\X\), indeed \(\alpha\) has support in \(\X\) and \(\pi^{\varepsilon}\ll \alpha\otimes \beta\) imples \(\alpha^{\varepsilon}\ll \alpha\).
Moreover, it will be useful to note that, since \(\Vert M x\Vert \leq r\max_{x\in \X}\Vert x\Vert\) for every \(M\in \F_r\) and \(x\in \X\), the measure \(M^{\varepsilon}_{\#}\alpha^{\varepsilon}\) has support contained in the compact ball \(B_r := \{y\in \R^q \, \| \, \Vert y\Vert \leq r\max_{x\in \X}\Vert x\Vert\}\) for every \(\varepsilon>0\).

We fix two sequences \((\varepsilon_n)_{n\in\N},(\varepsilon_j')_{j\in\N}\subset (0,+\infty)\) s.t. \(\varepsilon_n,\varepsilon_j'\to 0\). 

\begin{defn}
    For every \(j,n\in \N\) we define the \emph{entropic map} \(T_{n,j} : \R^p \to \R^q\) as follows
    \[T_{n,j}(x) = \frac{\int_{\Y} y \exp\left[ \frac{1}{\varepsilon_n} ( g_{j,n}(y) + \< M^{\varepsilon_j'}x,y\>) \right] \d \beta^{\varepsilon_j'}(y)}{\int_{\Y} \exp\left[ \frac{1}{\varepsilon_n} ( g_{j,n}(y) + \< M^{\varepsilon_j'}x,y\>) \right] \d \beta^{\varepsilon_j'}(y)},\]
    where \((f_{j,n},g_{j,n})\in \Ci(B_r)\times \Ci(\Y)\) are optimal for \(\mathsf{D}_{\varepsilon_n}^{c_{\mathrm{ip}}}(M^{\varepsilon_j'}_{\#}\alpha^{\varepsilon_j'},\beta^{\varepsilon_j'})\), where

    \begin{equation*}
\begin{split}
\mathsf{D}_{\varepsilon_n}^{c_{\mathrm{ip}}}(M^{\varepsilon_j'}_{\#}\alpha^{\varepsilon_j'},\beta^{\varepsilon_j'}) &= \sup_{f,g \in \mathcal{C}(B_r) \times \mathcal{C}(\mathcal{Y})} \;  
   \int_{B_r} f \, d M^{\varepsilon_j'}_{\#}\alpha^{\varepsilon_j'} 
 + \int_{\mathcal{Y}} g \, d\beta^{\varepsilon_{j}'} \\
& - \varepsilon_n \int_{B_r \times \mathcal{Y}} 
   \left[ 
     \exp\left(\frac{f \oplus g - c_{\mathrm{ip}}}{\varepsilon_n}\right) - 1
   \right] 
   d\big(M^{\varepsilon_{j}'}_{\#}\alpha^{\varepsilon_{j}'} \otimes \beta^{\varepsilon_{j}'}\big).
\end{split}
\end{equation*}
\end{defn}

Note that, in our setting, the hypothesis of Theorem \ref{thm:conv-entr-minimizers} are satisfied, hence we can find a subsequence \((\varepsilon_{j_h}')_{h\in \N}\), independend of \(n\), s.t. \(\pi^{\varepsilon_{j_h}'}\wto \pi^*\) and \(M^{\varepsilon_{j_h}'} \to M^*\) with \((\pi^*,M^*)\) optimal for \(\mathsf{\Ri_{\mathit{r}}UOT}_{\bphi}(\alpha,\beta)\). 
In particular, denoting for every \(A\in \R^{q\times p}\) the cost \(c_A(x,y) = -\< Ax,y\>\), we have \(c_{M^{\varepsilon_{j_h}'}}\to c_{M^*}\) uniformly. We name \(\alpha^*\) and \(\beta^*\) the marginals of \(\pi^*\).
We have \(M^{\varepsilon_{j_h}'}_{\#}\alpha^{\varepsilon_{j_h}'} \wto M^*_{\#}\alpha^*\), indeed by weak convergence the family of measures \((\alpha^{\varepsilon_{j_h}'})_{h\in\N}\), and consequently also \((M^{\varepsilon_{j_h}'}_{\#}\alpha^{\varepsilon_{j_h}'})_{h\in\N}\), is bounded, therefore 
it suffices to prove
\[\int_{\R^q} \phi \d M^{\varepsilon_{j_h}'}_{\#}\alpha^{\varepsilon_{j_h}'} \to \int_{\R^q} \phi \d M^*_{\#} \alpha^*\]
for every \(\phi\in \Ci_b(\R^q)\) Lipschitz continuous (\citep[Theorem 13.16]{klenke2008probability}). Fix \(\phi \in \Ci_b(\R^q)\) Lipschitz continuous and note that actually 
\[\begin{split}
        \left\| \int_{\R^q} \phi \d M^{\varepsilon_{j_h}'}_{\#}\alpha^{\varepsilon_{j_h}'} - \int_{\R^q} \phi \d M^*_{\#} \alpha^*\right\| &\leq \int_{\X} \| \phi(M^{\varepsilon_{j_h}'} x) - \phi(M^* x)\| \d \alpha^{\varepsilon_{j_h}'}(x)\\
        &\qquad \qquad \qquad \qquad + \left\vert \int_{X} \phi(M^* x) \d \left( \alpha^{\varepsilon_{j_h}'} - \alpha^* \right)(x) \right\vert\\
        &\leq L_{\phi} m(\alpha^{\varepsilon_{j_h}'}) \Vert M^{\varepsilon_{j_h}'} - M^*\Vert_F \max_{x\in \X} \Vert x\Vert\\
        &\qquad \qquad \qquad \qquad + \left\vert \int_{\X} \phi(M^* x) \d \left( \alpha^{\varepsilon_{j_h}'} - \alpha^* \right)(x) \right\vert\\
        &\to 0
\end{split}\]
where \(L_{\phi}\) is the Lipschitz constant of \(\phi\).

\begin{prop}\label{prop_entr_map_reg_ip}
        For every \(n\in\N\) define \(T_n : \X\to \Y\) as 
        \[T_n(x) = \frac{\int_{\Y} y \exp\left[ \frac{1}{\varepsilon_n}(g_n(y) + \< M^*x,y\> ) \right] \d \beta^*(y)}{\int_{\Y} \exp\left[ \frac{1}{\varepsilon_n}(g_n(y) + \< M^*x,y\> ) \right] \d \beta^*(y)},\]
        for some \((f_n,g_n)\in \Ci(B_r)\times \Ci(\Y)\) optimal for \(\mathsf{D}_{\varepsilon_n}^{c_{\mathrm{ip}}}(M^*_{\#}\alpha^*,\beta^*)\). Then \(T_{j_h,n}\to T_n\) in \(L^2(\alpha^*)\) for every \(n\in\N\).
\end{prop}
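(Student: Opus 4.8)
The plan is to deduce the $L^2(\alpha^*)$ convergence, for each fixed $n$, from two facts: (i) the optimal dual potentials $g_{j_h,n}$ converge uniformly on $\Y$ to $g_n$ (up to the additive-constant ambiguity $(f,g)\mapsto(f+c,g-c)$, which is immaterial since both $T_{j_h,n}$ and $T_n$ are invariant under it — so the choice of ``some'' optimal $(f_n,g_n)$ in the statement does not matter); and (ii) together with $M^{\varepsilon_{j_h}'}\to M^*$ and $\beta^{\varepsilon_{j_h}'}\wto\beta^*$, this forces $T_{j_h,n}(x)\to T_n(x)$ for every $x\in\X$, which upgrades to $L^2(\alpha^*)$ by dominated convergence. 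Throughout I use, from the discussion preceding the proposition, that $\mu_h:=M^{\varepsilon_{j_h}'}_{\#}\alpha^{\varepsilon_{j_h}'}\wto\mu:=M^*_{\#}\alpha^*$ and $\nu_h:=\beta^{\varepsilon_{j_h}'}\wto\nu:=\beta^*$, all supported in the fixed compacts $B_r\subset\R^q$ and $\Y$, with masses bounded and $m(\nu_h)\to m(\beta^*)>0$; and $\varepsilon_n>0$ is a fixed constant.

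For step (i) I would use the Sinkhorn fixed-point characterization of the optimizers of $\mathsf{D}_{\varepsilon_n}^{c_{\mathrm{ip}}}(\mu_h,\nu_h)$, namely $f_{j_h,n}(y')=-\varepsilon_n\log\int_\Y\exp((g_{j_h,n}(y)-c_{\mathrm{ip}}(y',y))/\varepsilon_n)\,d\nu_h(y)$ and the symmetric identity for $g_{j_h,n}$. Since $c_{\mathrm{ip}}(y',y)=-\langle y',y\rangle$ is bounded and Lipschitz on the fixed compact $B_r\times\Y$ and the masses $m(\mu_h),m(\nu_h)$ are bounded above and below, a standard log-sum-exp computation yields, after a normalization such as $\min_\Y g_{j_h,n}=0$, uniform $L^\infty$ bounds and a uniform Lipschitz bound on the pair $(f_{j_h,n},g_{j_h,n})$. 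By Arzelà–Ascoli, every subsequence admits a further subsequence along which $f_{j_h,n}\to\bar f$ and $g_{j_h,n}\to\bar g$ uniformly. Passing to the limit in the inequality $F_h(f_{j_h,n},g_{j_h,n})\ge F_h(f,g)$ that expresses optimality of $(f_{j_h,n},g_{j_h,n})$ for $\mathsf{D}_{\varepsilon_n}^{c_{\mathrm{ip}}}(\mu_h,\nu_h)$ — the linear terms converging because $\mu_h\wto\mu$, $\nu_h\wto\nu$, and the exponential term because $\mu_h\otimes\nu_h\wto\mu\otimes\nu$ together with the uniform convergence of the integrand $\exp((f_{j_h,n}\oplus g_{j_h,n}-c_{\mathrm{ip}})/\varepsilon_n)$ — shows $(\bar f,\bar g)$ is optimal for $\mathsf{D}_{\varepsilon_n}^{c_{\mathrm{ip}}}(\mu,\nu)$. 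By uniqueness of optimal entropic potentials up to an additive constant (uniqueness of the Sinkhorn solution, cf.\ \citep{nutz2021introduction}), and since the normalization pins the constant, $\bar g=g_n$. As every subsequence has a sub-subsequence with this same limit, the whole sequence satisfies $g_{j_h,n}\to g_n$ uniformly on $\Y$.

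For step (ii), fix $x\in\X$. Since $g_{j_h,n}\to g_n$ uniformly on $\Y$ and $\langle M^{\varepsilon_{j_h}'}x,y\rangle\to\langle M^*x,y\rangle$ uniformly for $y\in\Y$ (because $\|M^{\varepsilon_{j_h}'}-M^*\|_F\to0$ and $\Y$ is bounded), and $\exp$ is uniformly continuous on the relevant bounded range, the integrand $y\mapsto\exp((g_{j_h,n}(y)+\langle M^{\varepsilon_{j_h}'}x,y\rangle)/\varepsilon_n)$ converges uniformly on $\Y$ to $y\mapsto\exp((g_n(y)+\langle M^*x,y\rangle)/\varepsilon_n)$; multiplying by the bounded map $y\mapsto y$ gives the same for the numerator integrand. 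Combining uniform convergence of the integrands with $\nu_h\wto\nu$ and $\sup_h m(\nu_h)<\infty$, both the numerator and the denominator defining $T_{j_h,n}(x)$ converge to those of $T_n(x)$; moreover the denominators are bounded below by $e^{-C/\varepsilon_n}\,m(\nu_h)\ge e^{-C/\varepsilon_n}\,m(\beta^*)/2>0$ for $h$ large, with $C$ a uniform bound on $|g_{j_h,n}(y)+\langle M^{\varepsilon_{j_h}'}x,y\rangle|$. Hence $T_{j_h,n}(x)\to T_n(x)$. Finally $T_{j_h,n}(x)$ and $T_n(x)$ are barycenters of points of the compact set $\Y$, so $\|T_{j_h,n}(x)\|,\|T_n(x)\|\le D:=\max_{y\in\Y}\|y\|$; thus $\|T_{j_h,n}-T_n\|^2\le(2D)^2$ $\alpha^*$-a.e., and since $\alpha^*$ is a finite measure, dominated convergence gives $\|T_{j_h,n}-T_n\|_{L^2(\alpha^*)}\to0$.

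The main obstacle is step (i): obtaining uniform equicontinuity and boundedness of the Sinkhorn potentials when the two marginals merely converge weakly rather than being fixed, keeping the denominators and total masses non-degenerate in the limit (this is where $m(\nu_h)\to m(\beta^*)>0$ is used), and invoking uniqueness of the optimal entropic potentials to identify the subsequential limit with $g_n$ and thereby promote subsequential convergence to convergence of the full sequence. Given step (i), the pointwise convergence of the barycentric maps and the dominated-convergence upgrade in step (ii) are routine.
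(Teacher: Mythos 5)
Your proposal is correct and follows essentially the same route as the paper's proof: uniform convergence of the entropic dual potentials (the paper outsources your Arzelà–Ascoli/uniqueness step to the compactness argument of Genevay et al., after the same additive-constant normalization), then pointwise convergence of the barycentric maps via uniform convergence of the exponential integrands combined with weak convergence of $\beta^{\varepsilon_{j_h}'}$, and finally the upgrade to $L^2(\alpha^*)$ by the uniform bound $\Vert T\Vert\le\max_{y\in\Y}\Vert y\Vert$ and dominated convergence. The paper handles the subsequence issue by noting the argument applies to every subsequence and that $T_n$ is independent of the choice of optimal potential, which is the same device you use to promote subsequential to full convergence.
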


\begin{proof}
    
From the previous discussion we know that 
    \(M^{\varepsilon_{j_h}'}_{\#}\alpha^{\varepsilon_{j_h}'} \rightharpoonup M^*_{\#}\alpha^*\) 
    and \(\beta^{\varepsilon_{j_h}'} \rightharpoonup \beta^*\) as \(h \to \infty\).
    Fix \(x_0 \in \X\) and note that, up to replacing \((f_{j,n},g_{j,n})\) by 
    \((f_{j,n} - f_{j,n}(x_0), \, g_{j,n} + f_{j,n}(x_0))\), we may assume 
    \(f_{j,n}(x_0) = 0\) for every \(j,n \in \N\).

    In particular, by the compactness argument in \citep{genevay2018learning}, 
    we can find \((f_n,g_n) \in \Ci(B_r) \times \Ci(\Y)\) optimal for 
    \(\mathsf{D}_{\varepsilon_n}^{c_{\mathrm{ip}}}(M^*_{\#}\alpha^*,\beta^*)\) and a subsequence
    \((\tilde{j}_h)_{h \in \N}\) of \((j_h)_{h \in \N}\) such that
    \(f_{\tilde{j}_h,n} \to f_n\) and \(g_{\tilde{j}_h,n} \to g_n\) uniformly on their domains
    as \(h \to \infty\), for every fixed \(n \in \N\).

    To ease notation, for every \(x \in \X\), \(n \in \N\), \(g \in \Ci(\Y)\) 
    and \(A \in \R^{q \times p}\) define
    \[
        F_n^x(g,A)(y) 
        = \exp\left( \frac{1}{\varepsilon_n} \bigl(g(y) - c_A(x,y)\bigr) \right),
        \qquad y \in \Y,
    \]
    where \(c_A(x,y) = -\langle Ax, y \rangle\) is the inner-product cost associated to \(A\).
    Observe that
    \[
    \begin{split}
        \norm{ F_n^x(g_{\tilde{j}_h,n}, M^{\varepsilon_{\tilde{j}_h}'}) 
            - F_n^x(g_n, M^*)}_{\infty}
        &\leq 
        \omega_n \left( \norm{g_{\tilde{j}_h,n} - g_n}_{\infty} 
            + \norm{ c_{M^{\varepsilon_{\tilde{j}_h}'}} - c_{M^*}}_{\infty} \right)
        \\
        &\longrightarrow 0 \quad \text{as } h \to \infty,
    \end{split}
    \]
    where \(\omega_n\) is the modulus of continuity of 
    \(t \mapsto \exp\bigl( t / \varepsilon_n \bigr)\) on the compact interval
    where the uniformly bounded functions 
    \(g_{\tilde{j}_h,n} - c_{M^{\varepsilon_{\tilde{j}_h}'}}\) and 
    \(g_n - c_{M^*}\) take their values.

    Hence, for every \(\phi \in \Ci(\Y)\) and \(n \in \N\),
    \[
    \begin{split}
        &\left\|
            \int_{\Y} \phi \, F_n^x(g_{\tilde{j}_h,n}, M^{\varepsilon_{\tilde{j}_h}'})
                \,\mathrm{d}\beta^{\varepsilon_{\tilde{j}_h}'} 
            - \int_{\Y} \phi \, F_n^x(g_n, M^*) \,\mathrm{d}\beta^*
        \right\|
        \\
        &\quad \leq 
        \norm{\phi}_{\infty} 
        \norm{F_n^x(g_{\tilde{j}_h,n}, M^{\varepsilon_{\tilde{j}_h}'}) 
            - F_n^x(g_n, M^*)}_{\infty} 
        \, \beta^{\varepsilon_{\tilde{j}_h}'}(\Y)
        \\
        &\qquad 
        + \left\| 
            \int_{\Y} \phi \, F_n^x(g_n, M^*) 
            \,\mathrm{d}(\beta^{\varepsilon_{\tilde{j}_h}'} - \beta^*)
        \right\|
        \longrightarrow 0
    \end{split}
    \]
    as \(h \to \infty\), since \(\phi F_n^x(g_n,M^*) \in \Ci(\Y)\) 
    and \(\beta^{\varepsilon_{\tilde{j}_h}'} \rightharpoonup \beta^*\).
    
    Consequently, we deduce that \(T_{\tilde{j}_h,n}(x) \to T_n(x)\) pointwise for all \(x \in \X\).
    Moreover, by Jensen's inequality,
    \[
        \norm{T_{\tilde{j}_h,n}(x)}^2, \ \norm{T_n(x)}^2 
        \leq \max_{y \in \Y} \norm{y}^2 < +\infty
        \quad \text{for all } x \in \X,
    \]
    so by the dominated convergence theorem we obtain
    \(T_{\tilde{j}_h,n} \to T_n\) in \(L^2(\alpha^*)\).

    Finally, note that the above argument can be applied to every subsequence
    of \((T_{j_h,n})_{h \in \N}\), and that \(T_n\) is independent of the particular choice
    of optimal potential \(g_n\) in its expression (all such \(g_n\) differ only by an 
    additive constant, see e.g. \cite{genevay2018learning}). 
    Therefore the whole sequence satisfies
    \[
        T_{j_h,n} \to T_n \quad \text{in } L^2(\alpha^*)
    \]
    for every fixed \(n \in \N\).
    \end{proof}
We restate here Theorem \ref{thm_entropic_map_conv}.
\begin{theorem}
        Assume $\X\subset \R^p$, $\Y\subset \R^q$ compact domains, \(p\geq q\), \(\varphi_1,\varphi_2\) superlinear strictly convex satisfying \eqref{comp_cond_strong} or $\varphi_1 = \varphi_2 = \iota_{\{1\}}$ satisfying \eqref{comp_cond} and \(\alpha\) absolutely continuous w.r.t. the Lebesgue measure on \(\X\). Assume also $B_r\subset \Y$. Then there exists a subsequence $(\varepsilon_{j_h}')_{h\in\N}$ s.t. $M^{\varepsilon_{j_h}'}\to M^*$ optimal for \(\mathsf{C\Ri_{\mathit{r}}UOT}_{\bphi}(\alpha,\beta)\). 
        Moreover, suppose \(M^*\) surjective and that \( M^*_{\#}\alpha^*\) and \(\beta^*\) satisfy the Assumptions A1-A3 in \citep{pooladian2021entropic} for $\alpha\geq 2$. Then
        \[\lim_{n\to +\infty} \lim_{h\to +\infty} T_{j_h,n} = T_* \quad \text{in \(L^2(\alpha^*)\),}\]
        where \(T_*\) is a Monge map for \(\mathsf{C\Ri_{\mathit{r}}UOT}(\alpha,\beta)\) which pushes \(\alpha^*\) to \(\beta^*\).
\end{theorem}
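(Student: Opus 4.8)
The plan is to combine three facts already available: Theorem~\ref{thm:conv-entr-minimizers} produces the subsequence along which $M^{\varepsilon_{j_h}'}\to M^*$; Theorem~\ref{thm_monge_map_RipOT} identifies the limiting map; and Proposition~\ref{prop_entr_map_reg_ip}, together with the entropic-map convergence of \citep{pooladian2021entropic}, handles the iterated limit. For the first assertion, the present hypotheses are precisely those of Theorem~\ref{thm:conv-entr-minimizers} for the cost-parametrized regularizer $\Ri_r$, so (as already observed in the paragraph preceding Proposition~\ref{prop_entr_map_reg_ip}) there is a subsequence $(\varepsilon_{j_h}')_h$, not depending on $n$, with $M^{\varepsilon_{j_h}'}\to M^*$ and $\pi^{\varepsilon_{j_h}'}\wto\pi^*$, where $(M^*,\pi^*)$ is optimal for $\mathsf{C\Ri_{\mathit{r}}UOT}_{\bphi}(\alpha,\beta)$; write $\alpha^*=\pi^*_1$, $\beta^*=\pi^*_2$, $\mu=M^*_{\#}\alpha^*$, $\nu=\beta^*$. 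Since $\alpha$ is absolutely continuous with respect to Lebesgue, Theorem~\ref{thm_monge_map_RipOT} applies to $(M^*,\pi^*)$ and furnishes a Monge map $T_*$ for $\mathsf{C\Ri_{\mathit{r}}UOT}(\alpha,\beta)$ with $\pi^*=(\mathrm{id},T_*)_{\#}\alpha^*$ (hence $(T_*)_{\#}\alpha^*=\beta^*$), and, $M^*$ being surjective, $T_*=-\nabla f_*\circ M^*$ for an optimal Kantorovich potential $f_*$ of $\mathsf{OT}^{c_{\mathrm{ip}}}(\mu,\nu)$, differentiable $\mu$-a.e.

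For the double limit I would treat the two passages to the limit in turn. For fixed $n$, Proposition~\ref{prop_entr_map_reg_ip} gives $T_{j_h,n}\to T_n$ in $L^2(\alpha^*)$ as $h\to\infty$, where $T_n=\widetilde T_n\circ M^*$ and $\widetilde T_n$ is the barycentric projection of the entropic plan for $\mathsf{D}^{c_{\mathrm{ip}}}_{\varepsilon_n}(\mu,\nu)$; this balanced entropic OT problem is well posed because $m(\mu)=m(\alpha^*)=m(\pi^*)=m(\beta^*)=m(\nu)$. Now the inner-product cost $c_{\mathrm{ip}}(y',y)=-\langle y',y\rangle$ differs from $\tfrac12\norm{y'-y}^2$ only by the separable terms $\tfrac12\norm{y'}^2$ and $\tfrac12\norm{y}^2$, so the Gibbs kernels $\exp(-c/\varepsilon_n)$ for the two costs agree up to a product of a factor of $y'$ and a factor of $y$; hence the entropic plans, and their barycentric projections $\widetilde T_n$, coincide with those of quadratic-cost entropic OT between $\mu$ and $\nu$. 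Invoking the convergence theorem of \citep{pooladian2021entropic} under the assumed Assumptions A1--A3 (smoothness exponent at least $2$), $\widetilde T_n\to\widetilde T$ in $L^2(\mu)$, where $\widetilde T$ is the Brenier map from $\mu$ to $\nu$; as $\mu$ is absolutely continuous this optimal map is unique, so $\widetilde T=-\nabla f_*$ $\mu$-a.e. The change of variables $y'=M^*x$ then gives $\norm{T_n-T_*}_{L^2(\alpha^*)}=\norm{\widetilde T_n-\widetilde T}_{L^2(\mu)}\to0$, and therefore $\lim_{n\to\infty}\lim_{h\to\infty}T_{j_h,n}=\lim_{n\to\infty}T_n=T_*$ in $L^2(\alpha^*)$, with $T_*$ a Monge map for $\mathsf{C\Ri_{\mathit{r}}UOT}(\alpha,\beta)$ pushing $\alpha^*$ to $\beta^*$.

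I expect the main obstacle to be conceptual rather than computational: one has to recognise that the correct ``outer-limit'' maps $T_n$ must be built from the \emph{limit} marginals $\mu=M^*_{\#}\alpha^*$ and $\nu=\beta^*$ (not the $\varepsilon_{j_h}'$-dependent ones), so that the fixed-measures, $\varepsilon\to0$ result of \citep{pooladian2021entropic} applies. Proposition~\ref{prop_entr_map_reg_ip} is precisely what bridges $T_{j_h,n}$ with $T_n$, and its proof already absorbs the delicate estimate --- simultaneous uniform convergence of the dual potentials, of the costs $c_{M^{\varepsilon_{j_h}'}}$, and of the reference measures inside the Gibbs kernels, together with the independence of $T_n$ from the chosen optimal potential. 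The remaining checks --- equality of masses $m(\mu)=m(\nu)$, absolute continuity of $\alpha^*$ and of $\mu=M^*_{\#}\alpha^*$ (needed both for A1--A3 and for uniqueness of the Brenier map), and the cost-equivalence identification $\widetilde T=-\nabla f_*$ --- are routine once Theorems~\ref{thm:conv-entr-minimizers} and \ref{thm_monge_map_RipOT} and Proposition~\ref{prop_entr_map_reg_ip} are in hand.
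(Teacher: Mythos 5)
Your proposal is correct and follows essentially the same route as the paper: extract the subsequence via Theorem~\ref{thm:conv-entr-minimizers}, identify $T_*=-\nabla f_*\circ M^*$ via Theorem~\ref{thm_monge_map_RipOT}, handle the inner limit with Proposition~\ref{prop_entr_map_reg_ip}, and conclude the outer limit by the entropic-map convergence of \citep{pooladian2021entropic} after the change of variables $y'=M^*x$. Your explicit reduction of the inner-product cost to the quadratic cost (separable terms absorbed into the potentials, so the entropic plans and barycentric projections coincide) is a useful justification that the paper leaves implicit when it writes $T_n=-\nabla f_n\circ M^*$ and cites Corollary~1 of \citep{pooladian2021entropic} directly.
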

\begin{proof}
        From Proposition \ref{prop_entr_map_reg_ip} we have \(T_{j_h,n}\to T_n\) in \(L^2(\alpha^*)\) for every \(n\in\N\). From Theorem \ref{thm:monge-CR} we know that the Monge map for 
        \(\mathsf{\Ri_{\mathit{r}}UOT}_{\bphi}(\alpha,\beta)\) associated to the minimisers \((M^*,\pi^*)\) is \(T_* = -\nabla f_* \circ M^*\) where \(f_*\in \Ci(B_r)\) is an optimal Kantorovich potential for \(\mathsf{OT}^{c_{\mathrm{ip}}}(M^*_{\#}\alpha^*,\beta^*)\). 
        In particular, \(-\nabla f_*\) is the Monge map for the problem \(\mathsf{OT}^{c_{\mathrm{ip}}}(M^*_{\#}\alpha^*,\beta^*)\), therefore \(-\nabla f_* = \nabla \phi\) \(M^*_{\#}\alpha^*\)-a.e. by the hypothesis.
        Moreover, it is easy to see that \(T_n = - \nabla f_n \circ M^*\) with \((f_n,g_n)\) optimal for \(\mathsf{D}_{\varepsilon_n}^{c_{\mathrm{ip}}}(M^*_{\#}\alpha^*,\beta^*)\) (see Theorem 2.7 and 3.16 in \cite{carlier2017convergence} for the convergence of
entropic OT potentials to Kantorovich potentials in the balanced case with cost \(c_{\mathrm{ip}}\)). 
        The claim follows by applying Corollary 1 in \citep{pooladian2021entropic}, indeed 
        \[\begin{split}
                \int_{\X} \Vert T_n - T_*\Vert^2 \d \alpha^* &= \int_{\X} \Vert - \nabla f_n \circ M^* + \nabla f_* \circ M^*\Vert^2 \d \alpha^*\\
                &= \int_{\X} \Vert - \nabla f_n - \nabla \phi  \Vert^2 \d M^*_{\#}\alpha^* \to 0 \quad \text{as \(n\to +\infty\).}
        \end{split}\]
\end{proof}

\section{Appendix for Section 5}

To approximate a solution for Problem $\mathsf{C\Ri_r UOT}_{\varepsilon}$ we use the following block coordinate descent algorithm \citep{bertsekas1997nonlinear}:
\begin{equation*}
        \begin{split}
                \pi^{k+1} &= \argmin_{\pi\in \M^+(\X\times \Y)} -\int_{\X\times \Y} \< M_{k} x, y\> \d\pi(x,y) + \D{\varphi_1}{\pi_1}{\alpha} + \D{\varphi_2}{\pi_2}{\beta} + \varepsilon \D{\mathrm{KL}}{\pi}{\alpha\otimes \beta}\\
                M_{k+1} &= \argmin_{M\in \F_r} -\int_{\X\times \Y} \< M x, y\> \d\pi^{k+1}(x,y),
        \end{split}
\end{equation*}
that adapting Lemma 4.2.2 in \citep{vayer2020contribution} the algorithm becomes:
\begin{equation}\label{eq_BCD}
        \begin{split}
                \pi^{k+1} &= \argmin_{\pi\in \M^+(\X\times \Y)} -\int_{\X\times \Y} \< M_{k} x, y\> \d\pi(x,y) + \D{\varphi_1}{\pi_1}{\alpha} + \D{\varphi_2}{\pi_2}{\beta} + \varepsilon \D{\mathrm{KL}}{\pi}{\alpha\otimes \beta}\\
                M_{k+1} &=  \frac{r}{\left\Vert \int_{\X\times \Y} y x^T \d \pi^{k+1}(x,y)\right\Vert_{F}} \int_{\X\times \Y} y x^T \d \pi^{k+1}(x,y).
        \end{split}
\end{equation}

\subsection{Proof of Theorem \ref{thm:conv-algo}}
\begin{theorem}[Theorem \ref{thm:conv-algo}]
        Let \(\X = \{x_i\}_{i=1}^n\subset \R^p\), \(\Y = \{y_j\}_{j=1}^m\subset \R^q\), \(\alpha=\sum_{i=1}^n a_i \delta_{x_i}\), \(\beta=\sum_{j=1}^m b_j \delta_{y_j}\) and \(\varepsilon,r>0\). Suppose \(\{a_i\}_{i=1}^n, \{b_j\}_{j=1}^m\subset (0,+\infty)\) and the entropy functions \(\varphi_1\) and \(\varphi_2\) to be superlinear.
        Then, any limit point of the sequence \(((M_k,\pi^k))_{k\in\N}\) defined by the block coordinate descent scheme \eqref{eq_BCD} is a stationary point of the objective function of \(\mathsf{C\mathcal{R}_r UOT}_{\varepsilon}(\alpha,\beta)\).
\end{theorem}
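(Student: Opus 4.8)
The plan is to verify the standard hypotheses under which a two-block coordinate descent (a.k.a.\ block Gauss--Seidel) scheme has all its limit points stationary, following the classical analysis of \citep{bertsekas1997nonlinear, tseng2001convergence} adapted as in \citep{vayer2020contribution}. Write the objective of $\mathsf{C\mathcal{R}_r UOT}_{\varepsilon}(\alpha,\beta)$ as $\Phi(\pi,M)$ on the finite-dimensional feasible set $\M^+(\X\times\Y)\cong\R_{\ge0}^{n\times m}$ (since $\alpha,\beta$ are finitely supported, any competitor plan may be taken supported on $\{x_i\}\times\{y_j\}$) together with $\F_r\subset\R^{q\times p}$ a compact convex set. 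First I would record the structural facts: the coupling term $-\int\langle Mx,y\rangle\,d\pi$ is bilinear (hence smooth and, separately in each block, linear), $\D{\mathrm{KL}}{\pi}{\alpha\otimes\beta}$ with $\varepsilon>0$ and all $a_i,b_j>0$ is a finite, strictly convex, smooth-on-its-domain function of $\pi$ whose gradient blows up at the boundary of the positive orthant, and $\pi\mapsto\D{\varphi_i}{\pi_i}{\cdot}$ is convex and lower semicontinuous with $\varphi_i$ superlinear. Consequently $\Phi(\cdot,M)$ is strictly convex in $\pi$ for each fixed $M$, so the $\pi$-update in \eqref{eq_BCD} has a \emph{unique} minimizer; the $M$-update is a minimization of a continuous function over the compact set $\F_r$, which always has a minimizer, and by Lemma~\ref{lemma_gwip} the closed-form expression $M_{k+1}=A_r(\pi^{k+1})\int yx^T\,d\pi^{k+1}$ is one such minimizer (the only source of non-uniqueness being the degenerate case $\int yx^T d\pi^{k+1}=0$, handled by the convention $M=0$).

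Next I would establish compactness of the iterates and monotonicity of the objective. The sequence $(\Phi(\pi^k,M_k))_k$ is nonincreasing by construction, hence bounded above by $\Phi(\pi^0,M_0)<+\infty$; combined with the superlinear coercivity estimate used in the proof of Theorem~\ref{app:existence-RUOT} (the mass–coercivity bound, using $c_{M_k}$ uniformly bounded below since $\F_r$ is bounded and $\X$ compact), this forces $m(\pi^k)\le R$ for some $R$, so $(\pi^k)_k$ lives in the weakly compact set $\B_R^+$ and $(M_k)_k\subset\F_r$ is already bounded. Therefore limit points $(\pi^*,M^*)$ exist. The key lemma to invoke is the two-block convergence theorem (e.g.\ Prop.~2.7.1 in \citep{bertsekas1997nonlinear}, or \citep{tseng2001convergence}): if the objective is continuous on the feasible set, each block-subproblem attains its minimum, and \emph{at least one} of the two per-block minimizers is unique, then every limit point of the cyclic scheme is a coordinatewise (block) minimum, i.e.\ $\pi^*\in\argmin_\pi\Phi(\pi,M^*)$ and $M^*\in\argmin_{M\in\F_r}\Phi(\pi^*,M)$. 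I would check its hypotheses one by one: continuity of $\Phi$ on $\B_R^+\times\F_r$ (here the entropy terms are finite and continuous in the relevant region, and the only subtlety is the boundary of the orthant, which is excluded because the unique $\pi$-minimizer has strictly positive entries by the barrier behaviour of $\varepsilon\D{\mathrm{KL}}{\cdot}{\alpha\otimes\beta}$), and the uniqueness of the $\pi$-block minimizer noted above.

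Finally, I would translate "coordinatewise minimum" into "stationary point" of $\Phi$ over the product feasible set. Because $\Phi(\cdot,M^*)$ is convex and the orthant constraint is inactive at the (interior, strictly positive) minimizer $\pi^*$, first-order optimality reads $\nabla_\pi\Phi(\pi^*,M^*)=0$ in the sense that $0$ lies in the subdifferential coming from the $\varphi_i$-divergence terms plus the gradient of the smooth part; for the $M$-block, optimality over the convex set $\F_r$ gives the variational inequality $\langle\nabla_M\Phi(\pi^*,M^*),\,M-M^*\rangle\ge0$ for all $M\in\F_r$. Since $\Phi$ is jointly continuous and is smooth in $M$ and "subdifferentiable" in $\pi$ (the nonsmoothness being only through the convex $\varphi_i$-terms), these two conditions together are exactly the definition of a stationary point of $\mathsf{C\mathcal{R}_r UOT}_{\varepsilon}(\alpha,\beta)$ in the block-separable sense used in the statement. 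The main obstacle I anticipate is not the convergence machinery itself but the careful bookkeeping at the boundary of $\M^+$: one must argue that the barrier term $\varepsilon\D{\mathrm{KL}}{\pi}{\alpha\otimes\beta}$ together with $a_i,b_j>0$ keeps every $\pi^k$ (and hence every limit point $\pi^*$) with strictly positive entries, so that $\Phi$ is continuous there and the "block minimum $\Rightarrow$ stationary" passage is legitimate; without this the KL term is only lower semicontinuous and the abstract block-descent theorem does not directly apply. A secondary point to dispatch is the degenerate $M$-update case $\int yx^T d\pi=0$, where one checks the convention $M=0$ is still a genuine minimizer of the ($M$-)subproblem so that the descent property and the theorem's hypotheses are unaffected.
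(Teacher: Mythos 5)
Your proposal is correct and follows essentially the same route as the paper's proof: identify the objective on the finite-dimensional space as a smooth term plus the indicators of $[0,+\infty)^{n\times m}$ and $\F_r$, use strict convexity from the entropic term (and superlinearity of $\varphi_1,\varphi_2$) to get a unique, strictly positive $\pi$-block minimizer, use compactness of sublevel sets for existence of limit points, and invoke the classical Bertsekas/Tseng block-coordinate-descent theory to conclude stationarity of limit points. The only cosmetic difference is that you appeal to the two-block version requiring uniqueness in one block, whereas the paper checks Tseng's hypotheses for $J=f+g+h$ directly; both your boundary-positivity argument and your treatment of the degenerate case $\int yx^{T}\,d\pi=0$ match the paper's.
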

\begin{proof}
    
The objective function of \(\mathsf{C\mathcal{R}_r UOT}_{\varepsilon}(\alpha,\beta)\) can seen as the function \(J : \R^{n\times m} \times \R^{q\times p} \to \R\) s.t.
        \[\begin{split}
                J (P,M) = - \sum_{i=1}^n\sum_{j=1}^m \< M x_i, y_j\> P_{i,j} + &\sum_{i=1}^n \varphi_1 \left( \frac{P_i}{a_i} \right) a_i + \sum_{j=1}^m \varphi_2\left( \frac{P^j}{b_j} \right) b_j\\
                & + \varepsilon \sum_{i=1}^n\sum_{j=1}^m \left[ \frac{P_{i,j}}{a_i b_j} \log \frac{P_{i,j}}{a_i b_j} - \frac{P_{i,j}}{a_i b_j} + 1 \right] a_i b_j\\
                & + \delta_{[0,+\infty)^{n\times m}}(P) + \delta_{\F_r}(M)
        \end{split}\]
        where we identify any plan \(\pi\in \M^+(\X\times \Y)\) with the matrix \(P = (\pi(\{(x_i,y_j)\}))_{i,j}\in [0,+\infty)^{n\times m}\) and we denote \(P_i = \sum_{j=1}^m P_{i,j}\), \(P^j = \sum_{i=1}^n P_{i,j}\) for every \(i\) and \(j\). 
       
Let us now rewrite the the function $J$ as
\[
J(P,M)= \langle C(M), P\rangle
+ \sum_i a_i \phi_1(P_i/a_i)
+ \sum_j b_j \phi_2(P^j/b_j)
+ \varepsilon\,\mathrm{KL}(P,ab^\top)
+ \delta_{[0,+\infty)^{n\times m}}(P)
+ \delta_{F_r}(M),
\]
where $C(M)=\bigl(y_j x_i^\top M\bigr)_{i,j}$. So, we are able to decompose $J$ as $
J(P,M)= J_0(P,M) + g(P) + h(M)$,
where
\[
J_0(P,M)= \langle C(M),P\rangle
+ \sum_i a_i\phi_1(P_i/a_i)
+ \sum_j b_j\phi_2(P^j/b_j)
+ \varepsilon\,\mathrm{KL}(P,ab^\top),
\]
\[
g(P)=\delta_{[0,+\infty)^{n\times m}}(P),
\qquad
h(M)=\delta_{F_r}(M).
\]
The smooth part $J_0$ is continuously differentiable, while $g$ and $h$ are proper, convex and lower semicontinuous. 

Because $\phi_1,\phi_2$ are superlinear and the KL term controls the total mass of $P$, the quantity $\sum_{i,j} P_{ij}$ is uniformly bounded on every sublevel set of $J$. Since $F_r$ is compact and $[0,+\infty)^{n\times m}$ is closed, all sublevel sets of $J$ are compact. Now for fixed $M$, the subproblem
\[
\min_{P\ge 0}\ J_0(P,M)
\]
is strictly convex because of the KL regularization and the superlinear functions
$\phi_1,\phi_2$. As a result, it has a unique minimizer $P^\star(M)$.
Moreover, for $\varepsilon>0$ the minimizer satisfies $P_{ij}^\star(M)>0$ for all $i,j$, so $g(P)$ does not play a role in the opimization and $J_0$ is differentiable at $P^\star(M)$.

Now we need to consider the exact minimization in the $M$--block. For fixed $P$, the subproblem
\[
\min_{M\in F_r} \ \langle C(M),P\rangle
\]
is linear over the Frobenius ball $F_r=\{M:\norm{M}_F\le r\}$. Let
$C(P)=\sum_{i,j} P_{ij}\, x_i y_j^\top$. If $C(P)\neq 0$, the unique minimizer is 
$M(P)= r \,\frac{C(P)}{\norm{C(P)}_F}.$
If $C(P)=0$, every element of $F_r$ is optimal, in order to keep the block-coordinate map single-valued we set $M(P)=0$ in this case. Thus the $M$--update is always uniquely defined. Now, we get the convergence by applying Lemma 3.1 and Theorem 4.1 in \cite{tseng2001convergence}.
The function $J$ has the form $J=f+g+h$ with
\[
f=J_0 \ \text{(smooth)},\qquad
g(P)=\delta_{[0,+\infty)^{n\times m}}(P),\qquad
h(M)=\delta_{F_r}(M).
\]
Each block subproblem is solved exactly, and all sublevel sets of $J$ are compact. Therefore, every limit point of the alternating minimization sequence $(P_k,M_k)$ is a stationary point of $J$. This concludes the proof.

\end{proof}

The practical pseudocode implementation of the alternate minimization scheme \eqref{eq_BCD} in the discrete case is the following:

\   

\begin{algorithm}[H]\label{bcd_alg}
\caption{BCD for \(\mathsf{\mathcal{R}_r UOT}_{\bphi,\varepsilon}(\alpha,\beta)\)}
\KwIn{Entropy functions \(\varphi_1, \varphi_2\), numbers \(\varepsilon,r>0\), source \(\alpha=\sum_{i=1}^n a_i \delta_{x_i}\) and target \(\beta=\sum_{j=1}^m b_j \delta_{y_j}\) with \(\X = \{x_i\}_{i=1}^n\subset \R^p\), \(\Y = \{y_j\}_{j=1}^m\subset \R^q\), \(a = (a_i)_{i=1}^n\in (0,+\infty)^n\), \(b=(b_j)_{j=1}^m \in (0,+\infty)^m\)}
\KwOut{\(M^{\varepsilon}\), \(P^{\varepsilon}\) optimal for \(\mathsf{\mathcal{R}_r UOT}_{\bphi,\varepsilon}(\alpha,\beta)\)}

$M \gets M_0$\;
\(k\gets 1\)\;
\While{\(k\leq N\) \textrm{and} \(\mathrm{err} \leq \mathrm{tol}\)}{
  \(C \gets (-\< M x_i, y_j\>)_{i,j}\)\;
  \(P \gets \mathrm{Sinkhorn}(a, b, C, \varphi_1, \varphi_2, \varepsilon)\)\;
  \(M \gets \frac{k}{\Vert \sum_{i,j} y_jx_i^T P_{i,j}\Vert_F}\sum_{i,j} y_jx_i^T P_{i,j}\)\;
}
\Return{\(M, P\)}

\end{algorithm}

\section{ADDITIONAL EXPERIMENTS}
In this section, we provide further experiments to evaluate the effectiveness of our Algorithm \ref{bcd_alg}.
First, Figure \ref{fig:toy-ex} provides further insights to better grasp the effect of unbalancedness on the entropic map. We gradually increase the admitted unbalancedness by decreasing the parameter $\lambda$. 

\begin{figure}[h]
\centering

\begin{subfigure}{0.4\textwidth}
    \includegraphics[width=\linewidth, trim=0 100 0 150, clip]{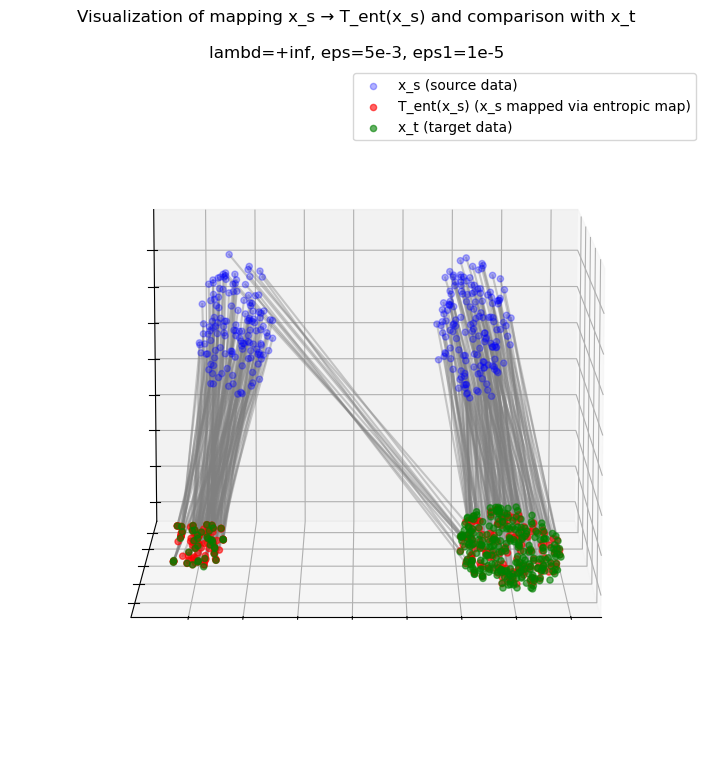}
    \caption{Balanced, $\lambda = +\infty$}
\end{subfigure}
\begin{subfigure}{0.4\textwidth}
    \includegraphics[width=\linewidth, trim=0 100 0 150, clip]{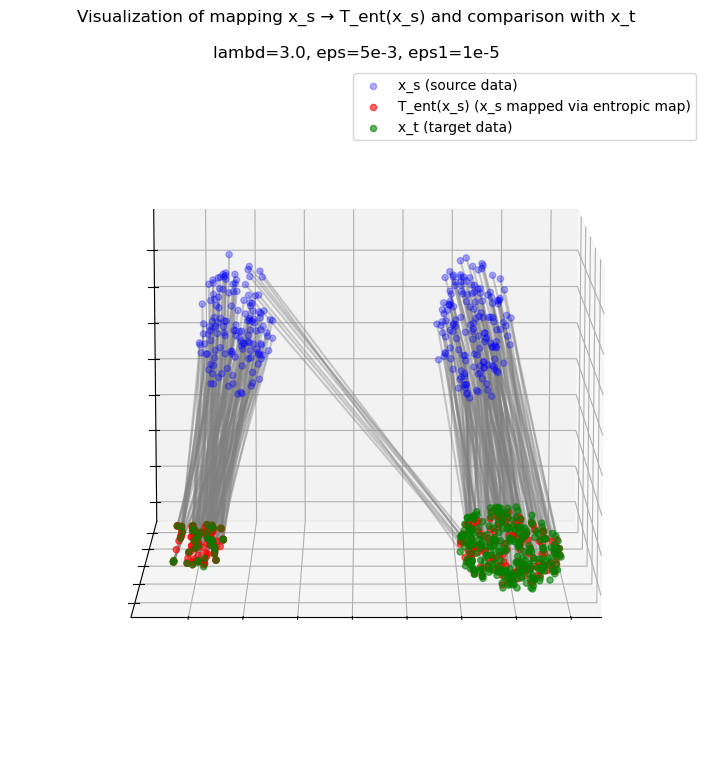}
    \caption{$\lambda = 3.0$}
\end{subfigure}

\begin{subfigure}{0.4\textwidth}
    \includegraphics[width=\linewidth, trim=0 100 0 150, clip]{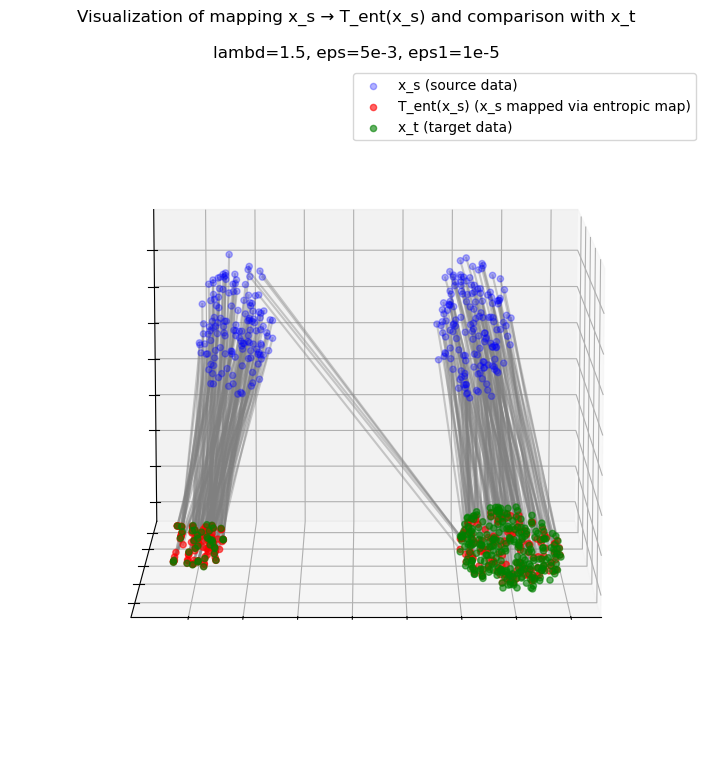}
    \caption{$\lambda = 1.5$}
\end{subfigure}
\begin{subfigure}{0.4\textwidth}
    \includegraphics[width=\linewidth, trim=0 100 0 150, clip]{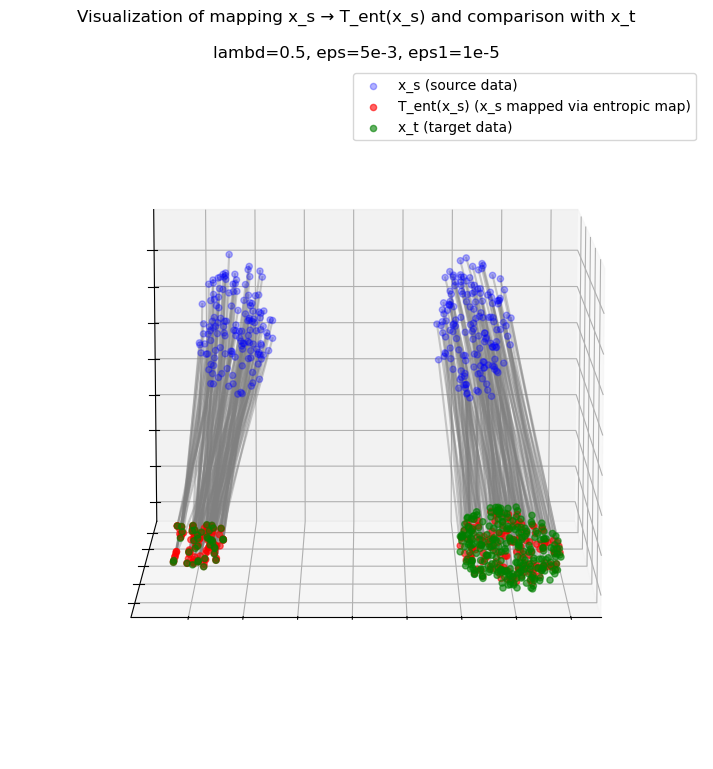}
    \caption{$\lambda = 0.5$}
\end{subfigure}

\caption{The source data (blue) is generated sampling from a balanced mixture of uniform distributions on two ellipsoids in 3D, while the target data (green) is obtained by sampling from an unbalanced mixture of the uniform distribution on a square $\mathcal{S}$ and the uniform distribution on an ellipse $\mathcal{E}$ in 2D, precisely the latter mixture is $\beta = 0.85 \mathcal{E} + 0.15\mathcal{S}$. For visualization purposes we lift $\R^2$ into $\R^3$ by padding the third coordinate to zero. We visualize the aligned source point using red dots.}
\label{fig:toy-ex}
\end{figure}

\subsection{SNAREseq dataset}

The second dataset we use is the SNAREseq dataset, containing the chromatine accessibility (ATAC-seq) and gene expression (RNA-seq) of 1047 single cells of 4 different types. The source ATAC-seq modality has dimension \(p=19\), while the target RNA-seq modality has dimension \(q=10\).
Again, the task is to match source and target modality datasets using an entropic map from the source to the target. 
In Table \ref{tab:full_SNAREseq} we report the results of \(\mathsf{C}\Ri_r\mathsf{UOT}\) on the full SNAREseq dataset when varying the parameter \(\lambda\) and the same type of results are reported in Table \ref{tab:sub_SNAREseq} for the randomly subsampled SNAREseq dataset. 
For the experiment with the subsampled SNAREseq dataset, we randomly pick two cell types: in the source dataset we subsample them at \(50 \%\) and the other two types at \(75 \%\); in the target dataset we subsample them at \(75 \%\) and the other two types at \(50 \%\).

\begin{table}
    \centering
    \begin{minipage}{0.3\textwidth}
        \centering
        \begin{tabular}{l c}
            \toprule
            $\lambda$ & LTA \\
            \midrule
            $+\infty$ & \textbf{0.944}  \\
            5.0       & \textbf{0.944} \\
            2.5       & 0.941 \\
            1.0       & 0.941  \\
            0.5       & 0.938 \\
            \bottomrule
        \end{tabular}
        \caption{Full SNAREseq dataset results.}
        \label{tab:full_SNAREseq}
    \end{minipage}
    \hfill
    \begin{minipage}{0.6\textwidth}
        \centering
        \includegraphics[width=\linewidth]{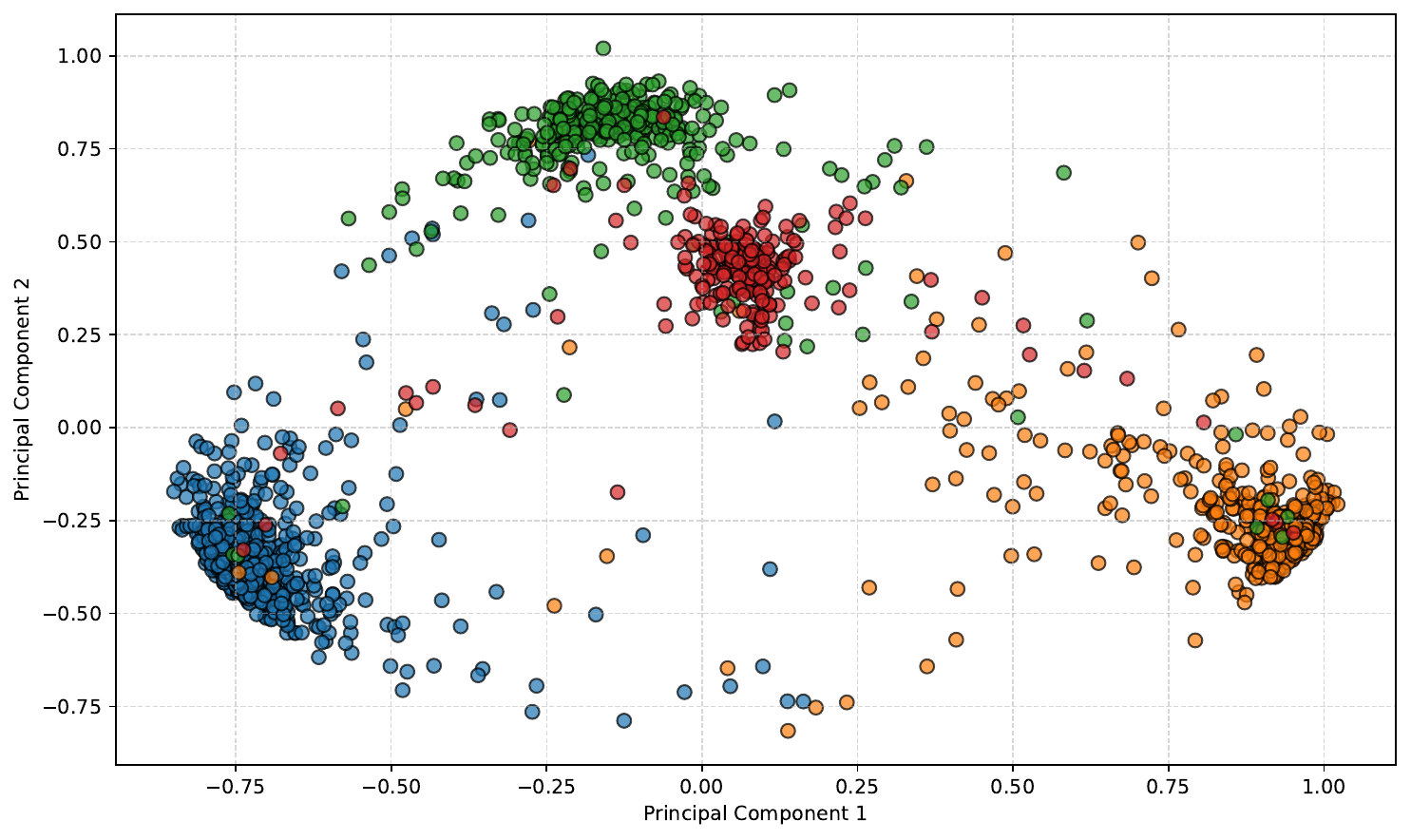}
        \captionof{figure}{Visualization of the entropic map alignment of the full SNAREseq dataset with $\lambda=5.0$ using two-dimensional PCA. Different colours refer to different cell types.}
        \label{fig:alignment_full_SNAREseq}
    \end{minipage}
\end{table}

\begin{table}
    \centering
    \begin{minipage}{0.3\textwidth}
        \centering
        \begin{tabular}{l c}
            \toprule
            $\lambda$ & LTA \\
            \midrule
            $+\infty$ & 0.582  \\
            1.0       & 0.656  \\
            0.5       & 0.695  \\
            0.1       & 0.752  \\
            0.07      & \textbf{0.761} \\
            \bottomrule
        \end{tabular}
        \caption{Subsampled SNAREseq dataset results.}
        \label{tab:sub_SNAREseq}
    \end{minipage}
    \hfill
    \begin{minipage}{0.6\textwidth}
        \centering
        \includegraphics[width=\linewidth]{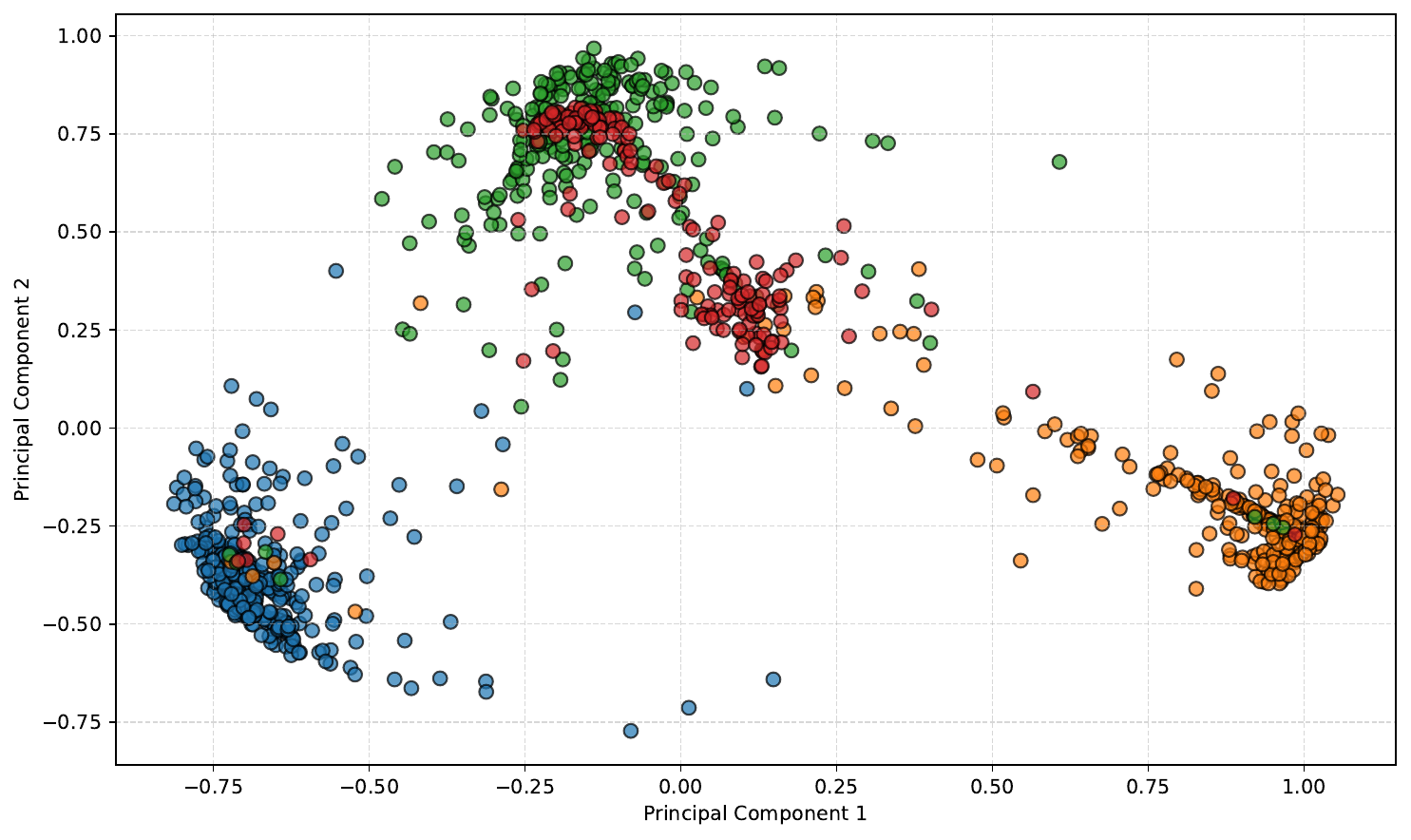}
        \captionof{figure}{Visualization of the entropic map alignment of the subsampled SNAREseq dataset with $\lambda=0.07$ using two-dimensional PCA. Different colours refer to different cell types.}
        \label{fig:alignment_sub_SNAREseq}
    \end{minipage}
\end{table}

\subsection{Details on the entropic map in the case where $M^{*}$ is not surjective}

The low-rank (or sparse) regularizations extend naturally to the cost-regularized
unbalanced optimal transport problem with inner-product cost.
Specifically, for costs of the form
\[
c_{M}(x,y)=-\langle Mx,\,y\rangle
\qquad\text{and}\qquad
\mathcal{R}(M)=\tfrac{1}{2}\norm{M}_F^2+\lambda\,g(M),
\]
where \(g\) encodes a structure constraint (e.g., nuclear norm, 
\(\ell_1\), or \(\ell_{1,2}\)-penalty, or an explicit factorization 
\(M=M_2^\top M_1\)),
the corresponding \(\mathsf{\mathcal{R}UOT}\) problem
\[
\inf_{M,\pi\ge 0}
\int_{\X\times\Y} -\langle Mx, y\rangle\,\mathrm{d}\pi
+\mathcal{R}(M)
+\D{\varphi_1}{\pi_1}{\alpha}
+\D{\varphi_2}{\pi_2}{\beta}
+\varepsilon\D{\mathrm{KL}}{\pi}{\rho}
\]
admits minimizers under the same hypotheses as in
Theorem~\ref{app:existence-RUOT}.
For fixed \(M\), the minimization over \(\pi\) is precisely the
entropic \(\mathsf{UOT}\) with cost \(c_M\),
solvable via the unbalanced Sinkhorn algorithm.
For fixed \(\pi\), the update in \(M\) is a proximal step on
\(\int yx^\top\,\mathrm{d}\pi\) and takes the same closed form
as in the balanced case.

\vspace{2mm}\noindent
\textbf{On Monge maps.}
When \(\alpha\) is absolute constinuous with respect to the  and the learned linear operator
\(M^*\) is \emph{surjective} (i.e. of full column rank \(q\)),
the assumptions of Theorem~\ref{thm:monge-CR} apply and
the optimal coupling \(\pi^*\) is induced by a Monge map
\[
\pi^* = (\mathrm{id},T_*)_\#\pi_1^*, 
\qquad T_* = -\nabla f_*\circ M^*,
\]
where \(f_*\) is the Kantorovich potential associated with the
inner-product cost between \(M^*_\#\pi_1^*\) and \(\pi_2^*\).
If the regularizer \(g(M)\) enforces a low-rank structure
(\(\mathrm{rank}(M^*)=r<q\)),
then \(M^*\) is not surjective and Monge maps are no longer
guaranteed to exist globally.
In this case, one may still interpret the optimal plan as acting on
the lower-dimensional image measure
\(\mu^* = M^*_\#\pi_1^*\), through a map
\(\tilde T: \operatorname{Im}(M^*) \to \Y\)
optimal for the cost \(c_{\mathrm{ip}}(y',y)=-\langle y',y\rangle\),
and write formally
\[
\pi^* = (\mathrm{id},\,\tilde T\circ M^*)_\#\pi_1^*.
\]
The theoretical guarantees of Monge map require the full-rank
assumption on \(M^*\), while the low-rank and sparse parametrizations
remain fully valid from the optimization and numerical perspectives. In Figure \ref{fig:output} we observe how low-rank affects the optimal transport plan across different levels of unbalancedness. Each subplot shows the learned transport map $M_\varepsilon \alpha$ (green),
    the ground-truth map $M_\ast\alpha$ (teal),
    and the target samples $\beta$ (red). Orange lines represent barycentric displacements induced by the optimal plan.When $\lambda \to \infty$ (upper-left plot), the problem reduces to the balanced and the model transports the entire source mass.
    As $\lambda$ decreases ($3.0 \rightarrow 1.5 \rightarrow 0.5$),
    the marginal penalty weakens, allowing partial mass creation or removal:
    the transport plan concentrates on geometrically consistent regions
    while ignoring unmatched components.
    The rightmost plot reports the total transported mass, which decreases monotonically with $\lambda$,
    confirming the progressive relaxation of the mass constraint. Although from Figure \ref{fig:output}, we can see that the learned map aligns closely with the ground-truth low-rank map we need to investigate further statistical guarantees of the learned transport map in case $M^{*}$ is not full rank.

\begin{figure}[ht!]
    \centering
    \includegraphics[width=\linewidth]{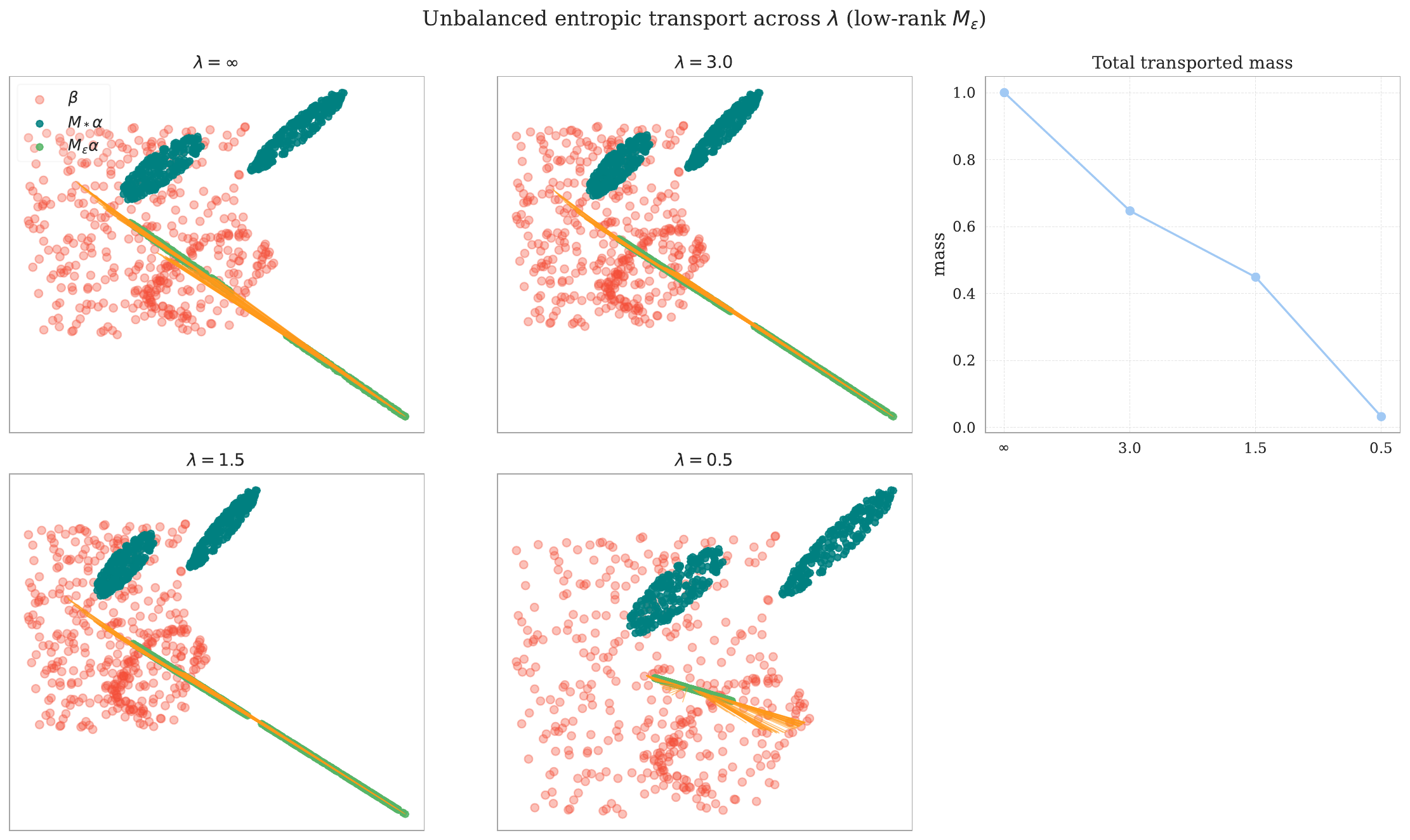}
    \caption{
    \textbf{Low-rank unbalanced optimal transport across unbalancedness levels $\lambda$.}
    The learned map $M_\varepsilon\alpha$ (green) approaches
    the ground-truth $M_{\star}\alpha$ (teal) while ignoring unmatched
    mass in the target $\beta$ (red)as $\lambda$ decreases.
    The total transported mass (right figure) decreases monotonically,
    reflecting the transition from balanced to strongly unbalanced transport.
    }
    \label{fig:output}
\end{figure}

\vfill

\bibliographystyle{unsrt}
\bibliography{bibliography}

\end{document}